\documentclass{article}

\usepackage{mystyle}

\usepackage[final,nonatbib]{neurips_2022}

\usepackage[utf8]{inputenc} %
\usepackage[T1]{fontenc}    %
\usepackage{hyperref}       %
\usepackage{url}            %
\usepackage{booktabs}       %
\usepackage{amsfonts}       %
\usepackage{microtype}      %
\usepackage{xcolor}         %

\title{Whitening Convergence Rate of\\Coupling-based Normalizing Flows}

\author{%
	Felix Draxler \\
	Heidelberg University \\
	\texttt{felix.draxler@iwr.uni-heidelberg.de} \\
	\And
	Christoph Schnörr \\
	Heidelberg University \\
	\texttt{schnoerr@math.uni-heidelberg.de} \\
	\And
	Ullrich Köthe \\
	Heidelberg University \\
	\texttt{ullrich.koethe@iwr.uni-heidelberg.de} \\
}

\begin{document}

	\maketitle

	\begin{abstract}
		Coupling-based normalizing flows (e.g.~RealNVP) are a popular family of normalizing flow architectures that work surprisingly well in practice.
		This calls for theoretical understanding. Existing work shows that such flows \textit{weakly} converge to arbitrary data distributions \cite{teshima_coupling-based_2020}.
		However, they make no statement about the stricter convergence criterion used in practice, the maximum likelihood loss.
		For the first time, we make a quantitative statement about this kind of convergence:
		We prove that all coupling-based normalizing flows perform whitening of the data distribution (i.e.~diagonalize the covariance matrix) and derive corresponding convergence bounds that show a linear convergence rate in the depth of the flow.
		Numerical experiments demonstrate the implications of our theory and point at open questions.
	\end{abstract}

	\section{Introduction}
	
	Normalizing flows \cite{kobyzev_normalizing_2021,papamakarios_normalizing_2021} are among the most promising approaches to {\em generative} machine learning and have already demonstrated convincing performance in a wide variety of practical applications, ranging from image analysis \cite{dinh_nice_2015,dinh_density_2017,kingma_glow_2018,mackowiak_generative_2021,ardizzone_training_2020} to astrophysics \cite{ardizzone_analyzing_2018}, mechanical engineering \cite{noevercastelos_model_2022}, causality \cite{muller_learning_2021}, computational biology \cite{noe_boltzmann_2019} and medicine \cite{adler_uncertainty-aware_2019}.
	As the name suggests, normalizing flows represent complex data distributions as %
	bijective transformations (also known as flows or {\em push-forwards}) of standard normal or other well-understood distributions.
	
	In this paper, we focus on a theoretical underpinning of coupling-based normalizing flows, a particularly effective class of normalizing flows in terms of invertible neural networks. %
	All of the above applications are actually implemented using coupling-based normalizing flows.
	Their central building blocks are {\em coupling layers}, which decompose the space into two subspaces called \textit{active} and \textit{passive} subspace (see \cref{sec:fundamentals}). Only the active dimensions are transformed conditioned on the passive dimensions,
	which makes the mapping computationally easy to invert.
	In order to vary the assignment of dimensions to the active and passive subspaces, coupling layers are combined with preceding orthonormal transformation layers into {\em coupling blocks}.
	These blocks are arranged into deep networks such that the orthonormal transformations are sampled uniformly at random from the orthogonal matrices and the coupling layers are trained with the maximum likelihood objective, see \cref{eq:loss}.
	Upon convergence of the training, the sequence of coupling blocks gradually transforms the probability density that generated the given training data, into a standard normal distribution and vice versa.
	
	Since the resulting normalizing flows deviate significantly from \textit{optimal} transport flows \cite{draxler_characterizing_2020} and the bulk of the mathematical literature is focusing on optimal transport, an analysis tailored to coupling architectures is lacking.
	In a landmark paper, \cite{teshima_coupling-based_2020} proved that sufficiently large affine coupling flows %
	weakly converge to arbitrary data densities.
	The notion of weak convergence is critical here, as \textit{it does not imply convergence in maximum likelihood} \cite[Remark 3]{koehler_representational_2021}.
	Maximum likelihood (or, equivalently, the Kullback-Leibler (KL) divergence) is the loss that is actually used in practice. It can be used for gradient descent and it guarantees not only convergence in samples (``$x \sim q(x) \to x \sim p(x)$'') but also in density estimates (``$q(x) \to p(x)$''). It is strong in the sense that the square root of the KL divergence upper bounds (up to a factor 2) the total variation metric, and hence also the Wasserstein metric if the underlying space is bounded \cite{gibbs_choosing_2002}. Moreover, convergence under the KL divergence implies weak convergence which is fundamental for robust statistics \cite{huber_robust_2009}.
	
	We take a first step towards showing that coupling blocks also converge in terms of maximum likelihood.
	To the best of our knowledge, our paper presents for the first time a quantitative convergence analysis of coupling-based normalizing flows based on this strong notion of convergence.

	Specifically, we make the following contributions towards this goal:
	\begin{itemize}
		\item We utilize that the loss of a normalizing flow can be decomposed into two parts (\cref{fig:split-loss-convergence}): The divergence to the nearest Gaussian (\textit{non-Gaussianity}) plus the divergence of that Gaussian to the standard normal (\textit{non-Standardness}).
		\item The contribution of a single coupling layer on the non-Standardness is analyzed in terms of matrix operations (Schur complement and scaling).
		\item Explicit bounds for the non-Standardness after a single coupling block in expectation over all orthonormal transformations are derived.
		\item We use these results to prove that a sequence of coupling blocks whitens the data covariance and to derive linear convergence rates for this process.
	\end{itemize}
	Our results hold for all coupling architectures we are aware of (\cref{app:architectures}), including: NICE \cite{dinh_nice_2015}, RealNVP \cite{dinh_density_2017}, and GLOW \cite{kingma_glow_2018}; Flow++ \cite{ho_flow_2019}; nonlinear-squared flow \cite{ziegler_latent_2019}; linear, quadratic \cite{muller_neural_2019}, cubic \cite{durkan_cubic-spline_2019}, and rational quadratic splines \cite{durkan_neural_2019}; neural autoregressive flows \cite{huang_augmented_2020}, and unconstrained monotonic neural networks \cite{wehenkel_unconstrained_2019}.
	We confirm our theoretical findings experimentally and identify directions for further improvement.
	
	\begin{figure}
		\centering
		\includegraphics[width=\linewidth]{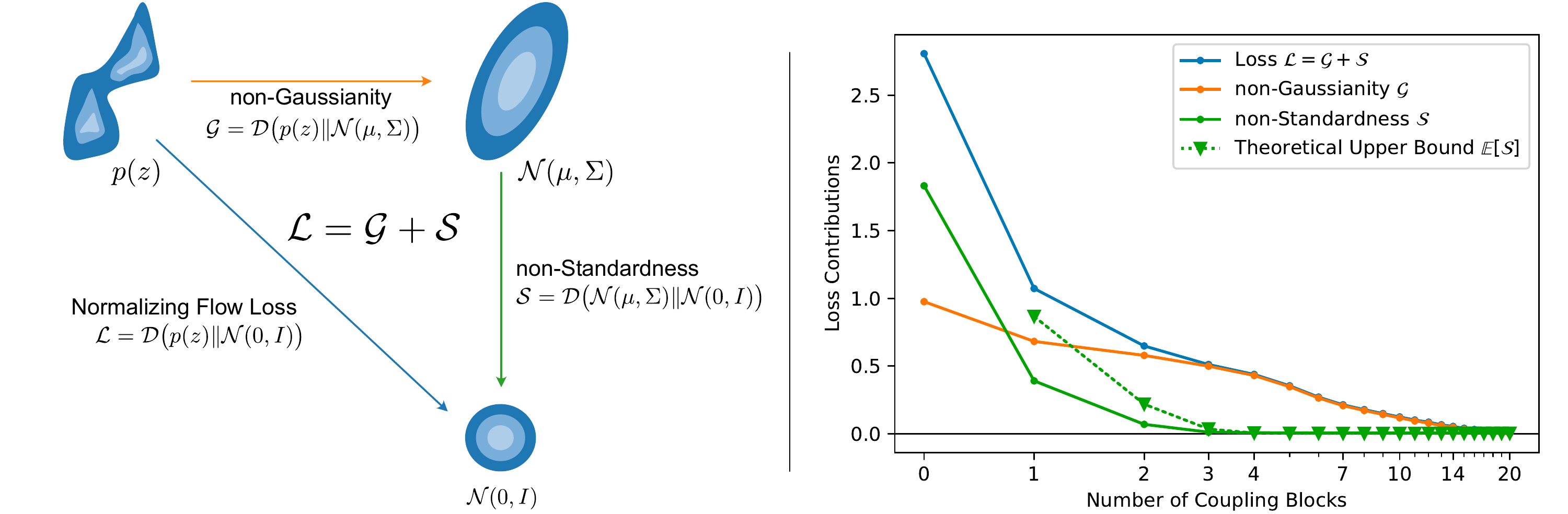}
		\caption{\textit{(Left)} The Maximum Likelihood Loss $\Ll$ {\color{C0}(blue)} can be split into the \textit{non-Gaussianity} $\Gg$ {\color{C1}(orange)} \cite{cardoso_dependence_2003} and the \textit{non-Standardness $\Ss$} {\color{C2}(green)} of the latent code $z = f_\theta(x)$: $\Ll = \Gg + \Ss$ (\cref{prop:pythagorean-identity}). For the latter, we give explicit guarantees as one more coupling block is added in \cref{thm:single-layer-precise,thm:single-layer-guarantee} and show a global convergence rate in \cref{thm:deep-network-guarantee}. \textit{(Right)} Typical fit of EMNIST digits by a standard affine coupling flow for various depths. Our theory (\cref{thm:single-layer-precise}) upper bounds the average $\Ss$ for $L+1$ coupling blocks given a trained model with $L$ coupling blocks {\color{C2}(dotted green)}. We observe that our bound is predictive for how much end-to-end training reduces $\Ss$.}
		\label{fig:split-loss-convergence}
	\end{figure}

	\section{Related work}
	
	Analyzing which distributions coupling-based normalizing flows can approximate is an active area of research. 
	A general statement shows that a coupling-based normalizing flow which can approximate an arbitrary invertible function can learn any probability density \textit{weakly} \cite{teshima_coupling-based_2020}. This applies to affine coupling flows \cite{dinh_nice_2015,dinh_density_2017,kingma_glow_2018}, Flow++ \cite{ho_flow_2019}, neural autoregressive flows \cite{huang_neural_2018}, and SOS polynomial flows \cite{jaini_sum--squares_2019}.
	Affine coupling flows converge to arbitrary densities in Wasserstein distance \cite{koehler_representational_2021}. %
	Both universality results, however, require that the couplings become ill-conditioned (i.e.~the learnt functions become increasingly discontinuous as the error decreases, whereas in practice one observes that functions remain smooth). Also, they consider only a finite subspace of the data space.
	Even more importantly, the convergence criterion employed in their proofs (weak convergence resp.~convergence under Wasserstein metric) is critical: Those criteria do not imply convergence in the loss that is employed in practice \cite[Remark 3]{koehler_representational_2021}, the Kullback-Leibler divergence (equivalent to maximum likelihood). An arbitrarily small distance in any of the above metrics can even result in an infinite KL divergence.
	In contrast to previous work on affine coupling flows, we work directly on the KL divergence. We decompose it in two contributions and show the flow's convergence for one of the parts.
	
	Regarding when ill-conditioned flows need to arise to fit a distribution, \cite{lee_universal_2021} showed that well-conditioned affine couplings can approximate log-concave padded distributions, again in terms of Wasserstein distance.
	Lipschitz flows on the other hand cannot model arbitrary tail behavior, but this can be fixed by adapting the latent distribution \cite{jaini_tails_2020}.
	
	SOS polynomial flows converge in \textit{total variation} to arbitrary probability densities \cite{ishikawa_universal_2022}, which also does not imply convergence in KL divergence; zero-padded affine coupling flows converge weakly \cite{huang_augmented_2020}, and so do Neural ODEs \cite{zhang_approximation_2020,teshima_universal_2020}.
	
	Closely related to our work, 48 linear affine coupling blocks can represent any invertible linear function $Ax + b$ with $\det(A) > 0$ \cite[Theorem 2]{koehler_representational_2021}. This also allows mapping any Gaussian distribution $\Nn(\mean, \Sigma)$ to the standard normal $\Nn(0, I)$. We put this statement into context in terms of the KL divergence: The loss is exactly composed of the divergence to the nearest Gaussian and of that Gaussian to the standard normal. We then make strong statements about the convergence of the latter, concluding that for typical flows a smaller number of layers is required for accurate approximation than predicted by \cite{koehler_representational_2021}.

	\section{Coupling-based normalizing flows}
	\label{sec:fundamentals}
	
	Normalizing flows learn an invertible function $f_\theta(x)$ that maps samples $x$ from some unknown distribution $p(x)$ given by samples to \textit{latent variables} $z = f_\theta(x)$ so that $z$ follow a simple distribution, typically the standard normal. The function $f_\theta$ then yields an estimate $q(x)$ for the true data distribution $p(x)$ via the change of variables formula (e.g.~\cite{dinh_density_2017}):
	\eql{
		q(x)
		= \Nn(f_\theta(x); 0, I) |\det J|,
	}
	where $J = \nabla f_\theta(x)$ is the Jacobian of $f_\theta(x)$. We can train a normalizing flow via the maximum likelihood loss, which is equivalent to minimizing the Kullback-Leibler divergence between the distribution of the latent code $q(z)$, as given by $z = f_\theta(x)$ when $x \sim p(x)$, and the standard normal:
	\eql{
		\label{eq:loss}
		L = \KL{q(z)}{\Nn(0, I)} = \EE_{x \sim p(x)}\left[\tfrac12 \norm{f_\theta(x)}^2 - \log |\det J|\right] + \const.
	}
	The invertible architecture that makes up $f_\theta$ has to (i)~be computationally easy to invert, (ii)~be able to represent complex transformations, and (iii)~have a tractable Jacobian determinant $|\det J|$ \cite{ardizzone_analyzing_2018}.
	Building such an architecture is an active area of research, see e.g.~\cite{kobyzev_normalizing_2021} for a review.
	In this work, we focus on the family of coupling-based normalizing flows, first presented in the form of the NICE architecture \cite{dinh_nice_2015}. It is a deep architecture that consists of several blocks, each containing a rotation, a coupling and an ActNorm layer \cite{kingma_glow_2018}:
	\eql{
		f_\text{block}(x) = (f_\text{act} \circ f_\text{cpl} \circ f_\text{rot})(x).
	}
	The coupling $f_\text{cpl}$ splits an incoming vector $x_0$ in two parts along the coordinate axis: The first part $p_0$, which we call \textit{passive}, is left unchanged. The second part $a_0$, which we call \textit{active}, is modified as a function of the passive dimensions:
	\eql{
		\label{eq:realnvp-layer}
		f_\text{cpl}(x_0) = f_\text{cpl}\begin{pmatrix}
			p_0 \\ a_0
		\end{pmatrix} = \begin{pmatrix}
			p_0 \\ c(a_0; p_0)
		\end{pmatrix} =: \begin{pmatrix}
			p_1 \\ a_1
		\end{pmatrix}.
	}
	Here, the coupling function $c: \RR^{D/2} \times \RR^{D/2} \to \RR^{D/2}$ has to be a function that is easy to invert when $p_0$ is given, i.e.~it is easy to compute $a_0 = c^{-1}(a_1; p_0)$ given $p_0$.
	This makes the coupling easy to invert: Call $x_1 = (p_1; a_1)$ the output of the layer, then $p_0 = p_1$. Use this to invert $a_1 = c(a_0; p_0)$.
	For example, RealNVP \cite{dinh_density_2017} proposes a simple affine transformation for $c$:
	$a_1 = c(a_0; p_0) = a_0 \odot s(p_0) + t(p_0)$ ($\odot$ means element-wise multiplication). $s(p_0) \in \RR^{D/2}_+$ and $ t(p_0) \in \RR^{D/2}$ are computed by a feed-forward neural network.
	The coupling functions $c$ of other architectures our theory applies to are listed in \cref{app:architectures}.
	
	An Activation Normalization (ActNorm) layer \cite{kingma_glow_2018} helps stabilize training and is implemented in practice like in the popular INN framework \texttt{FrEIA} \cite{ardizzone_framework_2018}. It rescales and shifts each dimension:
	\eql{
		f_\text{act}(x) = r \odot x + u,
		\label{eq:act-norm}
	}
	given parameters $r \in \RR^D_+$ and $u \in \RR^D$. We include it as it simplifies our mathematical arguments.
	
	If we were to concatenate several coupling layers, the entire network would never change the passive dimensions apart from the element-wise affine transformation in the ActNorm layer. Here, the rotation layers $f_\text{rot}(x) = Qx$ come into play \cite{kingma_glow_2018}. They multiply an orthogonal matrix $Q$ to the data, changing which subspaces are passive respectively active. This matrix is typically fixed at random at initialization and then left unchanged during training.

	\section{Coupling layers as whitening transformation}
	\label{sec:whitening}
	
	The central mathematical question we answer in this work is the following: How can a deep coupling-based normalizing flow \textit{whiten} the data? As the latent distribution is a standard normal, whitening is a necessary condition for the flow to converge. This is a direct property of the loss:
	\begin{proposition}[Pythagorean Identity, Proof in \cref{app:pythagorean-identity-proof}]
		\label{prop:pythagorean-identity}
		Given data with distribution $p(x)$ with mean $\mean$ and covariance $\Sigma$. Then, the Kullback-Leibler divergence to a standard normal distribution decomposes as follows:
		\eql{
			\KL{p(x)}{\Nn(0, I)}
			= \underbrace{\KL{p(x)}{\Nn(\mean, \Sigma)}}_{\text{non-Gaussianity } \Gg(p)}
			+ \underbrace{\KL{\Nn(\mean, \Sigma)}{\Nn(0, I)}}_{\text{non-Standardness } \Ss(p)}
			\label{eq:gauss-standard-split}
		}
		and the non-Standardness again decomposes:
		\eql{
			\Ss(p)
			= \underbrace{\KL{\Nn(\mean, \Sigma)}{\Nn(\mean, \Diag(\Sigma))}}_{\text{Correlation } \Cc(p)}
			+ \underbrace{\KL{\Nn(\mean, \Diag(\Sigma))}{\Nn(0, I)}}_{\text{Diagonal non-Standardness}}.
			\label{eq:diagonal-standard-split}
		}
	\end{proposition}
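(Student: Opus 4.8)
The plan is to derive both identities from one elementary fact: the logarithm of a Gaussian density is a polynomial of degree at most two in $x$, so the expectation of a difference of two Gaussian log-densities is completely determined by the mean and covariance of whatever distribution we integrate against. For \cref{eq:gauss-standard-split} I would first split the target divergence by inserting $\log\Nn(x;\mean,\Sigma)$,
\[
\KL{p(x)}{\Nn(0,I)} = \EE_{p}\left[\log\frac{p(x)}{\Nn(x;\mean,\Sigma)}\right] + \EE_{p}\left[\log\frac{\Nn(x;\mean,\Sigma)}{\Nn(x;0,I)}\right],
\]
where the first term is by definition $\Gg(p)=\KL{p(x)}{\Nn(\mean,\Sigma)}$. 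For the second term I would use that $\log\Nn(x;\mean,\Sigma)-\log\Nn(x;0,I) = -\tfrac12(x-\mean)^\top\Sigma^{-1}(x-\mean)+\tfrac12\norm{x}^2+\const$ is quadratic in $x$, so its $p$-expectation is a fixed affine function of $\EE_p[x]$ and $\EE_p[xx^\top]$ (finite, since $p$ has the assumed mean and covariance and $\Sigma$ is nonsingular). Since $\Nn(\mean,\Sigma)$ has the same first two moments as $p$ by construction, the expectation is unchanged when we integrate against $\Nn(\mean,\Sigma)$ instead, and it then equals $\KL{\Nn(\mean,\Sigma)}{\Nn(0,I)}=\Ss(p)$ by definition. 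This gives $\KL{p(x)}{\Nn(0,I)}=\Gg(p)+\Ss(p)$, valid in $[0,\infty]$ even when $\Gg(p)=\infty$ because $\Ss(p)$ is a finite real number.

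The identity \cref{eq:diagonal-standard-split} follows by repeating this argument one level down, with $p$ replaced by $\Nn(\mean,\Sigma)$, the ``nearest Gaussian'' replaced by $\Nn(\mean,\Diag(\Sigma))$, and $\Nn(0,I)$ kept as the target: the first resulting term is the correlation $\Cc(p)$, and the only thing to check is that the cross term $\EE_{\Nn(\mean,\Sigma)}[\log\Nn(x;\mean,\Diag(\Sigma))-\log\Nn(x;0,I)]$ equals $\KL{\Nn(\mean,\Diag(\Sigma))}{\Nn(0,I)}$. This holds because $\log\Nn(x;\mean,\Diag(\Sigma))-\log\Nn(x;0,I)=\sum_i\bigl(-\tfrac12(x_i-\mean_i)^2/\Sigma_{ii}+\tfrac12 x_i^2\bigr)+\const$ contains no cross terms $x_ix_j$, so its expectation depends only on the marginal means $\mean_i$ and marginal second moments $\Sigma_{ii}+\mean_i^2$, which coincide for $\Nn(\mean,\Sigma)$ and $\Nn(\mean,\Diag(\Sigma))$.

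I do not expect a genuine obstacle: this is the generalized Pythagorean theorem of information geometry specialized to the Gaussian exponential family and its diagonal subfamily. The only steps requiring mild care are (i) well-definedness and finiteness of the quadratic expectations, which is exactly why $p$ is assumed to have a mean $\mean$ and a covariance $\Sigma$ and why $\Sigma$ must be nonsingular, and (ii) bookkeeping the normalization constants $\tfrac D2\log(2\pi)$, $\tfrac12\log\det(2\pi\Sigma)$ and $\tfrac12\sum_i\log(2\pi\Sigma_{ii})$, which cancel correctly because every KL term above is an expectation of a log-ratio of densities taken with respect to the same reference measure.
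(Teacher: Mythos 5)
Your proof is correct, and it takes a slightly different route from the paper's. The paper computes the difference $\KL{p}{\Nn(0,I)} - \KL{p}{\Nn(\mean,\Sigma)}$ by evaluating the expectations $\EE_p[\|x\|^2]=\|\mean\|^2+\tr\Sigma$ and $\EE_p[(x-\mean)^\transy\Sigma^{-1}(x-\mean)]=D$ explicitly, then recognizes the resulting closed form as the Gaussian-to-Gaussian KL in \cref{eq:kl-gaussians}; for \cref{eq:diagonal-standard-split} it likewise writes out both KL terms in closed form and adds them. You instead observe that $\log\Nn(x;\mean,\Sigma)-\log\Nn(x;0,I)$ is a quadratic polynomial whose $p$-expectation is determined by the first two moments of $p$, which $\Nn(\mean,\Sigma)$ matches by construction, so the integrating measure can be swapped without changing the value — and similarly that the log-ratio in the second split has no cross-terms $x_ix_j$, so only the marginal variances $\Sigma_{ii}$ matter and $\Nn(\mean,\Sigma)$ can be replaced by $\Nn(\mean,\Diag(\Sigma))$. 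This measure-swap argument is more abstract but also more illuminating: it makes the exponential-family/moment-matching mechanism behind the Pythagorean identity visible (indeed it is exactly the structure of \cite[Theorem~3.8]{amari_methods_2007}, which the paper cites but does not invoke in its own proof), and it generalizes with no extra work to any exponential subfamily, whereas the paper's computation is specific to the Gaussian case. Your remark that the identity survives $\Gg(p)=\infty$ because $\Ss(p)$ is finite whenever $\Sigma\succ 0$ is a correct and worthwhile detail that the paper leaves implicit.
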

	This splits the transport from the data distribution to the latent standard normal into three parts: (i) From the data to the nearest Gaussian distribution $\Nn(\mean, \Sigma)$, measured by $\Gg$. (ii) From that nearest Gaussian to the corresponding uncorrelated Gaussian $\Nn(\mean, \Diag(\Sigma))$, measured by $\Cc$. (iii) From the uncorrelated Gaussian to standard normal.
	
	We do not make explicit use of the fact that the \textit{non-Standardness} can again be decomposed, but we show it nevertheless to relate our result to the literature:
	The Pythagorean identity $\KL{p(x)}{\Nn(\mean, \Diag(\Sigma))} = \Gg(p) + \Cc(p)$ has been shown before by \cite[Section 2.3]{cardoso_dependence_2003}. Both their and our result are specific applications of the general \cite[Theorem 3.8]{amari_methods_2007} from information geometry. Our proof is given in \cref{app:pythagorean-identity-proof}.
	
	\cref{prop:pythagorean-identity} is visualized in \cref{fig:split-loss-convergence}. In an experiment, we fit a set of Glow \cite{kingma_glow_2018} coupling flows of increasing depths to the EMNIST digit dataset \cite{cohen_emnist_2017} using maximum likelihood loss and measure the capability of each flow in decreasing $\Gg$ and $\Ss$ (Details in \cref{app:exp-split-loss-convergence}). The form of the non-Standardness $\Ss$ is given by the well-known KL divergence between the involved normal distributions, see \cref{eq:kl-gaussians} in \cref{app:pythagorean-identity-proof}. It is invariant under rotations $Q$ and only depends on the first two moments $\mean, \Sigma$:
	\eql{
		\label{eq:non-standardness}
		\Ss(\mean, \Sigma) := \Ss(p) = \frac12 (\norm{\mean}^2 + \tr \Sigma - D - \log \det \Sigma) ) = \Ss(Q \mean, Q \Sigma Q^\transy).
	}
	The non-Standardness $\Ss$ will be our measure on how far the covariance and mean have approached the standard normal in the latent space. We give explicit loss guarantees for $\Ss$ for a single coupling block in \cref{thm:single-layer-precise,thm:single-layer-guarantee} and imply a linear convergence rate for a deep network in \cref{thm:deep-network-guarantee}.
	
	Deep Normalizing Flows are typically trained end-to-end, i.e.~the entire stack of blocks is trained jointly. In this work, our ansatz is to consider the effect of a single coupling block on the non-Standardness $\Ss$. Then, we combine the effect of many isolated blocks, disregarding potential further improvements to $\Ss$ due to joint, cooperative learning of all blocks.
	This simplifies the theoretical analysis of the network, but it is not a restriction on the model: Any function that is achieved in block-wise training could also be the solution of end-to-end training.
	
	We aim to strongly reduce $\Ss$ while leaving room for a complementary theory explaining how non-Gaussianity $\Gg$ is reduced in practice. %
	Note that affine-linear functions $Ax + b$ can never change $\Gg$, because they jointly transform the distribution $p(x)$ at hand and correspondingly the closest Gaussian to it (see \cref{lem:linear-constant-non-gaussianity} in \cref{app:single-layer-whitening-proof}).
	Thus, if we restrict our coupling layers to be affine-linear functions, we are able to reduce $\Ss$ without increasing $\Gg$ in turn.
	This motivates considering affine-linear couplings of the following form, spelled out together with ActNorm as given by \cref{eq:act-norm}. \textbf{The results in this work apply to all coupling architectures}, as they all can represent this coupling, see \cref{app:architectures}.
	\eql{
		\begin{pmatrix}
			p_1 \\
			a_1
		\end{pmatrix} = (f_\text{act} \circ f_\text{cpl})(Q x)
		= r \odot \begin{pmatrix}
			I & 0 \\
			T & I
		\end{pmatrix} \begin{pmatrix}
			p_0 \\
			a_0
		\end{pmatrix} + u.
		\label{eq:affine-linear-coupling}
	}
	For future work considering $\Gg$, we propose to lift the restriction to affine-linear layers while making sure that $\Ss$ behaves as described in what follows. As the convergence of $\Gg$ however will strongly depend on the coupling architecture and data $p(x)$ at hand, this is beyond the scope of this work.
	
	Our first result shows which mean $\mean_1$ and covariance $\Sigma_1$ a single affine-linear coupling as in \cref{eq:affine-linear-coupling} yields to minimize $\Ss(\mean_1, \Sigma_1)$ given data with mean $\mean$ and covariance $\Sigma$, rotated by $Q$:
	\begin{proposition}[Proof in \cref{app:single-layer-whitening-proof}]
		\label{prop:single-layer-whitening}
		Given $D$-dimensional data with mean $\mean$ and covariance $\Sigma$ and a rotation matrix $Q$. Split the covariance of the rotated data into four blocks, corresponding to the passive and active dimensions of the coupling layer:
		\eql{
			\label{eq:covariance-int}
			Q \Sigma Q^\transy = \Sigma_0 = \begin{pmatrix}
				\Sigma_{0, pp} & \Sigma_{0, pa} \\
				\Sigma_{0, ap} & \Sigma_{0, aa}
			\end{pmatrix}
		}
		Then, the moments $\mean_1, \Sigma_1$ that can be reached by a coupling as in \cref{eq:affine-linear-coupling} are:
		\eql{
			\label{eq:covariance-out}
			\mean_1 = 0, \qquad \Sigma_1 = \begin{pmatrix}
				M(\Sigma_{0, pp}) & 0 \\
				0 & M(\Sigma_{0, aa} - \Sigma_{0, ap} \Sigma_{0, pp}^{-1} \Sigma_{0, pa})
			\end{pmatrix}.
		}
		This minimizes $\Ss$ as given in \cref{eq:non-standardness}, and $\Gg$ does not increase.
	\end{proposition}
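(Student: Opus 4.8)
The plan is to parameterize the moments $(\mean_1,\Sigma_1)$ that \cref{eq:affine-linear-coupling} can produce and then minimize $\Ss$ over this set in closed form. First I would note that the block map $x\mapsto r\odot\big(L(Qx)\big)+u$ with $L=\begin{pmatrix}I&0\\T&I\end{pmatrix}$ is an invertible affine-linear map (since $r\in\RR^D_+$), so \cref{lem:linear-constant-non-gaussianity} gives that $\Gg$ is unchanged, settling the last claim. The same map yields $\mean_1=\Diag(r)\,L\,Q\mean+u$ and $\Sigma_1=\Diag(r)\,L\,\Sigma_0\,L^\transy\,\Diag(r)$, where $\Sigma_0=Q\Sigma Q^\transy$ is assumed positive definite, so every principal block, in particular $\Sigma_{0,pp}$, is invertible.

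Next I would expand $L\,\Sigma_0\,L^\transy$ block by block. Its passive block is $\Sigma_{0,pp}$ regardless of $T$. Writing $T=T^\ast+\delta$ with $T^\ast=-\Sigma_{0,ap}\Sigma_{0,pp}^{-1}$ and completing the square, the active block equals $\Sigma_\text{Schur}+\delta\,\Sigma_{0,pp}\,\delta^\transy$ with $\Sigma_\text{Schur}:=\Sigma_{0,aa}-\Sigma_{0,ap}\Sigma_{0,pp}^{-1}\Sigma_{0,pa}$ (positive definite as a Schur complement of $\Sigma_0\succ0$), and the off-diagonal block equals $\Sigma_{0,pp}\,\delta^\transy$; hence $T=T^\ast$ is exactly the choice that block-diagonalizes $L\,\Sigma_0\,L^\transy$ into $\Sigma_{0,pp}$ and $\Sigma_\text{Schur}$.

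Then I would substitute into the explicit formula for $\Ss$ in \cref{eq:non-standardness}. Because $\det L=1$, $\det\Sigma_1=\big(\prod_i r_i\big)^2\det\Sigma_0$ is independent of $T$; because $\|\mean_1\|^2$ enters with a positive sign and $u$ is free, the optimum takes $\mean_1=0$. Thus, up to an additive constant independent of $r$ and $T$, $2\Ss=\sum_i r_i^2(\Sigma')_{ii}-2\sum_i\log r_i$ where $\Sigma'=L\,\Sigma_0\,L^\transy$. This is strictly convex in each $r_i>0$, minimized at $r_i^2=1/(\Sigma')_{ii}$, which reduces the objective to $\sum_i\log(\Sigma')_{ii}-\log\det\Sigma_0$. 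It remains to minimize $\sum_i\log(\Sigma')_{ii}$ over $\delta$: the passive diagonal entries do not depend on $\delta$, while each active diagonal entry equals $(\Sigma_\text{Schur})_{ii}+(\delta\,\Sigma_{0,pp}\,\delta^\transy)_{ii}\ge(\Sigma_\text{Schur})_{ii}$, and the diagonal of the positive semidefinite matrix $\delta\,\Sigma_{0,pp}\,\delta^\transy$ vanishes entirely iff $\delta=0$ (using $\Sigma_{0,pp}\succ0$). So $T=T^\ast$ is the unique minimizer; plugging back the optimal $r$, $\Sigma_1$ is block-diagonal with blocks obtained from $\Sigma_{0,pp}$ and $\Sigma_\text{Schur}$ by rescaling each diagonal entry to one, i.e.~$M(\Sigma_{0,pp})$ and $M(\Sigma_{0,aa}-\Sigma_{0,ap}\Sigma_{0,pp}^{-1}\Sigma_{0,pa})$, matching \cref{eq:covariance-out}.

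The step I expect to be the crux is this last minimization: arguing that block-diagonalizing via the Schur complement is genuinely optimal, rather than it being advantageous to tolerate some residual correlation in exchange for better-conditioned diagonal blocks. The resolution is the reduction above --- after absorbing the optimal ActNorm rescaling, $\Ss$ sees $\Sigma'$ only through the Hadamard-type ratio $\prod_i(\Sigma')_{ii}/\det\Sigma'$, whose denominator is shear-invariant and whose numerator is minimized coordinate-by-coordinate precisely at $T=T^\ast$. The remaining content is routine manipulation of $2\times2$ block matrices.
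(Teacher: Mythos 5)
Your proof is correct, and it takes a genuinely different route than the paper's. The paper's proof writes out $\Sigma_1 = A\Sigma_0 A^\transy$ block by block for $A = \begin{pmatrix}R & 0\\ T & S\end{pmatrix}$, differentiates the resulting scalar objective with respect to $r$, $s$, $T$, $b$, sets the gradients to zero, and solves the coupled system of equations to recover the optimal parameters. Your proof instead exploits a structural property: since $L = \begin{pmatrix}I & 0\\ T & I\end{pmatrix}$ is unit-triangular, $\det(L\Sigma_0 L^\transy) = \det\Sigma_0$ is shear-invariant, so $\det\Sigma_1$ decouples from $T$ entirely. This lets you eliminate $r$ first by one-dimensional convexity, reducing the objective to $\sum_i \log(\Sigma')_{ii} - \log\det\Sigma_0$, and then the completed-square form $\Sigma'_{aa} = \Sigma_{\mathrm{Schur}} + \delta\,\Sigma_{0,pp}\,\delta^\transy$ makes the remaining minimization over $\delta$ coordinate-wise trivial, since the diagonal of a PSD matrix $\delta\,\Sigma_{0,pp}\,\delta^\transy$ is nonnegative and vanishes only when $\delta = 0$. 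Your route is shorter, more conceptual, and --- unlike a pure first-order-conditions argument --- establishes global optimality directly, since each reduction step is an explicit minimization rather than a stationarity condition. The paper's route produces the same formulas but requires more bookkeeping in the algebra of the block system. One small omission worth noting: when you say ``the step I expect to be the crux'' and resolve it, you should make explicit that $\delta\,\Sigma_{0,pp}\,\delta^\transy = 0$ forces $\delta = 0$ because $\Sigma_{0,pp} \succ 0$ (a diagonal zero of a PSD matrix forces the matrix to be zero, and $\delta\,\Sigma_{0,pp}\,\delta^\transy = 0 \Rightarrow \Sigma_{0,pp}^{1/2}\delta^\transy = 0 \Rightarrow \delta = 0$); you gesture at this but a referee would want the two implications spelled out.
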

	The function $M$ takes a matrix $A$ and rescales the diagonal to $1$ as follows. It is a well-known operation in numerics called Diagonal scaling or Jacobi preconditioning so that $M(A)_{ii} = 1$:
	\eql{
		\label{eq:diagonal-preconditioning}
		M(A)_{ij} = \sqrt{A_{ii}A_{jj}}^{-1} A_{ij} = (\Diag(A)^{-1/2} A \Diag(A)^{-1/2})_{ij}.
	}
	\cref{prop:single-layer-whitening} shows how the covariance can be brought closer to the identity.
	
	The new covariance has passive and active dimensions uncorrelated. In the active subspace, the covariance is the Schur complement $\Sigma_{0, aa} - \Sigma_{0, ap} \Sigma_{0, pp}^{-1} \Sigma_{0, pa}$. This coincides with the covariance of the Gaussian $\Nn(0, \Sigma)$ as it is conditioned on any passive value $p$. Afterwards, the diagonal is rescaled to one, matching the standard deviations of all dimensions with the desired latent code. %
	The proof is based on a more general result how a single layer maximally reduces the Maximum Likelihood Loss for arbitrary data \cite{draxler_characterizing_2020}, which we apply to the non-Standardness $\Ss$ (see \cref{app:single-layer-whitening-proof}).
	
	\cref{fig:emnist-single-layer-covariance} shows an experiment in which a single affine-linear layer was trained to bring the covariance of EMNIST digits \cite{cohen_emnist_2017} as close to $\one$ as possible (Details in \cref{app:exp-covariance}). The experimental result coincides with the prediction by \cref{prop:single-layer-whitening}. Due to the finite batch-size, a small difference between theory and experiment remains.%
	
	\begin{figure}
		\centering
		\includegraphics[width=\linewidth,trim=0 .15cm 0 0,clip]{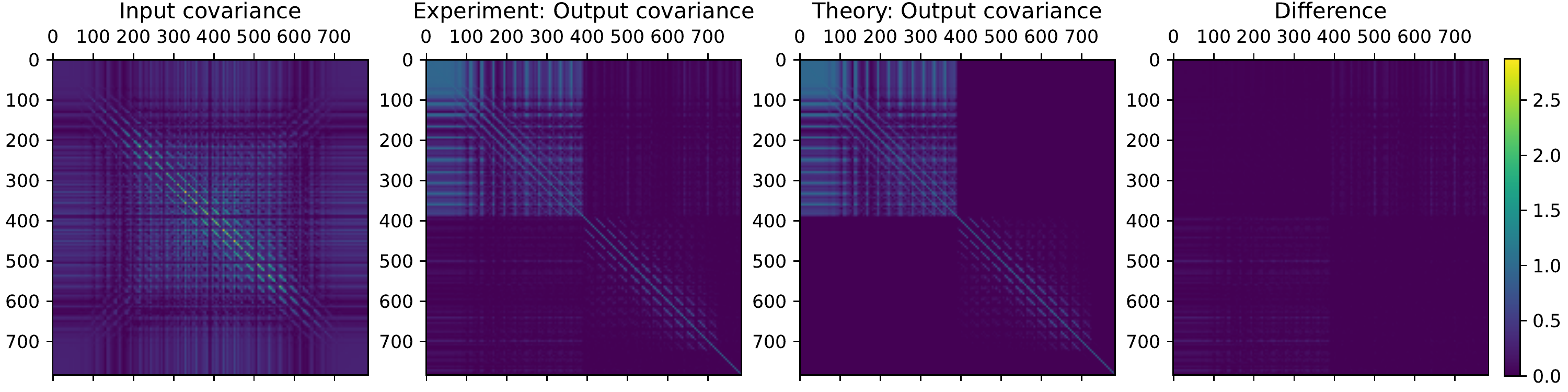}
		\caption{\textbf{How a single coupling layer can whiten the covariance} at the example of the EMNIST digits covariance matrix \textit{(first panel)}. The covariance after a single layer trained experimentally to minimize non-Standardness $\Ss(m_1, \Sigma_1)$ \textit{(second panel)}, which matches closely the prediction of \cref{prop:single-layer-whitening} \textit{(third panel)}. The difference between theory and experiment vanishes \textit{(last panel)}.}
		\label{fig:emnist-single-layer-covariance}
	\end{figure}

	\section{Explicit convergence rate}
	
	In \cref{sec:whitening}, we showed how a single coupling layer acts on the first two moments of a given data distribution to whiten it. We now explicitly demonstrate how much progress this means in terms of the non-Standardness $\Ss(\mean_1, \Sigma_1)$ averaged over rotations $Q$ (\cref{thm:single-layer-precise,thm:single-layer-guarantee}) and show the consequences for multiple blocks (\cref{thm:deep-network-guarantee}).

	\subsection{Single coupling block guarantees}
	\label{sec:single-block-guarantees}
	
	\cref{prop:single-layer-whitening} allows the computation of the minimum non-Standardness after a single coupling block given its rotation $Q$, by evaluating $\Ss(\mean_1, \Sigma_1)$. In fact, if we were to choose $Q$ such that the data is rotated so that principal components lie on the axes (i.e.~obtain $Q$ using PCA), a single coupling block suffices to reduce the covariance to the identity: $\Sigma_0 = Q \Sigma Q^\transy$ would be a diagonal matrix and $\Sigma_1 = I$. This is not the case in practice, where this optimal orientation has zero probability: $Q$ is chosen uniformly at random before training from all orthogonal matrices.
	One could argue that one should whiten the data before passing it to the flow, reducing $\Ss$ to zero from the start. However, any change in the architecture could possibly alter the performance of the network with regard to reducing the non-Gaussianity $\Gg$. Also, our work shows that coupling-based normalizing flows are already well-equipped to bring the non-Standardness to zero without such modifications.
	To properly describe the achievable non-Standardness $\Ss$, \textbf{we formulate all guarantees as expectations over the rotation} $Q$, corresponding to the loss averaged over training runs.
	
	We make two mild assumptions on our data that are part of usual data-preprocessing, when the mean is subtracted from the data and all data points are divided by the scalar $\sqrt{\tr\Sigma/D}$ (not to be confused with diagonal preconditioning, which acts dimension-wise).
	\begin{assumption}
		\label{as:centered}
		The data $p(x)$ is centered: $\EE_{x \sim p(x)}[x] = 0$.
	\end{assumption}
	\begin{assumption}
		\label{as:normalized-covariance}
		The covariance is normalized: $\tr\Sigma = D$.
	\end{assumption}
	The assumptions simplify the non-Standardness in \cref{eq:non-standardness}, which now only depends on the determinant of $\Sigma$:%
	\eql{
		\label{eq:loss-before}
		\Ss(\Sigma) = -\tfrac12 \log\det\Sigma = -\tfrac12 \log\det\Sigma_0 = \Ss(\Sigma_0)
	}
	for arbitrary rotation $Q$. We aim to compute the average non-Standardness after a single block $\EE_{Q \in p(Q)}[\Ss(\Sigma_1(Q))]$. For any $Q$, $\Ss(\Sigma_1)$ is again given by the determinant of the covariance $\Sigma_1(Q)$ as \cref{as:centered,as:normalized-covariance} remain fulfilled: By \cref{prop:single-layer-whitening} $\mean_1 = 0$ and the diagonal preconditioning $M$ ensures that the trace of $\Sigma_1$ is $D$.
	We write $\det(\Sigma_1)$ via $M_a$ and $M_p$, the diagonal matrices that make up the diagonal preconditioning in \cref{eq:diagonal-preconditioning}, and use the Schur determinantal formula for the determinant of block matrices: $\det(\Sigma_{0,pp}) \det(\Sigma_{0,aa} - \Sigma_{0,ap}\Sigma_{0,pp}^{-1}\Sigma_{0,pa}) = \det(\Sigma_0) = \det(\Sigma)$ 
	\cite{horn_matrix_2012}.
	We thus get $\det(\Sigma_1) = \det(M_p\Sigma_{0,pp}M_p) \det(M_a(\Sigma_{0,aa} - \Sigma_{0,ap}\Sigma_{0,pp}^{-1}\Sigma_{0,pa})M_a) = \det(M_p^2) \det(M_a^2) \det(\Sigma)$. 
	Inserting this into \cref{eq:loss-before}, we find:
	\eql{
		\label{eq:non-standardness-after}
		\Ss(\Sigma_1) 
		= -\tfrac12 ( \log\det\Sigma + \log \det M_p^2 + \log \det M_a^2 ) \leq \Ss(\Sigma_0) = \Ss(\Sigma).
	}
	The inequality $\Ss(\Sigma_1) \leq \Ss(\Sigma_0)$ holds because $\Sigma_1 = \Sigma_0$ is an admissible solution of the coupling layer optimization, but $\Sigma_1$ as given by \cref{prop:single-layer-whitening} is a minimizer of $\Ss(\Sigma_1)$.
	
	We average this quantity over training runs, i.e.~over rotations $Q$:
	\eql{
		\label{eq:expected-non-standardness-after}
		\EE_{Q \sim p(Q)}[\Ss(\Sigma_1)] = -\tfrac12 \big( \log\det \Sigma + \EE_{Q \sim p(Q)}[\log \det M_p^2] + \EE_{Q \sim p(Q)}[\log \det M_a^2] \big).
	}
	The main difficulty lies in the computation of $\EE_{Q \sim p(Q)}[\log \det M_a^2]$. Here, we contribute the two strong statements \cref{thm:single-layer-precise,thm:single-layer-guarantee} below.

	\subsubsection{Precise guarantee}
	
	The first result relies on projected orbital measures as developed by \cite{olshanski_projections_2013}. This theory describes the eigenvalues of submatrices of matrices in a random basis. We require such a result for integrating over $p(Q)$ in $\EE_{Q \sim p(Q)}[\log \det M_a^2]$. In contrast to typical choices of $p(Q)$, the theory to this date only covers data rotated by unitary matrices.\footnote{The only result known to us would yield predictions for $D=2$ \cite{faraut_rayleigh_2015}, whereas we are interested in large $D$.} To comply with \cite{olshanski_projections_2013}, we make two more assumptions:
	\begin{assumption}
		\label{as:unitary-rotation}
		The distribution of rotations is the Haar measure over \emph{unitary} matrices $U(D)$.
	\end{assumption}
	\begin{assumption}
		\label{as:non-degenerate-eigenvalues}
		The eigenvalues of the covariance matrix $\Sigma$ are distinct: $\lambda_i \neq \lambda_j$ for $i \neq j$.
	\end{assumption}
	One could think that the step from orthogonal to unitary rotations takes us far away from the scenario we want to consider. We will later observe empirically that the difference between averaging over unitary and orthogonal matrices is negligible. Technically, the covariance matrix remains positive definite, so the non-Standardness $\Ss$ is always real (see \cref{app:imaginary-part}). We will write $\EE_{Q \sim U(D)}[\,\cdot\,]$ to denote expectations over unitary matrices.
	
	\cref{as:non-degenerate-eigenvalues} is typically satisfied when working with real data that are in `general position'. %
	We are now ready to \textbf{compute the average training performance} of a single coupling block:
	\begin{theorem}[Proof in \cref{app:single-layer-precise-proof}]
		\label{thm:single-layer-precise}
		Given $D$-dimensional data with covariance $\Sigma$ with eigenvalues $\lambda_1, \dots \lambda_D$. Assume that \cref{as:centered,as:normalized-covariance,as:unitary-rotation,as:non-degenerate-eigenvalues} hold.
		Then, after a single coupling block, the expected non-Standardness is bounded from above:
		\eql{
			\!\!\!\EE_{Q \in U(D)}[\Ss(\Sigma_1(Q))] 
			\!<\! \Ss(\Sigma) \!+\! \tfrac{D}2 \log\!\bigg(\!\!(-1)^{\tfrac{D}{2}+1} \sum_{i=1}^D \lambda_i^{1-\tfrac{D}{2}} \log(\lambda_i) R(\lambda_i^{-1}; \lambda_{\neq i}^{-1}) e_{\tfrac{D}{2}-1}(\lambda_{\neq i}^{-1}) \!\!\bigg).
			\label{eq:single-layer-precise}
		}
		Here, $\lambda_{\neq i} := \{ \lambda_1, \dots, \lambda_{i-1}, \lambda_{i+1}, \dots, \lambda_D \}$ and $R, e_{K}$ are given by:
		\eql{
			\label{eq:inverse-volume-elementary-symmetric}
			R(a; \{b_i\}_{i=1}^N) = \prod_{i=1}^N \frac1{a - b_i}
			\quad\text{and}\quad
			e_K(\{b_i\}_{i=1}^N) = \sum_{0 < i_1 < \dots < i_K \leq N} b_{i_1} \cdots b_{i_K}.
		}
	\end{theorem}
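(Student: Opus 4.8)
The plan is to turn the exact decomposition already recorded in \cref{eq:expected-non-standardness-after} into the stated bound by (i) identifying the two expectations with moments of a single matrix entry each, (ii) evaluating those via random‑unit‑vector geometry and Olshanski's projected orbital measures, and (iii) a Jensen step to pass from $\EE[\log(\cdot)]$ to $\log\EE[\cdot]$.

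First I would unfold the two scaling matrices. By \cref{eq:diagonal-preconditioning}, $M_p=\Diag(\Sigma_{0,pp})^{-1/2}$ and $M_a=\Diag(S)^{-1/2}$ with $S=\Sigma_{0,aa}-\Sigma_{0,ap}\Sigma_{0,pp}^{-1}\Sigma_{0,pa}$ the Schur complement, so $\log\det M_p^2=-\sum_{i=1}^{D/2}\log(\Sigma_{0,pp})_{ii}$ and $\log\det M_a^2=-\sum_{i=1}^{D/2}\log S_{ii}$. Since the Haar measure on $U(D)$ is invariant under permuting the coordinate axes, all diagonal entries inside each block are identically distributed, so $\EE_Q[\log\det M_p^2]=-\tfrac{D}{2}\EE_Q[\log(\Sigma_{0,pp})_{11}]$ and $\EE_Q[\log\det M_a^2]=-\tfrac{D}{2}\EE_Q[\log S_{11}]$. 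Plugging this into \cref{eq:expected-non-standardness-after} and using \cref{eq:loss-before} gives the exact identity
\[
\EE_{Q}[\Ss(\Sigma_1)]=\Ss(\Sigma)+\tfrac{D}{4}\bigl(\EE_Q[\log(\Sigma_{0,pp})_{11}]+\EE_Q[\log S_{11}]\bigr).
\]

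Next I would evaluate the two entries. The passive one is a quadratic form: $(\Sigma_{0,pp})_{11}=q_1^\ast\Sigma q_1$ where $q_1$ is the first row of $Q$, a uniform point on the complex unit sphere, so $\EE_Q[q_1q_1^\ast]=D^{-1}I$ and \cref{as:normalized-covariance} gives $\EE_Q[(\Sigma_{0,pp})_{11}]=D^{-1}\tr\Sigma=1$. The active one, $S_{11}=\mathrm{Var}\bigl((Qx)_{D/2+1}\mid (Qx)_1,\dots,(Qx)_{D/2}\bigr)$ for $x\sim\Nn(0,\Sigma)$, is the $1\times1$ Schur complement of the top-left $(D/2)\times(D/2)$ block inside the $(D/2{+}1)$-corner of $Q\Sigma Q^\transy$; equivalently $S_{11}=\det C_{D/2+1}/\det C_{D/2}$ for the nested principal corners $C_k$. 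Because its law is determined by the joint eigenvalue distribution of a corner of a matrix in a Haar-random (unitary) basis, it is exactly the object computed by Olshanski's projected orbital measures: conditioned on the spectrum $\lambda_1,\dots,\lambda_D$ of $\Sigma$, the eigenvalue density of a $k$-corner of $U\Sigma U^\ast$ is the explicit spline supported on the Gelfand--Tsetlin interlacing polytope. Integrating the relevant functional of $S_{11}$ against this density, and collapsing the resulting alternating sums of polytope integrals by partial-fraction / divided-difference identities, produces the closed form built from $R$ (a reciprocal Vandermonde product) and the elementary symmetric polynomials $e_{D/2-1}$. Finally, concavity of $\log$ (Jensen) replaces $\EE_Q[\log(\cdot)]$ by $\log\EE_Q[\cdot]$ -- on the passive part this yields the harmless $\log 1=0$, and applying it to $\sqrt{(\Sigma_{0,pp})_{11}\,S_{11}}$ is what collapses the factor $\tfrac{D}{4}$ into $\tfrac{D}{2}\log(\cdot)$; \cref{as:non-degenerate-eigenvalues} makes the quadratic forms non-degenerate, hence Jensen strict, giving the strict inequality. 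That $\Gg$ is not increased and $\Sigma_1$ stays centered with $\tr\Sigma_1=D$ is inherited from \cref{prop:single-layer-whitening}.

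The main obstacle is the Olshanski integration: one must track how the piecewise-polynomial corner-spectrum density interacts with the ratio-of-determinants form of $S_{11}$ and then simplify the nested-polytope integrals down to the compact expression with $R(\lambda_i^{-1};\lambda_{\neq i}^{-1})$ and $e_{D/2-1}(\lambda_{\neq i}^{-1})$. I expect this to require first passing to $\Sigma^{-1}$, so that $S^{-1}$ becomes an honest corner of $Q\Sigma^{-1}Q^\transy$ (whose spectrum is again governed by the same theory, now for the eigenvalues $\lambda_i^{-1}$), then reducing the needed expectation to a one-dimensional marginal of that spectral density, and finally a symmetric-function bookkeeping step of the type underlying B-spline partial fractions. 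A secondary, purely technical point is to check reality and signs: since $\Sigma$ stays positive definite under unitary conjugation, $S_{11}>0$ and the argument of the outer logarithm is positive (cf.\ \cref{app:imaginary-part}), even though we integrate over unitary rather than orthogonal rotations.
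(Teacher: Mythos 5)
Your plan reproduces the paper's proof almost lemma for lemma. You begin from the same exact decomposition of $\EE_Q[\Ss(\Sigma_1)]$ (\cref{eq:expected-non-standardness-after}), bound the passive scaling contribution by the same Jensen + trace-invariance argument that gives $\log\EE_Q[(\Sigma_{0,pp})_{11}]=\log 1=0$ (\cref{lem:helper-identities,lem:passive-scaling-contribution}), and — crucially — you identify the same pivotal reformulation: the Schur complement $S=\Sigma_{0,aa}-\Sigma_{0,ap}\Sigma_{0,pp}^{-1}\Sigma_{0,pa}$ equals $(P_{0,aa})^{-1}$ with $P_0=Q\Sigma^{-1}Q^*$, so $S^{-1}$ is an honest corner of the conjugated precision matrix and its eigenvalue law is governed by Olshanski's projected orbital measure on the spectrum $\lambda_1^{-1},\dots,\lambda_D^{-1}$ (\cref{lem:active-scaling-by-precision-eigenvalues}). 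Reducing the needed expectation to the expected sum of inverse eigenvalues of the corner and collapsing the result via B-spline divided differences, the Hermite--Genocchi formula, and Vandermonde / elementary-symmetric-function bookkeeping is exactly what \cref{lem:expected-projected-inverse-trace,lem:vandermonde-loo} do. You also correctly note the role of \cref{as:non-degenerate-eigenvalues} in making Jensen strict and of positive-definiteness under unitary conjugation in keeping the bound real (\cref{app:imaginary-part}). So the route is the same.

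One small wrinkle: your remark that Jensen should be applied to $\sqrt{(\Sigma_{0,pp})_{11}\,S_{11}}$ to ``collapse'' $\tfrac{D}{4}$ into $\tfrac{D}{2}\log(\cdot)$ is not what the paper does — the paper applies Jensen separately to the passive and active diagonal entries and never combines them under a single square root; combining them would not factor cleanly into the stated closed form. Following the paper's two separate Jensen steps is the cleaner path, and I would drop the square-root maneuver: it appears to be an attempt to patch a prefactor rather than a load-bearing idea.
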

	Inequality \eqref{eq:single-layer-precise} sharply bounds the expected non-Standardness that can be achieved by a single block. \changed{The only approximation made is an inequality which comes close to equality as the dimension $D$ increases due to the concentration of the corresponding probability distribution.}
	
	\cref{fig:single-layer-experiment} shows an experiment confirming \cref{thm:single-layer-precise} (Details in \cref{app:exp-single-layer}). We start with covariance matrices using parametrized eigenvalue spectra. On each, we first apply a single coupling block with random Q and train the coupling that maximally reduces $\Ss$ (\cref{prop:single-layer-whitening}). Then we iteratively append 32 additional blocks in the same manner, building a flow of that depth.
	We average the resulting empirical ratio $\Ss(\Sigma_1) / \Ss(\Sigma)$ over several \textit{orthogonal} orientations $Q$ of the rotation layer for each input covariance matrix. Then, we compare this to (i)~experimentally averaging over \textit{unitary} rotations and (ii)~to the prediction by \cref{thm:single-layer-precise} and confirm that it is a valid and close upper bound.
	Details for replication and more examples can be found in \cref{app:exp-single-layer}.
	
	The proof explicitly integrates $\EE[M_a^2]$ using \cite{olshanski_projections_2013} (see \cref{app:single-layer-precise-proof}).
	Numerically evaluating \cref{eq:single-layer-precise} can be hard even for small $D$ as the summands scale as $\Oo(\exp(D))$, but the overall sum scales as $\Oo(D)$. High values cancel due to $R$ alternating in sign, and one requires arbitrary-precision floating point software to evaluate \cref{eq:single-layer-precise}.
	
	\begin{figure}
		\centering
		\includegraphics[width=\linewidth,trim=0 .3cm 0 .25cm,clip]{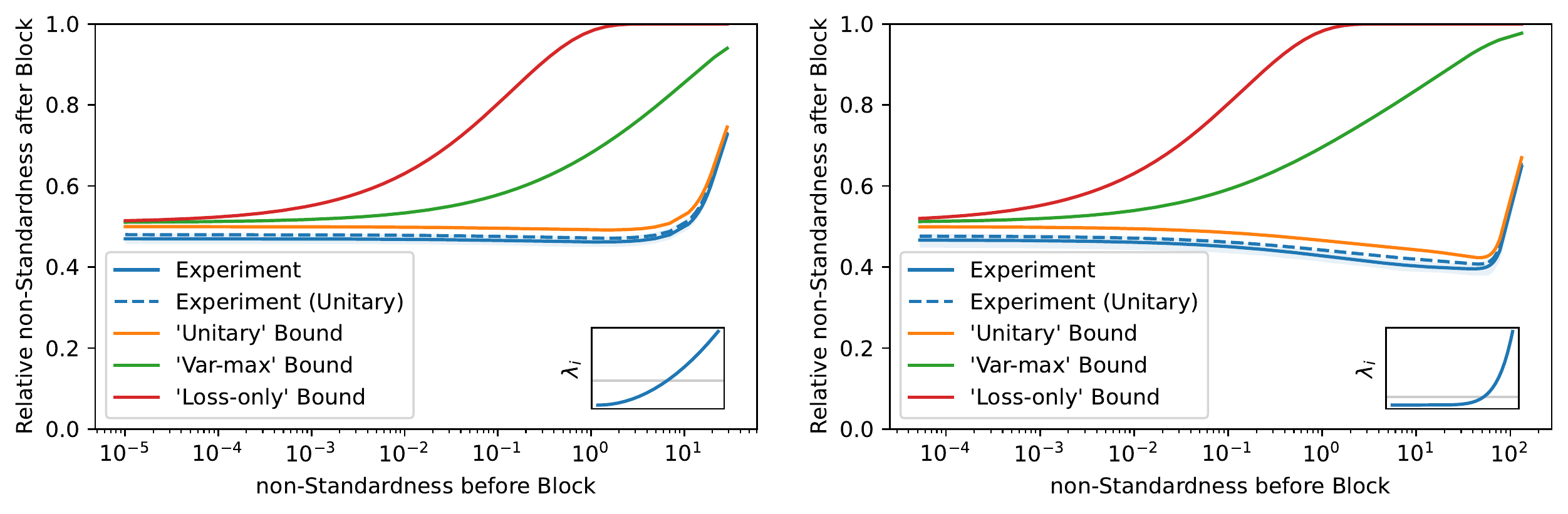}
		\caption{\textbf{Comparison between predicted non-Standardness and experiment} for 48-dimensional parametrized eigenvalue spectra \textit{(insets)}, varied over a parameter which controls the spread of the spectrum and thus changes $\Ss$.
			The experimental average over \textit{orthogonal} rotations matrices {\color{C0}(blue, shaded by Interquartile Range IQR)} is closely matched by the experimental average over \textit{unitary} matrices {\color{C0} (dotted blue)}. The prediction by \cref{thm:single-layer-precise} is a close upper bound that closely matches the experimental behavior {\color{C1} (orange)}.
			The predictions by \cref{thm:single-layer-guarantee} are less precise, but converge to the same value as the precise bound for covariances close to the identitiy: `$\Var$-$\max$' is \cref{eq:single-layer-guarantee-var-max} {\color{C2} (green)} and `Loss-only' is \cref{eq:single-layer-guarantee-loss} {\color{C3} (red)}. More details and examples in \cref{app:exp-single-layer}.}
		\label{fig:single-layer-experiment}
	\end{figure}

	\subsubsection{Interpretable guarantee}
	
	The guarantee in \cref{thm:single-layer-precise} yields useful predictions, but it does not lend itself to further analysis: How does the bound behave over several coupling blocks? What is the behavior for varying dimension $D$?
	Also, \cref{as:unitary-rotation} restricts formal reasoning as we are interested in averaging over orthogonal and not unitary rotations.
	Our second single-block guarantee depends only on simple metrics of the covariance. Moreover, we drop \cref{as:unitary-rotation,as:non-degenerate-eigenvalues}, averaging over orthogonal, not unitary, $Q$:
	\begin{theorem}[Proof in \cref{app:single-layer-guarantee-proof}]
		\label{thm:single-layer-guarantee}
		Given $D$-dimensional data fulfilling \cref{as:centered,as:normalized-covariance} with covariance $\Sigma \neq I$ with eigenvalues $\lambda_1, \dots \lambda_D$.
		Then, after a single coupling block, the expected loss can be bounded from above:
		\eqal{
			\EE_{Q \in O(D)}[\Ss(\Sigma_1(Q))] &
			\leq \Ss(\Sigma) + \frac{D}{4} \log\left(1 - \frac{D^2}{2(D-1)(D+2)} \frac{\Var[\lambda]}{\lambda_{\max}} \right) \label{eq:single-layer-guarantee-var-max} \\&
			\leq \Ss(\Sigma) + \frac{D}{4} \log\left(1 - \frac{D^2}{(D-1)(D+2)} \frac{1 - \sqrt{1 - g^D}}{1 + \sqrt{1 - g^D}} (1 - g) \right) \label{eq:single-layer-guarantee-loss} 
			< \Ss(\Sigma).
		}
		Here, $g$ is the geometric mean of the eigenvalues: $g = \prod_{i=1}^D\lambda_i^{1/D} = \exp(-2\Ss(\Sigma)/D) < 1$ which is a bijection of $\Ss(\Sigma)$.
	\end{theorem}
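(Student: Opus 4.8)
\emph{Proof plan.} I would build on \eqref{eq:expected-non-standardness-after}, which already gives $\EE_{Q}[\Ss(\Sigma_1)] = \Ss(\Sigma) - \tfrac12\EE_{Q}[\log\det M_p^2] - \tfrac12\EE_{Q}[\log\det M_a^2]$ (using $-\tfrac12\log\det\Sigma = \Ss(\Sigma)$ from \eqref{eq:loss-before}), and unpack the diagonal-scaling determinants coordinatewise. Writing $S := \Sigma_{0,aa} - \Sigma_{0,ap}\Sigma_{0,pp}^{-1}\Sigma_{0,pa}$ for the Schur complement of \cref{prop:single-layer-whitening} and using \eqref{eq:diagonal-preconditioning}, one has $\log\det M_p^2 = -\sum_{i \in p}\log(\Sigma_0)_{ii}$ and $\log\det M_a^2 = -\sum_{i\in a}\log S_{ii}$, so $\EE_Q[\Ss(\Sigma_1)] = \Ss(\Sigma) + \tfrac12\EE_Q[\sum_{i\in p}\log(\Sigma_0)_{ii}] + \tfrac12\EE_Q[\sum_{i\in a}\log S_{ii}]$. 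Each $(\Sigma_0)_{ii}$ is a diagonal entry of $Q\Sigma Q^\transy$, so $\EE_Q[(\Sigma_0)_{ii}] = \tfrac1D\tr\Sigma = 1$ by \cref{as:normalized-covariance}, whence by Jensen $\EE_Q[\log(\Sigma_0)_{ii}]\le 0$ and the passive contribution is nonpositive. For the active sum I apply Jensen twice --- once over the $D/2$ active coordinates and once over $Q$ --- to obtain $\tfrac12\EE_Q[\sum_{i\in a}\log S_{ii}] \le \tfrac{D}{4}\log\!\big(\tfrac2D\EE_Q[\tr S]\big)$, and then $\EE_Q[\tr S] = \EE_Q[\tr\Sigma_{0,aa}] - \EE_Q[\tr(\Sigma_{0,ap}\Sigma_{0,pp}^{-1}\Sigma_{0,pa})] = \tfrac D2 - \EE_Q[c(Q)]$ with $c(Q) := \tr(\Sigma_{0,ap}\Sigma_{0,pp}^{-1}\Sigma_{0,pa})\ge 0$.

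Next I would lower-bound $\EE_Q[c(Q)]$. Cauchy interlacing gives $\Sigma_{0,pp}\preceq\lambda_{\max}I$, so $\Sigma_{0,pp}^{-1}\succeq\lambda_{\max}^{-1}I$ and $c(Q)\ge\lambda_{\max}^{-1}\tr(\Sigma_{0,pa}^\transy\Sigma_{0,pa})$, i.e.\ $\lambda_{\max}^{-1}$ times the sum of squared entries of the off-diagonal block $\Sigma_{0,pa}$. Diagonalizing $\Sigma$ and using the standard fourth-moment formulas for the Haar measure on $O(D)$ (the only row-pairing that survives forces the two column indices either to coincide or to be genuinely distinct), a short computation gives $\EE_{Q\in O(D)}[(\Sigma_0)_{ij}^2] = \frac{D\,\Var[\lambda]}{(D-1)(D+2)}$ for $i\neq j$. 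Summing over the $(D/2)^2$ passive--active index pairs yields $\EE_Q[\tr(\Sigma_{0,pa}^\transy\Sigma_{0,pa})] = \frac{D^3\Var[\lambda]}{4(D-1)(D+2)}$, hence $\tfrac2D\EE_Q[\tr S] = 1 - \tfrac2D\EE_Q[c(Q)] \le 1 - \frac{D^2}{2(D-1)(D+2)}\frac{\Var[\lambda]}{\lambda_{\max}}$. Substituting this together with the nonpositivity of the passive term gives \eqref{eq:single-layer-guarantee-var-max}; the strict inequality $<\Ss(\Sigma)$ holds because $\Sigma\neq I$ together with \cref{as:normalized-covariance} forces $\Var[\lambda]>0$, so the logarithm's argument is $<1$.

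For \eqref{eq:single-layer-guarantee-loss}, monotonicity of $\log$ reduces everything to the spectral inequality $\frac{\Var[\lambda]}{\lambda_{\max}} \ge 2\,\frac{1-\sqrt{1-g^D}}{1+\sqrt{1-g^D}}(1-g) =: 2r(1-g)$, which I would prove in two steps. First, $\Var[\lambda]\ge 2\lambda_{\min}(1-g)$: the function $\phi(t) = t-1-\log t$ satisfies $\phi(t)\le\frac{(t-1)^2}{2\min(t,1)}\le\frac{(t-1)^2}{2\lambda_{\min}}$ (check $\phi(t)\le\frac{(t-1)^2}{2}$ for $t\ge1$ and $\phi(t)\le\frac{(t-1)^2}{2t}$ for $t\le1$ by differentiating, and use $\lambda_{\min}\le1$), so averaging the identity $\tfrac1D\sum_i\phi(\lambda_i) = -\log g$ (which uses \cref{as:normalized-covariance} and $g = \exp(-2\Ss(\Sigma)/D)$) gives $\log(1/g)\le\frac{\Var[\lambda]}{2\lambda_{\min}}$, and then $\log(1/g)\ge 1-g$. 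Second, $\frac{\lambda_{\min}}{\lambda_{\max}}\ge r$: rewriting the definition of $r$ as $g^D = \frac{4r}{(1+r)^2}$ and using that $y\mapsto y+y^{-1}$ is decreasing on $(0,1]$, this is equivalent to $(\lambda_{\min}+\lambda_{\max})^2\prod_{i\notin\{\min,\max\}}\lambda_i \le 4$, which follows from AM--GM on the $D-2$ middle eigenvalues ($\prod\lambda_i\le(\tfrac{D-\lambda_{\min}-\lambda_{\max}}{D-2})^{D-2}$) and then maximizing $s\mapsto s^2(\tfrac{D-s}{D-2})^{D-2}$, whose unique maximum on $(0,D)$ sits at $s=2$ with value $4$. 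Combining, $\frac{\Var[\lambda]}{\lambda_{\max}}\ge 2\frac{\lambda_{\min}}{\lambda_{\max}}(1-g)\ge 2r(1-g)$, which plugged into \eqref{eq:single-layer-guarantee-var-max} yields \eqref{eq:single-layer-guarantee-loss}.

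The main obstacle is the passage \eqref{eq:single-layer-guarantee-var-max} $\Rightarrow$ \eqref{eq:single-layer-guarantee-loss}: one must identify $\lambda_{\min}/\lambda_{\max}$ as the right intermediate quantity and recognize that the defining relation $g^D = 4r/(1+r)^2$ is exactly what converts the target into the clean symmetric-function inequality $(\lambda_{\min}+\lambda_{\max})^2\prod_{\text{middle}}\lambda_i\le 4$; attacking it head-on (solving $\min\Var[\lambda]/\lambda_{\max}$ under fixed arithmetic and geometric means, which has a two-point minimizer) leads to a considerably messier expression. A secondary, purely technical point is checking the $O(D)$ fourth-moment identity for $\EE_Q[(\Sigma_0)_{ij}^2]$ cleanly, but that is routine Weingarten calculus.
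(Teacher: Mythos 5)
Your argument for \eqref{eq:single-layer-guarantee-var-max} is correct and essentially the paper's Lemma~\ref{lem:active-scaling-contribution-var-max}: Jensen plus permutation symmetry reduce the problem to a single expected Schur-complement entry, $\Sigma_{0,pp}\preceq\lambda_{\max}I$ lower-bounds the correction term, and the $O(D)$ fourth-moment values $\EE[Q_{11}^2Q_{21}^2]=\tfrac1{D(D+2)}$ and $\EE[Q_{11}Q_{21}Q_{12}Q_{22}]=-\tfrac1{D(D-1)(D+2)}$ from \cite{gorin_integrals_2002} deliver the $\Var[\lambda]$ factor; the only cosmetic difference is that you track $\tr(\Sigma_{0,pa}^\transy\Sigma_{0,pa})$ rather than a single $(1,1)$ entry, which is equivalent by exchangeability up to the factor $D/2$. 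The passage to \eqref{eq:single-layer-guarantee-loss} is where you genuinely diverge from the paper: it cites the Cartwright--Field AM--GM refinement for $\Var[\lambda]\geq 2\lambda_{\min}(1-g)$ and asserts $\max\kappa=\tfrac{1+\sqrt{1-g^D}}{1-\sqrt{1-g^D}}$ without proof, whereas you prove both ingredients from scratch --- the first via $\phi(t)=t-1-\log t\leq (t-1)^2/\bigl(2\min(t,1)\bigr)$ averaged over the spectrum, the second by rewriting the target as $(\lambda_{\min}+\lambda_{\max})^2\prod_{\text{mid}}\lambda_i\leq 4$ (using that $y\mapsto y+y^{-1}$ is monotone and $g^D=4r/(1+r)^2$), then AM--GM on the middle eigenvalues and maximizing $s^2\bigl((D-s)/(D-2)\bigr)^{D-2}$ at $s=2$. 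This self-contained route is arguably tidier than the paper's, and it simultaneously recovers the extremal configuration ($\lambda_{\min}+\lambda_{\max}=2$ with all middle eigenvalues equal to $1$) that justifies the paper's unproved condition-number formula. I see no gaps; the proposal is sound.
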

	These two new bounds on the average achievable non-Standardness $\Ss$ after a single block are also depicted in \cref{fig:single-layer-experiment}. They make useful predictions, but are less precise than \cref{thm:single-layer-precise}. The second bound will be especially useful in what follows because it only depends on the non-Standardness before the block $\Ss(\Sigma)$.
	
	The full proof is given in \cref{app:single-layer-guarantee-proof}. It relies on the integration of monomials of entries of random orthogonal matrices as described by \cite{gorin_integrals_2002} and the arithmetic mean-geometric mean inequality by \cite{cartwright_refinement_1978}.
	
	The first bound suggests an important property of the non-Standardness convergence of a coupling-based normalizing flow in terms of dimension: The performance only marginally depends on the dimension. To see this, divide \cref{eq:single-layer-guarantee-var-max} by $D$ to obtain a statement about the non-Standardness per dimension $\Ss/D$. Then take several data sets with different dimension but same spectrum characteristics (i.e.~same geometric mean, variance and maximum of covariance eigenvalues). The guarantee is then approximately constant in $D$ (it varies slightly with $D^2 / (D^2+D-2)$, which is always close to $1$).

	\subsection{Deep network guarantee}
	\label{sec:deep-network-guarantee}
	
	The previous \cref{sec:single-block-guarantees} was concerned with determining how much a \emph{single} coupling block can typically contribute towards reducing the $\Ss$ to zero.
	Now, we extend this result to compute the expected non-Standardness after a \textit{deep} coupling-based normalizing flow as an explicit function of the number of blocks. We again treat the rotation layer of each block as a random variable, as it is randomly determined before training.
	
	We find that the \textbf{convergence rate} of the covariance to the identity is (at least) \textbf{linear}:
	\begin{theorem}[Proof in \cref{app:deep-network-guarantee-proof}]
		\label{thm:deep-network-guarantee}
		Given $D$-dimensional data fulfilling \cref{as:centered,as:normalized-covariance} with covariance $\Sigma$.
		Then, after $L$ coupling blocks, the expected loss is smaller than:
		\eql{
			\EE_{Q_1, \dots, Q_L \in O(D)}[\Ss(\Sigma_L)] \leq \gamma\left(\Ss(\Sigma)\right)^L \Ss(\Sigma),
		}
		where the convergence rate depends on the non-Standardness before training:
		\eql{
			\label{eq:convergence-rate}
			\gamma(\Ss) = 1 + \tfrac{1}{4 \Ss/D} \log\left(1 - \frac{D^2}{(D-1)(D+2)} \frac{1 - \sqrt{1 - g(\Ss)^D}}{1 + \sqrt{1 - g(\Ss)^D}} \left(1 - g(\Ss) \right) \right) < 1.
		}
	\end{theorem}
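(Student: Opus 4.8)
\textbf{Proof plan for \cref{thm:deep-network-guarantee}.}
The plan is to turn the block-wise construction into a stochastic recursion and reduce the deep-network bound to the single-block guarantee \cref{thm:single-layer-guarantee} together with a monotonicity property of the rate function $\gamma$. Write $\Sigma_\ell$ for the covariance after $\ell$ optimally trained coupling blocks with independent random rotations $Q_1,\dots,Q_\ell$, let $\mathcal F_\ell=\sigma(Q_1,\dots,Q_\ell)$, and set $S_\ell:=\Ss(\Sigma_\ell)$; thus $S_0=\Ss(\Sigma)$ is deterministic and $S_\ell$ is $\mathcal F_\ell$-measurable. Two structural facts are needed. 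First, by \cref{prop:single-layer-whitening} every $\Sigma_\ell$ has zero mean, and since the diagonal rescaling $M$ keeps the trace equal to $D$, each $\Sigma_\ell$ again satisfies \cref{as:centered,as:normalized-covariance}, so \cref{thm:single-layer-guarantee} may legitimately be applied at every block. Second, $S_\ell\le S_{\ell-1}$ almost surely, by the admissibility argument behind \cref{eq:non-standardness-after} ($\Sigma_\ell=\Sigma_{\ell-1}$ is always a feasible choice for block $\ell$). Because $Q_\ell$ is drawn from the Haar measure on $O(D)$ independently of $\mathcal F_{\ell-1}$, conditioning on $\mathcal F_{\ell-1}$ and applying \cref{thm:single-layer-guarantee} to the (now fixed) covariance $\Sigma_{\ell-1}$ gives, after rewriting \cref{eq:single-layer-guarantee-loss} in the equivalent form $\EE_{Q\in O(D)}[\Ss(\Sigma_1(Q))]\le\gamma(\Ss(\Sigma))\,\Ss(\Sigma)$,
\[
  \EE\!\left[S_\ell\mid\mathcal F_{\ell-1}\right]\le\gamma(S_{\ell-1})\,S_{\ell-1},
\]
where on $\{S_{\ell-1}=0\}$ (i.e.\ $\Sigma_{\ell-1}=I$) this is the trivial $0\le0$, so the value assigned to $\gamma(0)$ is immaterial and we simply set it by continuity.

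The analytic core is the lemma that $\gamma$ is non-decreasing on $[0,\infty)$ with $\gamma<1$ on $(0,\infty)$. The strict inequality $\gamma<1$ is already contained in \cref{thm:single-layer-guarantee}, since its conclusion $\EE[\Ss(\Sigma_1)]<\Ss(\Sigma)$ for $\Sigma\neq I$ is precisely $\gamma(\Ss(\Sigma))<1$. For monotonicity I would substitute $g=e^{-2\Ss/D}\in(0,1)$, so that $\Ss=-\tfrac D2\log g$ and $\gamma$ becomes an explicit elementary function of $g$; after differentiating in $g$ and clearing the strictly positive factors (the logarithm's argument lies in $(0,1)$ by the $\Sigma\neq I$ case of \cref{thm:single-layer-guarantee}), the claim reduces to a single scalar inequality in $g\in(0,1)$ for each integer $D\ge2$, provable by elementary bounds such as $\log(1+x)\le x$ and crude estimates of $\tfrac{1-\sqrt{1-g^D}}{1+\sqrt{1-g^D}}$. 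I expect this to be the main obstacle, since the closed form of $\gamma$ is cumbersome; a fallback that suffices for the induction below is to prove only the weaker inequality $\gamma(s)\le\gamma(S_0)$ for $0\le s\le S_0$.

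With these pieces in place the proof closes by induction on $L$: using the tower property, then the conditional bound, then $0\le S_{\ell-1}\le S_0$ almost surely together with monotonicity of $\gamma$ (so $\gamma(S_{\ell-1})\le\gamma(S_0)$),
\[
  \EE[S_L]=\EE\!\left[\EE[S_L\mid\mathcal F_{L-1}]\right]\le\EE\!\left[\gamma(S_{L-1})\,S_{L-1}\right]\le\gamma(S_0)\,\EE[S_{L-1}],
\]
whence $\EE[S_L]\le\gamma(S_0)^L S_0=\gamma(\Ss(\Sigma))^L\,\Ss(\Sigma)$; the bound $\gamma(\Ss(\Sigma))<1$ (the case $\Sigma=I$ being trivial) then yields the claimed linear convergence rate.
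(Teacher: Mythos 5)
Your proposal is correct and matches the paper's proof in \cref{app:deep-network-guarantee-proof}: both rely on the pointwise monotonicity $\Ss(\Sigma_\ell)\le\Ss(\Sigma_{\ell-1})$, the monotonicity of $\gamma$ in $\Ss$ (which both you and the paper assert via a derivative check rather than carry out in full), and repeated application of \cref{eq:single-layer-guarantee-loss} rewritten as $\EE_Q[\Ss(\Sigma_1)]\le\gamma(\Ss(\Sigma))\,\Ss(\Sigma)$. The only cosmetic difference is that you make the conditional-expectation / tower-property structure explicit, which the paper leaves implicit in its nested expectations over $Q_1,\dots,Q_L$.
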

	The non-Standardness decreases at least exponentially fast in the number of blocks. The convergence rate that holds for a deep network is computed using the non-Standardness of the input data $\Ss(\Sigma)$. This rate comes from \cref{eq:single-layer-guarantee-loss}. The proof uses that $\gamma(\Ss)$ improves from block to block as $\Ss$ decreases (see \cref{app:deep-network-guarantee-proof}). Again, $g(\Ss) = \exp(-2 \Ss/D) < 1$ is the geometric mean of eigenvalues of $\Sigma$, which increases from block to block.
	
	\begin{figure}
	\centering
	\includegraphics[width=\linewidth,trim=0 .25cm 0 .25cm,clip]{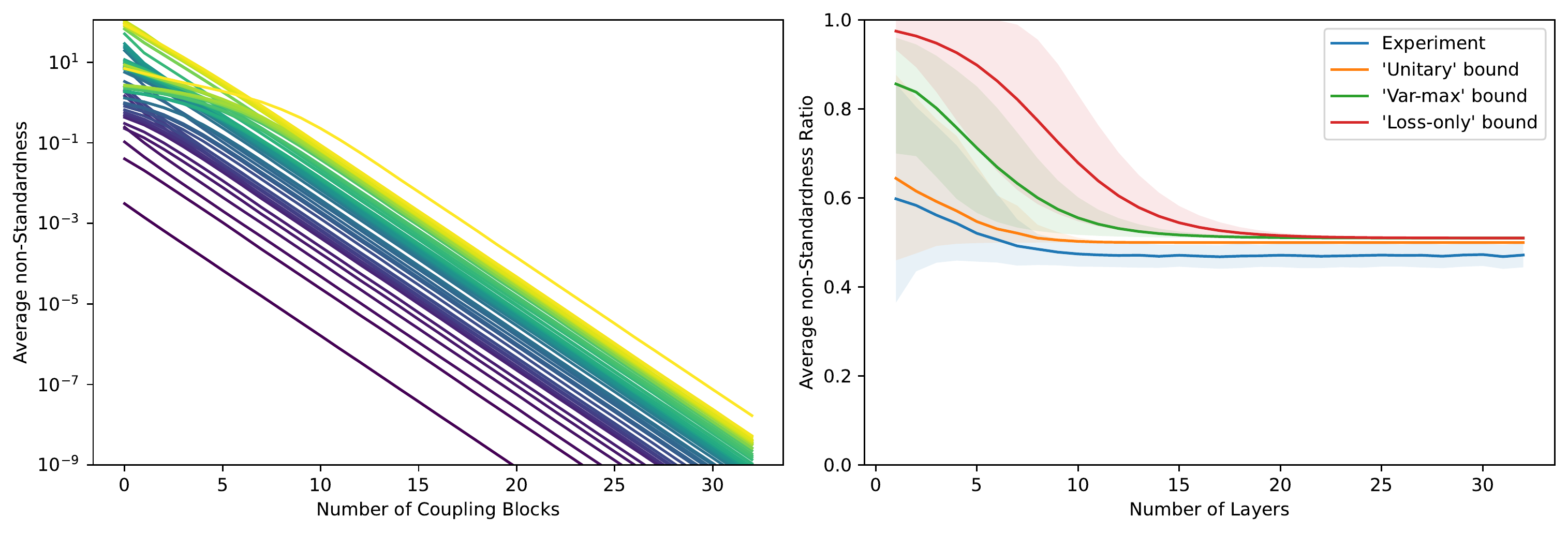}
	\caption{\textbf{Deep network convergence of covariance on toy dataset.} \textit{(Left)} Each line shows the experimental convergence of $\Ss$ via the repeated application of \cref{prop:single-layer-whitening}, averaged over 32 runs with different rotations $Q$. \textit{(Right)} The empirical convergence rate {\color{C0} (blue)}, i.e.~the ratio of $\Ss$ before and after a block, is correctly bounded from above by our predictions in \cref{thm:single-layer-precise} {\color{C1} (orange)}, and the bounds in \cref{thm:single-layer-guarantee}: \cref{eq:single-layer-guarantee-var-max} {\color{C2} (green)} and \cref{eq:single-layer-guarantee-loss} {\color{C3} (red)}. The solid lines show the ratio (bounds) averaged over the toy dataset and rotations, the shade is the IQR. The experiment suggests that a convergence rate like \cref{thm:deep-network-guarantee} can also be derived for the remaining bounds.}
	\label{fig:multi-layer}
\end{figure}

	\cref{fig:multi-layer} shows the convergence of the non-Standardness to zero in an experiment. We build a toy dataset of various covariances where we aim to capture a plethora of possible cases (see \cref{app:exp-multi-layer}). We apply a single coupling block with random $Q$ and the coupling that maximally reduces $\Ss$ via \cref{prop:single-layer-whitening}. We iteratively add such blocks 32 times, building a flow of that depth. The resulting convergence of $\Ss$ as a function of depth is averaged over 32 runs with different rotations.
	The measured curve confirms \cref{thm:deep-network-guarantee}. We find that the rate $\gamma$ in \cref{eq:convergence-rate} is correct, but several experiments show even faster convergence in practice.
	Indeed, the experiments suggest that dividing all upper bounds for $\EE[\Ss(\Sigma_1)]$ in \cref{thm:single-layer-precise,thm:single-layer-guarantee} by $\Ss(\Sigma)$ also bounds the non-Standardness ratio for subsequent blocks. Formally, we conjecture that $\EE[\Ss(\Sigma_L)] / \Ss(\Sigma) \leq (B / \Ss(\Sigma))^L$ where $B$ is the rhs.~of \cref{eq:single-layer-precise,eq:single-layer-guarantee-var-max} (\cref{thm:deep-network-guarantee} shows exactly this for \cref{eq:single-layer-guarantee-loss}). We leave a proof or falsification of this conjecture open to future work.
	
	The experiment also suggests that all bounds agree after a few blocks, leaving a small gap to the experiment. We can explicitly compute this limit value of $\gamma(\Ss)$ by taking $\Ss \to 0$:
	\eql{
		\gamma(\Ss) \xrightarrow{\Ss \to 0} \tfrac{D (D + 2) - 4}{2(D-1)(D+2)} \in \left[1/2, 5/9 \right].
	}
	The two experimental observations together with this limit value suggest the heuristic that \textbf{a single additional coupling block typically reduces the non-Standardness $\Ss$ by a factor of approximately 50\%} if previous blocks are left unchanged, and possibly faster if cooperations between blocks are taken into account.

	\section{Conclusion}
	\label{sec:conclusion}
	
	To the best of our knowledge, this is the first work on coupling-based normalizing flows that provides a quantitative convergence analysis in terms of the KL divergence.
	Specifically, a minimal convergence rate is established at which flows whiten the covariance of the input data under this strong measure of discrepancy of probability distributions. Splitting the loss into the non-Gaussianity $\Gg$ and the non-Standardness $\Ss$, we show that this whitening is a necessary condition for the flow to converge and give explicit guarantees.
	Our derivations suggest the rule of thumb that $\Ss$ can typically be reduced by about 50\% per coupling block.
	
	Our central idea was to separate out the contribution a single isolated block can make to reduce the loss, arguing that end-to-end training can only outperform the concatenation of isolated blocks.
	
	Having separated the tasks a normalizing flow has to solve, and having explained how the non-Standardness $\Ss$ can be reduced to zero, we hope that explaining also the entire convergence of $\Ll = \Gg + \Ss$ with respect to the KL divergence is within reach. In particular, our theory did not yet explore how the non-linear part of each coupling block reduces the non-Gaussianity $\Gg$.

	\begin{ack}
		\affExcellence{work}
		It is also supported by the Vector Stiftung in the project TRINN (P2019-0092).
		
		We thank our colleagues (in alphabetical order) Marcel Meyer, Jens Müller, Robert Schmier and Peter Sorrenson for their help and fruitful discussions.
	\end{ack}

	\bibliographystyle{unsrt}
	\bibliography{bibliography}

\begin{thebibliography}{10}

\bibitem{teshima_coupling-based_2020}
Takeshi Teshima, Isao Ishikawa, Koichi Tojo, Kenta Oono, Masahiro Ikeda, and
  Masashi Sugiyama.
\newblock Coupling-based {{Invertible Neural Networks Are Universal
  Diffeomorphism Approximators}}.
\newblock In {\em Advances in {{Neural Information Processing Systems}}}, 2020.

\bibitem{kobyzev_normalizing_2021}
Ivan Kobyzev, Simon~J.D. Prince, and Marcus~A. Brubaker.
\newblock Normalizing {{Flows}}: {{An Introduction}} and {{Review}} of
  {{Current Methods}}.
\newblock {\em IEEE Transactions on Pattern Analysis and Machine Intelligence},
  43(11):3964--3979, 2021.

\bibitem{papamakarios_normalizing_2021}
George Papamakarios, Eric Nalisnick, Danilo~Jimenez Rezende, Shakir Mohamed,
  and Balaji Lakshminarayanan.
\newblock Normalizing {{Flows}} for {{Probabilistic Modeling}} and
  {{Inference}}.
\newblock {\em Journal of Machine Learning Research}, 57(22):1--64, 2021.

\bibitem{dinh_nice_2015}
Laurent Dinh, David Krueger, and Yoshua Bengio.
\newblock {{NICE}}: {{Non-linear Independent Components Estimation}}.
\newblock In {\em International {{Conference}} on {{Learning Representations}},
  {{Workshop Track}}}, 2015.

\bibitem{dinh_density_2017}
Laurent Dinh, Jascha {Sohl-Dickstein}, and Samy Bengio.
\newblock Density estimation using {{Real NVP}}.
\newblock In {\em International {{Conference}} on {{Learning
  Representations}}}, 2017.

\bibitem{kingma_glow_2018}
Durk~P Kingma and Prafulla Dhariwal.
\newblock Glow: {{Generative}} flow with invertible 1x1 convolutions.
\newblock In {\em Advances in {{Neural Information Processing Systems}}}, 2018.

\bibitem{mackowiak_generative_2021}
Radek Mackowiak, Lynton Ardizzone, Ullrich Kothe, and Carsten Rother.
\newblock Generative classifiers as a basis for trustworthy image
  classification.
\newblock In {\em {{IEEE}} / {{CVF Computer Vision}} and {{Pattern Recognition
  Conference}}}, 2021.

\bibitem{ardizzone_training_2020}
Lynton Ardizzone, Radek Mackowiak, Carsten Rother, and Ullrich K{\"o}the.
\newblock Training {{Normalizing Flows}} with the {{Information Bottleneck}}
  for {{Competitive Generative Classification}}.
\newblock In {\em Advances in {{Neural Information Processing Systems}}}, 2020.

\bibitem{ardizzone_analyzing_2018}
Lynton Ardizzone, Jakob Kruse, Carsten Rother, and Ullrich K{\"o}the.
\newblock Analyzing {{Inverse Problems}} with {{Invertible Neural Networks}}.
\newblock In {\em International {{Conference}} on {{Learning
  Representations}}}, 2018.

\bibitem{noevercastelos_model_2022}
Pablo Noever-Castelos, Lynton Ardizzone, and Claudio Balzani.
\newblock Model updating of wind turbine blade cross sections with invertible
  neural networks.
\newblock {\em Wind Energy}, 25(3):573--599, 2022.

\bibitem{muller_learning_2021}
Jens M{\"u}ller, Robert Schmier, Lynton Ardizzone, Carsten Rother, and Ullrich
  K{\"o}the.
\newblock Learning {{Robust Models Using The Principle}} of {{Independent
  Causal Mechanisms}}.
\newblock In {\em German {{Conference}} on {{Pattern Recognition}}}, 2021.

\bibitem{noe_boltzmann_2019}
Frank No{\'e}, Simon Olsson, Jonas K{\"o}hler, and Hao Wu.
\newblock Boltzmann generators: {{Sampling}} equilibrium states of many-body
  systems with deep learning.
\newblock {\em Science}, 365(6457):eaaw1147, 2019.

\bibitem{adler_uncertainty-aware_2019}
Tim~J. Adler, Lynton Ardizzone, Anant Vemuri, Leonardo Ayala, Janek Gr{\"o}hl,
  Thomas Kirchner, Sebastian Wirkert, Jakob Kruse, Carsten Rother, Ullrich
  K{\"o}the, and Lena {Maier-Hein}.
\newblock Uncertainty-aware performance assessment of optical imaging
  modalities with invertible neural networks.
\newblock {\em International Journal of Computer Assisted Radiology and
  Surgery}, 14(6):997--1007, 2019.

\bibitem{draxler_characterizing_2020}
Felix Draxler, Jonathan Schwarz, Christoph Schn{\"o}rr, and Ullrich K{\"o}the.
\newblock Characterizing the {{Role}} of a {{Single Coupling Layer}} in
  {{Affine Normalizing Flows}}.
\newblock In {\em German {{Conference}} on {{Pattern Recognition}}}, 2020.

\bibitem{koehler_representational_2021}
Frederic Koehler, Viraj Mehta, and Andrej Risteski.
\newblock Representational aspects of depth and conditioning in normalizing
  flows.
\newblock In {\em International {{Conference}} on {{Machine Learning}}}, 2021.

\bibitem{gibbs_choosing_2002}
Alison~L. Gibbs and Francis~Edward Su.
\newblock On {{Choosing}} and {{Bounding Probability Metrics}}.
\newblock {\em International Statistical Review / Revue Internationale de
  Statistique}, 70(3):419--435, 2002.

\bibitem{huber_robust_2009}
P.~J. Huber and E.~M. Ronchetti.
\newblock {\em Robust {{Statistics}}}.
\newblock {John Wiley \& Sons, Inc.}, second edition, 2009.

\bibitem{ho_flow_2019}
Jonathan Ho, Xi~Chen, Aravind Srinivas, Yan Duan, and Pieter Abbeel.
\newblock Flow++: {{Improving Flow-Based Generative Models}} with {{Variational
  Dequantization}} and {{Architecture Design}}.
\newblock In {\em International {{Conference}} on {{Machine Learning}}}, 2019.

\bibitem{ziegler_latent_2019}
Zachary Ziegler and Alexander Rush.
\newblock Latent {{Normalizing Flows}} for {{Discrete Sequences}}.
\newblock In {\em International {{Conference}} on {{Machine Learning}}}, 2019.

\bibitem{muller_neural_2019}
Thomas M{\"u}ller, Brian Mcwilliams, Fabrice Rousselle, Markus Gross, and Jan
  Nov{\'a}k.
\newblock Neural {{Importance Sampling}}.
\newblock {\em ACM Transactions on Graphics}, 38(5):1--19, 2019.

\bibitem{durkan_cubic-spline_2019}
Conor Durkan, Artur Bekasov, Iain Murray, and George Papamakarios.
\newblock Cubic-{{Spline Flows}}.
\newblock In {\em International {{Conference}} on {{Machine Learning}},
  {{Workshop Track}}}, 2019.

\bibitem{durkan_neural_2019}
Conor Durkan, Artur Bekasov, Iain Murray, and George Papamakarios.
\newblock Neural {{Spline Flows}}.
\newblock In {\em Advances in {{Neural Information Processing Systems}}}, 2019.

\bibitem{huang_augmented_2020}
Chin-Wei Huang, Laurent Dinh, and Aaron Courville.
\newblock Augmented {{Normalizing Flows}}: {{Bridging}} the {{Gap Between
  Generative Flows}} and {{Latent Variable Models}}.
\newblock In {\em International {{Conference}} on {{Learning Representations}},
  {{Workshop Track}}}, 2020.

\bibitem{wehenkel_unconstrained_2019}
Antoine Wehenkel and Gilles Louppe.
\newblock Unconstrained {{Monotonic Neural Networks}}.
\newblock In {\em Advances in {{Neural Information Processing Systems}}}, 2019.

\bibitem{cardoso_dependence_2003}
Jean-Fran{\c c}ois Cardoso.
\newblock Dependence, {{Correlation}} and {{Gaussianity}} in {{Independent
  Component Analysis}}.
\newblock {\em Journal of Machine Learning Research}, 4:1177--1203, 2003.

\bibitem{huang_neural_2018}
Chin-Wei Huang, David Krueger, Alexandre Lacoste, and Aaron Courville.
\newblock Neural {{Autoregressive Flows}}.
\newblock In {\em International {{Conference}} on {{Machine Learning}}}, 2018.

\bibitem{jaini_sum--squares_2019}
Priyank Jaini, Kira~A. Selby, and Yaoliang Yu.
\newblock Sum-of-{{Squares Polynomial Flow}}.
\newblock In {\em International {{Conference}} on {{Machine Learning}}}, 2019.

\bibitem{lee_universal_2021}
Holden Lee, Chirag Pabbaraju, Anish Sevekari, and Andrej Risteski.
\newblock Universal {{Approximation}} for {{Log-concave Distributions}} using
  {{Well-conditioned Normalizing Flows}}.
\newblock In {\em International {{Conference}} on {{Machine Learning}},
  {{Workshop Track}}}, 2021.

\bibitem{jaini_tails_2020}
Priyank Jaini, Ivan Kobyzev, Yaoliang Yu, and Marcus Brubaker.
\newblock Tails of {{Lipschitz Triangular Flows}}.
\newblock In {\em International {{Conference}} on {{Machine Learning}}}, 2020.

\bibitem{ishikawa_universal_2022}
Isao Ishikawa, Takeshi Teshima, Koichi Tojo, Kenta Oono, Masahiro Ikeda, and
  Masashi Sugiyama.
\newblock Universal approximation property of invertible neural networks.
\newblock {\em arXiv preprint arXiv:2204.07415}, 2022.

\bibitem{zhang_approximation_2020}
Han Zhang, Xi~Gao, Jacob Unterman, and Tom Arodz.
\newblock Approximation {{Capabilities}} of {{Neural ODEs}} and {{Invertible
  Residual Networks}}.
\newblock In {\em International {{Conference}} on {{Machine Learning}}}, 2020.

\bibitem{teshima_universal_2020}
Takeshi Teshima, Koichi Tojo, Masahiro Ikeda, Isao Ishikawa, and Kenta Oono.
\newblock Universal {{Approximation Property}} of {{Neural Ordinary
  Differential Equations}}.
\newblock In {\em Advances in {{Neural Information Processing Systems}},
  {{Workshop Track}}}, 2020.

\bibitem{ardizzone_framework_2018}
Lynton Ardizzone, Till Bungert, Felix Draxler, Ullrich K{\"o}the, Jakob Kruse,
  Robert Schmier, and Peter Sorrenson.
\newblock Framework for {{Easily Invertible Architectures}} ({{FrEIA}}).
\newblock https://github.com/VLL-HD/FrEIA, 2018.

\bibitem{amari_methods_2007}
Shun-ichi Amari and Hiroshi Nagaoka.
\newblock {\em Methods of {{Information Geometry}}}.
\newblock {American Mathematical Society}, 2007.

\bibitem{cohen_emnist_2017}
Gregory Cohen, Saeed Afshar, Jonathan Tapson, and Andr{\'e} {van Schaik}.
\newblock {{EMNIST}}: An extension of {{MNIST}} to handwritten letters.
\newblock {\em arXiv preprint arXiv:1702.05373}, 2017.

\bibitem{horn_matrix_2012}
Roger~A Horn and Charles~R Johnson.
\newblock {\em Matrix Analysis}.
\newblock {Cambridge University Press}, 2012.

\bibitem{olshanski_projections_2013}
Grigori Olshanski.
\newblock Projections of orbital measures, {{Gelfand-Tsetlin}} polytopes, and
  splines.
\newblock {\em Journal of Lie Theory}, 23(4):1011--1022, 2013.

\bibitem{faraut_rayleigh_2015}
Jacques Faraut.
\newblock Rayleigh theorem, projection of orbital measures and spline
  functions.
\newblock {\em Advances in Pure and Applied Mathematics}, 6(4):261--283, 2015.

\bibitem{gorin_integrals_2002}
T.~Gorin.
\newblock Integrals of monomials over the orthogonal group.
\newblock {\em Journal of Mathematical Physics}, 43(6):3342--3351, 2002.

\bibitem{cartwright_refinement_1978}
D.~I. Cartwright and M.~J. Field.
\newblock A refinement of the arithmetic mean-geometric mean inequality.
\newblock {\em Proceedings of the American Mathematical Society}, 71(1):36--38,
  1978.

\bibitem{krishnaiah_recent_1976}
P.R. Krishnaiah.
\newblock Some recent developments on complex multivariate distributions.
\newblock {\em Journal of Multivariate Analysis}, 6(1):1--30, 1976.

\end{thebibliography}

	\section*{Checklist}
	
	\begin{enumerate}

		\item For all authors...
		\begin{enumerate}
			\item Do the main claims made in the abstract and introduction accurately reflect the paper's contributions and scope?
			\answerYes{}
			\item Did you describe the limitations of your work?
			\answerYes{see \cref{sec:conclusion}.}
			\item Did you discuss any potential negative societal impacts of your work?
			\answerYes{Generative modeling, which this paper aims to improve, can be used in harmful ways to generate Deepfakes for disinformation.}
			\item Have you read the ethics review guidelines and ensured that your paper conforms to them?
			\answerYes{}
		\end{enumerate}

		\item If you are including theoretical results...
		\begin{enumerate}
			\item Did you state the full set of assumptions of all theoretical results?
			\answerYes{}
			\item Did you include complete proofs of all theoretical results?
			\answerYes{Full proofs are in \cref{app:proofs}.}
		\end{enumerate}

		\item If you ran experiments...
		\begin{enumerate}
			\item Did you include the code, data, and instructions needed to reproduce the main experimental results (either in the supplemental material or as a URL)?
			\answerYes{See \cref{app:experiment-details}.}
			\item Did you specify all the training details (e.g., data splits, hyperparameters, how they were chosen)?
			\answerYes{See \cref{app:experiment-details}.}
			\item Did you report error bars (e.g., with respect to the random seed after running experiments multiple times)?
			\answerYes{}
			\item Did you include the total amount of compute and the type of resources used (e.g., type of GPUs, internal cluster, or cloud provider)?
			\answerYes{See \cref{app:experiment-details}.}
		\end{enumerate}

		\item If you are using existing assets (e.g., code, data, models) or curating/releasing new assets...
		\begin{enumerate}
			\item If your work uses existing assets, did you cite the creators?
			\answerYes{}
			\item Did you mention the license of the assets?
			\answerNA{}
			\item Did you include any new assets either in the supplemental material or as a URL?
			\answerNA{}
			\item Did you discuss whether and how consent was obtained from people whose data you're using/curating?
			\answerNA{}
			\item Did you discuss whether the data you are using/curating contains personally identifiable information or offensive content?
			\answerNA{}
		\end{enumerate}

		\item If you used crowdsourcing or conducted research with human subjects...
		\begin{enumerate}
			\item Did you include the full text of instructions given to participants and screenshots, if applicable?
			\answerNA{}
			\item Did you describe any potential participant risks, with links to Institutional Review Board (IRB) approvals, if applicable?
			\answerNA{}
			\item Did you include the estimated hourly wage paid to participants and the total amount spent on participant compensation?
			\answerNA{}
		\end{enumerate}

	\end{enumerate}

	\clearpage
	\appendix
	
	{\Large \bf \thetitle: Appendix}
	
	\section{Details on Experiments}
	\label{app:experiment-details}
	
	All experiments were carried out on a single AMD Ryzen 7 3700X 8-Core Processor together with a NVIDIA GeForce RTX 2080. At \url{https://github.com/VLL-HD/Coupling-Flow-Bound} we have made available the code for all experiments.
	
	\subsection{Deep network on EMNIST}
	\label{app:exp-split-loss-convergence}
	
	In this experiment, we estimate the capability of affine normalizing flows in reducing the non-Standardness $\Ss$ (see \cref{eq:non-standardness}) as a function of the number of layers.
	We compare this to the theoretic bound in \cref{thm:single-layer-precise}.
	
	To this end, we train affine normalizing flows on EMNIST digits \cite{cohen_emnist_2017}. We leverage a 20-block Glow architecture as described in \cref{sec:fundamentals}. To measure the effect of depth $L = 1, \dots, 20$ of the flow on $\Ss$, we truncate the architecture to $L$ layers.
	
	The architecture is built as follows: We start by down-sampling the input image from gray scale $1 \times 28 \times 28$ to $4 \times 14 \times 14$: Each group of four neighboring pixels is reordered into one pixel with four times the channels in a checkerboard-like pattern.
	Then, eight convolutional coupling blocks with 16 hidden channels are applied.
	They are followed by another down-sampling to $16 \times 7 \times 7$ and eight convolutional coupling blocks with 32 hidden channels.
	After flattening the input, four fully-connected affine coupling blocks are added with 392 hidden dimensions.
	
	When truncating this architecture, we remove blocks \textit{from the left}. For example, when one block is present ($L=1$), only the last coupling block with the fully connected subnetwork remains. This makes the theory in this paper applicable, as \cref{prop:single-layer-whitening} assumes that the neural networks $s$ and $t$ are fully connected (otherwise, the whitening operation cannot always be represented).
	
	We train each depth from scratch for 300 epochs using Adam with a learning rate of $3 \cdot 10^{-3}$ which is reduced by a factor of $.1$ after 100 and 200 epochs. The batch size is 240 which implies 1000 iterations per epoch.
	
	Given the 20 networks of different, we split the loss into the non-Gaussianity $\Gg$ and non-Standardness $\Ss$ as suggested by \cref{prop:pythagorean-identity}. To do so, we compute the empirical covariances $\Sigma_l$ of 10'000 test samples pushed through each flow.
	
	To relate this experiment to our theory, we take the covariance matrices obtained using the trained flows $\Sigma_l$ and apply \cref{thm:single-layer-precise} on each. This yields an upper bound on the expected non-Standardness after training another network with depth increased by one. In other words, given $\Sigma_l$, \cref{thm:single-layer-precise} predicts an upper bound on the expected $\EE_{Q_{l+1} \sim p(Q)}[\Ss(\Sigma_{l+1})]$.
	We observe that the experimentally observed non-Standardness behaves similar to the upper bound.
	We do not expect this to be the case in general: There might be a trade-off between reducing $\Ss$ and $\Gg$, so the optimization might actually decide for reducing $\Gg$ at the cost of increasing $\Ss$. We only show that with the covariance in \cref{prop:single-layer-whitening}, $\Gg$ does not increase. On the other end, an affine flow might actually be able to reduce the non-Standardness stronger than predicted, as our theory does not take potentially useful cooperation between layers into account.
	
	We average all results over eight runs per depth (i.e.~$8 \cdot 20 = 160$ networks in total). Despite different random orientations in each run, the results are very concentrated: We find error bars so small that they are not visible in \cref{fig:split-loss-convergence}.
	
	We observe that after three blocks, the non-Standardness is close to zero. Here, the flow consists of one convolutional and two fully connected coupling blocks. This justifies the use of convolutional networks for $s$ and $t$; it could have happened that the convolutional layer, being limited to reduce correlations between pixels only locally, does not reduce the non-Standardness as strongly as predicted. This justifies the use of convolutional layers for the remaining blocks.
	
	\cref{fig:split-loss-convergence-samples} shows samples from one networks trained for each depth (sampling temperature $T = 0.7$).

	\begin{figure}
		\centering
		\includegraphics[width=\linewidth]{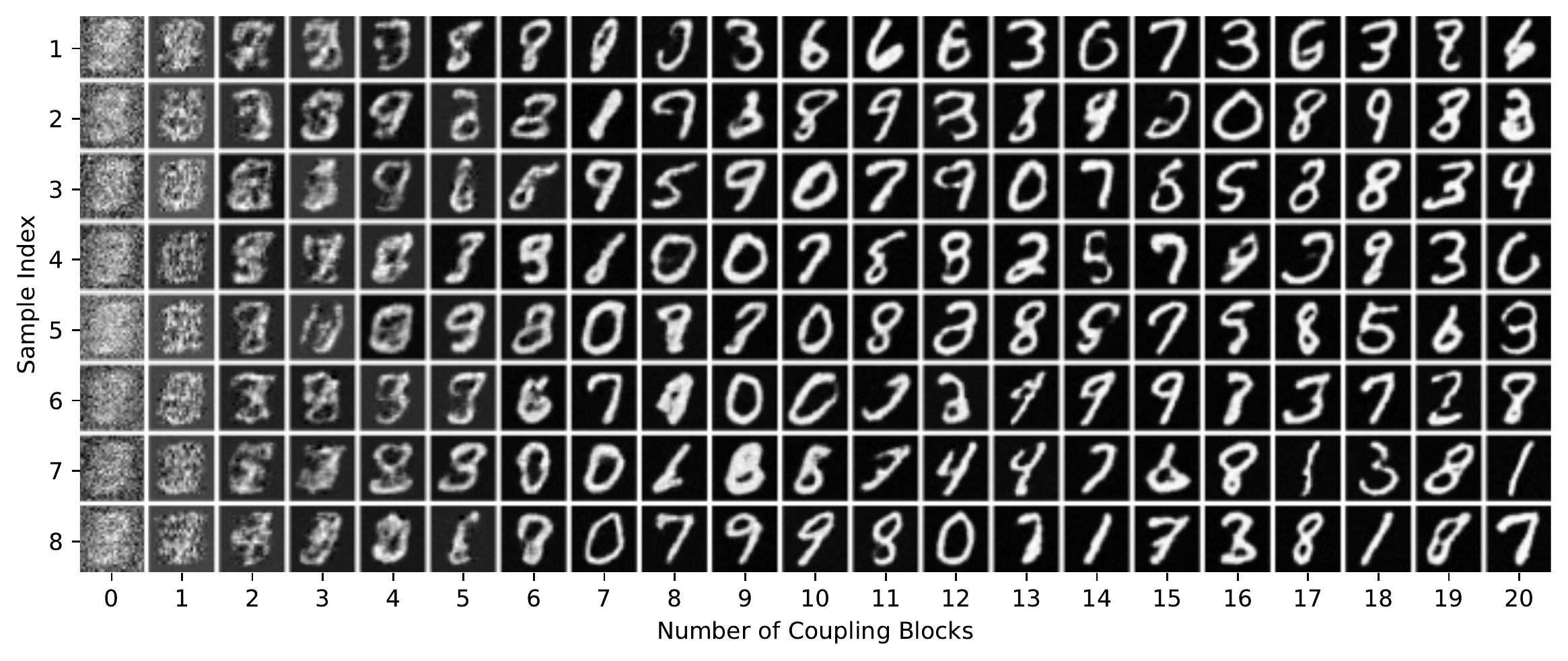}
		\caption{Samples generated by the affine coupling flows with varying depth trained for \cref{fig:split-loss-convergence}. Each column shows eight samples by a network of the corresponding depth.}
		\label{fig:split-loss-convergence-samples}
	\end{figure}

	\subsection{Single layer on EMNIST digit covariance}
	\label{app:exp-covariance}
	
	This experiment confirms that the covariance minimizing the non-Standardness $\Ss(\Sigma_1)$ after a single layer is correctly predicted by \cref{prop:single-layer-whitening}.
	
	To get an interesting covariance matrix, we flatten the EMNIST digits training data and compute its covariance matrix $\Sigma$, as depicted in \cref{fig:emnist-single-layer-covariance} on the left.
	We then sample a multivariate Gaussian with this covariance matrix and train a single affine coupling layer. As the data is Gaussian, we can train with the standard maximum likelihood loss as it is equivalent to the non-Standardness $\Ss$. We use Adam with a learning rate $0.05$, a batch size of $2048$ and train for $512$ iterations.

	\subsection{Single block on toy data}
	\label{app:exp-single-layer}
	
	This experiment explores the average non-Standardness that can be reached by a single layer by modifying the covariance as given by \cref{prop:single-layer-whitening}. It also aims to confirm the upper bounds shown in \cref{thm:single-layer-precise,thm:single-layer-guarantee}.
	
	We build a family of toy covariance matrices to work with. As the data will be randomly rotated anyway, we choose the matrices to be diagonal w.l.o.g., i.e.~we directly design the eigenvalue spectrum of each covariance. We prescribe this spectrum by a continuous function $\mu: [0, 1] \to \RR_+$. It is chosen bijective to ensure that the eigenvalues are distinct. We then define the eigenvalues as follows:
	\eql{
		\mu_i = \mu\big(\tfrac{i}{D-1}\big) \quad i = 0, \dots, D-1.
	}
	With this approach, we can systematically modify eigenvalue/noise spectra.
	
	Given a vector of eigenvalues $(\mu_i)_i$, we need to ensure that its mean is one. We do so by dividing by the mean:
	\eql{
		\nu_i := \frac{\mu_i}{\sum_{i=1}^D \mu_i / D}.
	}
	
	Finally, we add a scaling parameter $s > 0$ that defines how close the spectrum is to the identity:
	\eql{
		\lambda_i^{(s)} := (\nu_i - 1) \cdot s + 1.
	}
	The non-Standardness strictly decreases as $s$ comes closer to $0$.
	As the eigenvalues always have to be positive, $s$ must be chosen smaller than: \eql{
		s < \frac{1}{1 - \lambda_{\min}} =: s_{\max}.
	}
	
	Given a spectrum $\lambda_i^{(s)}$, we build a diagonal covariance matrix
	\eql{
		\Sigma = \Diag(\lambda_i^{(s)})_{i=1}^D.
	}
	
	For the experimental baseline, we sample $N_\text{rot}$ orthogonal and unitary rotation matrices $Q \sim p(Q)$ from the corresponding Haar measure over $O(D)$ and $U(D)$. We employ \texttt{scipy.stats.ortho\_group} respectively \texttt{scipy.stats.unitary\_group}.
	This yields the covariance of the rotated data:
	\eql{
		\Sigma_0 = Q \Sigma Q^T.
	}
	(Or, $Q^*$ instead of $Q^T$ if we average over unitary matrices).
	
	We do not train affine coupling layers directly. Instead, we make use of the single layer output covariance $\Sigma_1$ from \cref{prop:single-layer-whitening}.
	
	We choose the following numerical values for $s$: To get a close look at the case where $s \to 0$ and correspondingly $\Ss \to 0$, we take $N_\text{scale} / 3$ geometrically spaced points in $[0.001s_{\max}, 0.9s_{\max}]$. To accurately capture the off-minimum behavior, we add to that $2N_\text{scale} / 3$ linearly spaced points between $[0.9s_{\max}, .999s_{\max}]$.
	
	We choose $N_\text{rot} = 100$ and $N_\text{scale} = 150$ for all experiments.
	To save computational resources, we re-use the rotations sampled for the first scale for the remaining.
	
	In \cref{fig:single-layer-experiment}, we showed the experiment for the parameterized spectra $\mu(x) = x^2$ and $\mu(x) = x^8$. For both, \cref{fig:bound-comparison-scaling-example} shows which rescaled eigenvalue spectra were used in this experiment.
	In \cref{fig:bound-comparison-more}, we give examples for more spectra.
	
	\begin{figure}
		\centering
		\includegraphics[width=\linewidth]{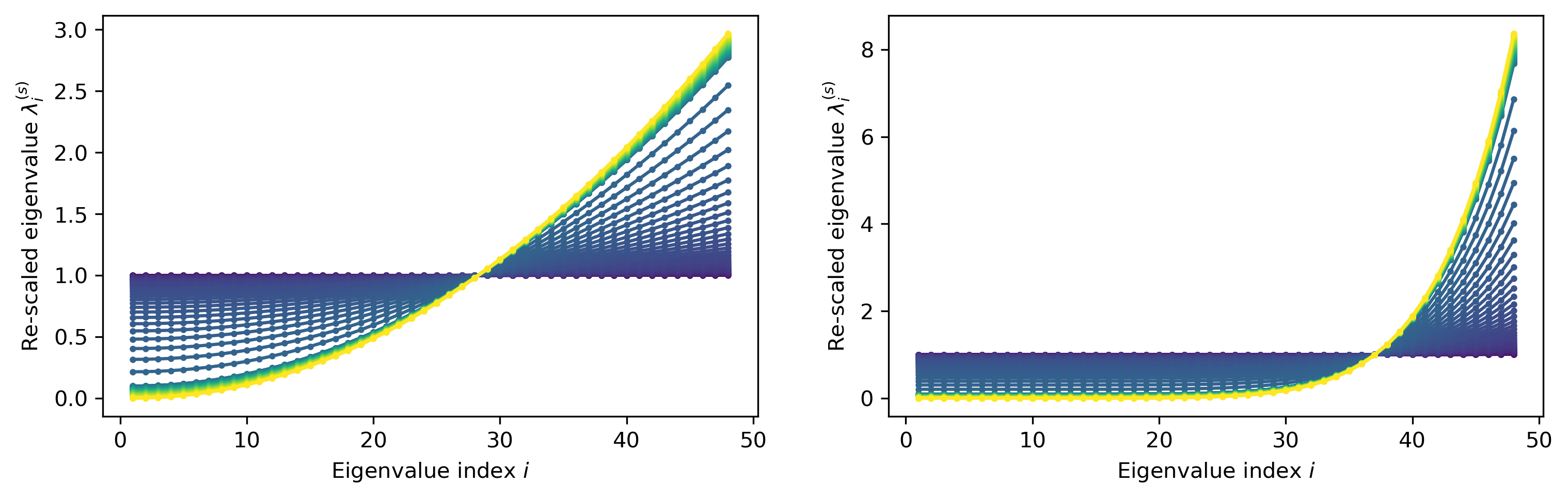}
		\caption{Eigenvalue spectra used for experiment depicted in \cref{fig:single-layer-experiment}. \textit{(Left)} $\mu(x) = x^2$ and \textit{(right)} $\mu(x) = x^8$. Each line corresponds to a different scaling $s$.}
		\label{fig:bound-comparison-scaling-example}
	\end{figure}
	
	\begin{figure}
		\centering
		\includegraphics[width=\linewidth]{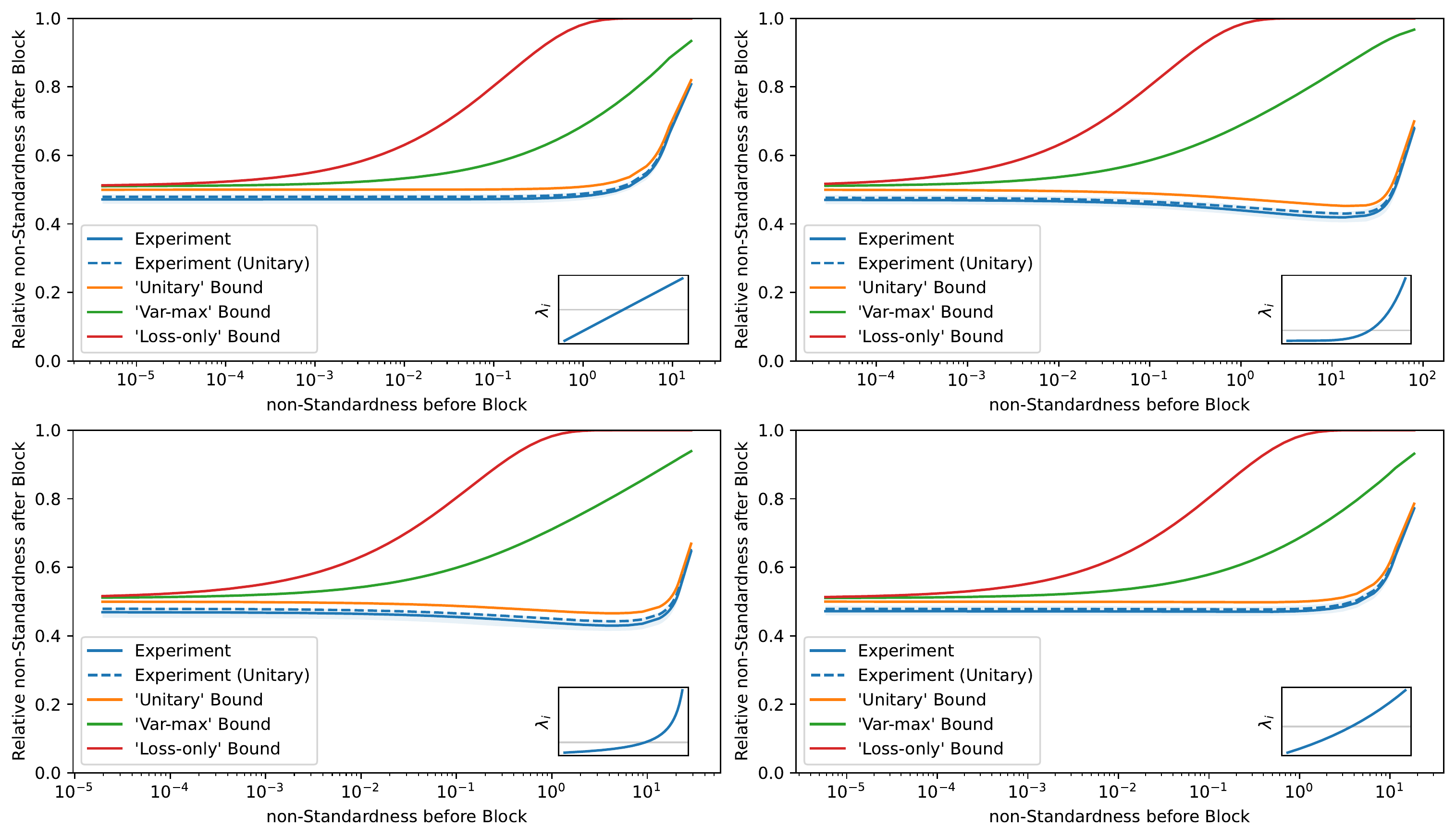}
		\caption{Examples for single layer relative non-Standardness on more eigenvalue spectra: \textit{(Top left)} $\mu(x) = x$, \textit{(top right)} $\mu(x) = x^5$, \textit{(bottom left)} $\mu(x) = \frac{1}{1.1 - x}$, \textit{(bottom right)} $\mu(x) = \exp(x)$. More details in \cref{fig:single-layer-experiment}.}
		\label{fig:bound-comparison-more}
	\end{figure}

	\subsection{Layer-wise training on toy data}
	\label{app:exp-multi-layer}
	
	In this experiment, we track the non-Standardness as layers are added, check \cref{thm:deep-network-guarantee}, and compare the convergence rate \cref{eq:convergence-rate} to the other bounds in \cref{thm:single-layer-precise,thm:single-layer-guarantee}.
	
	This experiment uses a different set of toy covariances than \cref{app:exp-single-layer}. This time, we build a plethora of different initial covariances (eigenvalue spectra) that include extreme cases:
	\begin{enumerate}
		\item All eigenvalues are set to 1 except for one that is varying.
		\item All eigenvalues have the same value that is varied, except for one that is set to 1.
		\item Split the eigenvalues into two halves, respectively having the same value: The first half is varied, the second half assume the inverse value of the first half.
		\item Randomly sample all eigenvalues uniformly from $[0, 2]$.
		\item Randomly sample all eigenvalues between such that the logarithm is uniformly distributed over $[1/v_{\max}, v_{\max}]$.
	\end{enumerate}
	Whenever we vary the value of any eigenvalue, we take $N_\text{vary}$ scalars geometrically spaced between $1/v_{\max}$ and $v_{\max}$. We exclude the case where all eigenvalues are equal to $1$, implying a non-Standardness of $0$.
	
	To fulfill \cref{as:non-degenerate-eigenvalues}, we do not actually assign the same value to eigenvalues, but multiply them each with a linearly increasing factor in $(1 - \epsilon, 1 + \epsilon)$. We do not observe any change in experimental behavior from this, but this allows us evaluating \cref{thm:single-layer-precise}.
	
	Given the dataset of eigenvalues, we build diagonal covariances, repeatedly apply random rotations and the whitening procedure in \cref{prop:single-layer-whitening}. The details are given in \cref{alg:multi-layer-experiment}.
	For each input covariance, we obtain $N_\text{rot}$ trajectories of covariances.
	
	\begin{algorithm}
		\caption{Multi-layer non-Standardness experiment}
		\label{alg:multi-layer-experiment}
		\begin{algorithmic}
			\REQUIRE Input covariances $\Sigma^{(i)}, i=1, \dots, N$, number of rotations $N_\text{rot}$, number of layers~$L$.
			\STATE $\Sigma_0^{(i, r)} \gets \Sigma^{(i)}$ \textbf{for} $ i = 1, \dots, N; r = 1, \dots, N_\text{rot}$ \COMMENT{Copy each input covariance $N_\text{rot}$ times}
			\FOR{$l = 1, \dots, L$}
			\STATE $Q^{(r)} \sim O(D) $ \textbf{for} $ r = 1, \dots, N_\text{rot}$ \COMMENT{Sample rotations}
			\STATE $\Sigma_{l-1}^{(i, r)}{}'  \gets Q^{(r)} \Sigma_{l-1}^{(i, r)} (Q^{(r)})^\transy$ \textbf{for} $ i = 1, \dots, N; r = 1, \dots, N_\text{rot}$ \COMMENT{Apply rotations}
			\STATE $\Sigma_{l}^{(i, r)} \gets $ \cref{prop:single-layer-whitening} on $\Sigma_{l-1}^{(i, r)}{}'$ \COMMENT{Apply whitening step}
			\ENDFOR
			\ENSURE $\{ \Sigma_{l}^{(i, r)} \}_{l=1}^L $ \textbf{for} $ i = 1, \dots, N; r = 1, \dots, N_\text{rot}$.
		\end{algorithmic}
	\end{algorithm}
	
	We evaluate the non-Standardness of each covariance matrix $\Ss(\Sigma_l^{(i, r)})$ and average over rotations. This is shown in the left plot in \cref{fig:multi-layer}.
	
	In addition, we compute the relative non-Standardness between layers:
	\eql{
		\Ss(\Sigma^{(i, r)}_{l}) / \Ss(\Sigma^{(i, r)}_{l - 1}).
	}
	This quantity is averaged over rotations $r$ and instances $i$. It is depicted together with the corresponding interquartile range (IQR) in the right half of \cref{fig:multi-layer}.
	
	We also evaluate each of the bounds on $\EE_Q[\Ss(\Sigma^{(i, r)}_{l+1})]$ in \cref{thm:single-layer-precise,thm:single-layer-guarantee} given $\Sigma^{(i, r)}_{l}$ and divide it by the non-Standardness $\Ss(\Sigma^{(i, r)}_{l})$. Again, we average over rotations and iterations.
	
	Averaging over rotations might be counter-intuitive as the bounds explicitly calculate a value that is an average: It is necessary because for each initial covariance, we have $N_\text{rot}$ trajectories with different convergence behavior.
	Let us make this explicit. Denote by $B$ any of the bounds in \cref{thm:single-layer-precise,thm:single-layer-guarantee}:
	\eql{
		\frac{\EE_{Q_{l+1} \sim p(Q)}[\Ss(\Sigma_{l+1}^{(i, r)}(Q_{l+1}))]}{\Ss(\Sigma_{l}^{(i, r)})} \leq \frac{B(\Sigma_{l}^{(i, r)})}{\Ss(\Sigma_{l}^{(i, r)})}.
	}
	We average the quantity on the right over the different trajectories, i.e.~over $i, r$. It only depends on the covariances in the $l$th layer in contrast to the expression on the left.
	
	As hyperparameters to the experiment, we choose $D=48, L=32, N_\text{vary}=128, N_\text{rot} = 32, v_{\max} = 1000, \epsilon = 10^{-5}$. We stop a trajectory once the non-Standardness falls below $10^{-9}$ to avoid numerical instabilities.

	\section{Detailed proofs}
	\label{app:proofs}

	\subsection{Proof of \cref{prop:pythagorean-identity}}
	\label{app:pythagorean-identity-proof}
	
	The explicit form of the non-Standardness is given by the KL divergence between the two multivariate Gaussians $\Nn(\mean, \Sigma)$ and $\Nn(0, I)$:
	\eqall{&
		\KL{\Nn(\mean, \Sigma)}{\Nn(0, I)} \\&
		= \EE_{x \sim \Nn(\mean, \Sigma)}[\log \Nn(x; \mean, \Sigma) - \log \Nn(x; 0, I)] \\&
		= \EE_{x \sim \Nn(\mean, \Sigma)}[-\tfrac12 \log \det(2 \pi \Sigma) - \tfrac12 (x - \mean)^\transy \Sigma^{-1} (x - \mean) + \tfrac12 \log \det(2 \pi I_D) + \tfrac12 \norm{x}^2] \\&
		= \tfrac12 \left( -\log \det(\Sigma) + \EE_{x \sim \Nn(\mean, \Sigma)}[ - \tfrac12 (x - \mean)^\transy \Sigma^{-1} (x - \mean) + \tfrac12 \norm{x}^2] \right) \\&
		= \tfrac12 ( \norm{\mean}^2 + \tr\Sigma - D - \log\det\Sigma ).
	}{\label{eq:kl-gaussians}}
	
	\begin{proof}
		We start with the first decomposition in \cref{eq:gauss-standard-split}.
		\eqal{&
			\KL{p}{\Nn(0, I)} - \KL{p}{\Nn(\mean, \Sigma)} \\&
			= \EE_{x \sim p(x)}[\log p(x) - \log \Nn(x; 0, I) - \log p(x) + \log \Nn(x; \mean, \Sigma)] \\&
			= \EE_{x \sim p(x)}[- \log \Nn(x; 0, I) + \log \Nn(x; \mean, \Sigma)] \\&
			= \frac12 \EE_{x \sim p(x)}[D\log(2 \pi) + \norm{x}^2 - D\log(2 \pi) - \log\det\Sigma - (x - \mean)^\transy \Sigma^{-1} (x - \mean)] \\&
			= \frac12 \EE_{x \sim p(x)}[\norm{x}^2 - \log\det\Sigma - (x - \mean)^\transy \Sigma^{-1} (x - \mean)] \\&
			= \frac12 \left(\EE_{x \sim p(x)}[\norm{x}^2] - \log\det\Sigma - \EE_{x \sim p(x)}[(x - \mean)^\transy \Sigma^{-1} (x - \mean)] \right).
		}
		
		The open expectation values read:
		\eql{
			\EE_{x \sim p(x)}[\norm{x}^2]
			= \EE_{x \sim p(x)}[\sum_{i=1}^D x_i^2] 
			= \sum_{i=1}^D \EE_{x \sim p(x)}[x_i^2]
			= \sum_{i=1}^D (\mean_i^2 + \Sigma_{ii})
			= \norm{\mean}^2 + \tr\Sigma,
		}
		and interpreting $(x - \mean)$ as a $\RR^{D \times 1}$ matrix, we can re-write using the trace. Then use the cyclic property and linearity of the trace:
		\eqal{
			\EE_{x \sim p(x)}[(x - \mean)^\transy \Sigma^{-1} (x - \mean)]  &
			= \EE_{x \sim p(x)}[\tr((x - \mean)^\transy \Sigma^{-1} (x - \mean))]  \\&
			= \EE_{x \sim p(x)}[\tr((x - \mean) (x - \mean)^\transy \Sigma^{-1})]  \\&
			= \tr(\EE_{x \sim p(x)}[(x - \mean) (x - \mean)^\transy] \Sigma^{-1})  \\&
			= \tr(\Sigma \Sigma^{-1})  \\&
			= D.
		}
		
		Inserting the two expectation values, we identify:
		\eqal{
			\KL{p}{\Nn(0, I)} - \KL{p}{\Nn(\mean, \Sigma)}  &
			= \frac12 \left(\norm{\mean}^2 + \tr\Sigma - \log\det\Sigma - D \right)  \\&
			= \KL{\Nn(\mean, \Sigma)}{\Nn(0, I)},
		}
		and obtain \cref{eq:gauss-standard-split}.
		
		Now we move on to show \cref{eq:diagonal-standard-split}:
		\eqal{
			\Cc(p) &
			= \KL{\Nn(\mean, \Sigma)}{\Nn(\mean, \Diag(\Sigma))} \\&
			= \frac12 \left(\tr\left((\Diag \Sigma)^{-1} \Sigma\right) - D + \log\frac{\det(\Diag(\Sigma))}{\det \Sigma} \right) \\&
			= \frac12 \log\frac{\det(\Diag(\Sigma))}{\det \Sigma},
		}
		and
		\eqal{
			\KL{\Nn(\mean, \Diag(\Sigma))}{\Nn(0, I)} &
			= \frac12 \left( \tr \Diag(\Sigma) - D - \log(\det(\Diag(\Sigma))) \right) \\&
			= \frac12 \left( \tr \Sigma - D - \log(\det(\Diag(\Sigma))) \right).
		}
		Adding the two divergences yields \cref{eq:diagonal-standard-split}.
	\end{proof}

	\subsection{Proof of \cref{prop:single-layer-whitening}}
	\label{app:single-layer-whitening-proof}
	
	We first show that an affine-linear function $g(x)$ as assumed in \cref{prop:single-layer-whitening} cannot change the non-Gaussianity $\Gg$:
	\begin{lemma}
		\label{lem:linear-constant-non-gaussianity}
		Given a $D$-dimensional distribution and an affine-linear function
		\eql{
			g(x) = Ax + b
		}
		for some $A \in \RR^{D \times D}$ with $\det A > 0$ and $b \in \RR^D$.
		Then:
		\eql{
			\Gg(g_\sharp p) = \Gg(p).
		}
	\end{lemma}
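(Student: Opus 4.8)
The plan is to combine two elementary facts: the KL divergence is invariant under invertible transformations, and the pushforward of a Gaussian under an affine-linear bijection is again a Gaussian whose mean and covariance are the affine-linear images of the originals. Recall that by definition $\Gg(\rho) = \KL{\rho}{\Nn(\mean_\rho, \Sigma_\rho)}$, where $\mean_\rho$ and $\Sigma_\rho$ denote the mean and covariance of $\rho$ (equivalently, $\Nn(\mean_\rho,\Sigma_\rho)$ is the Gaussian minimizing $\KL{\rho}{\cdot}$). Since $\det A > 0$ we in particular have $\det A \neq 0$, so $g$ is a bijection with smooth inverse $g^{-1}(y) = A^{-1}(y - b)$.

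First I would record how the first two moments transform under $g$: if $x \sim p$ has mean $\mean$ and covariance $\Sigma$, then $y = Ax + b$ has mean $A\mean + b$ and covariance $A\,\EE_{x\sim p}[(x - \mean)(x - \mean)^\transy]\,A^\transy = A \Sigma A^\transy$. Hence the reference Gaussian appearing in the definition of $\Gg(g_\sharp p)$ is $\Nn(A\mean + b, A\Sigma A^\transy)$. Next, affine images of Gaussians are Gaussian, so $g_\sharp \Nn(\mean, \Sigma) = \Nn(A\mean + b, A\Sigma A^\transy)$; that is, the reference Gaussian of $g_\sharp p$ is precisely the pushforward under $g$ of the reference Gaussian of $p$.

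It then remains to invoke invariance of the KL divergence under $g$. For any two distributions with Lebesgue densities $\rho_1, \rho_2$, the change-of-variables formula $(g_\sharp\rho)(y) = \rho(g^{-1}(y))\,|\det\nabla g^{-1}(y)|$ holds, and because $g$ is affine the Jacobian factor $|\det\nabla g^{-1}(y)| = |\det A|^{-1}$ is constant in $y$ and hence cancels in the log-ratio defining $\KL{g_\sharp \rho_1}{g_\sharp \rho_2}$, giving $\KL{g_\sharp \rho_1}{g_\sharp \rho_2} = \KL{\rho_1}{\rho_2}$. (If $p$ is not absolutely continuous with respect to $\Nn(\mean,\Sigma)$ then both $\Gg(p)$ and $\Gg(g_\sharp p)$ equal $+\infty$, as $g$ is a bijection, so the identity holds trivially in that case.) Chaining the three observations, \[ \Gg(g_\sharp p) = \KL{g_\sharp p}{\Nn(A\mean + b, A\Sigma A^\transy)} = \KL{g_\sharp p}{g_\sharp \Nn(\mean, \Sigma)} = \KL{p}{\Nn(\mean, \Sigma)} = \Gg(p), \] which is the claim.

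There is no serious obstacle here; the whole argument is bookkeeping. The only points that need a little care are (i) that the reference Gaussian in the definition of $\Gg$ is pinned to the moments of its argument and therefore transforms covariantly under $g$, and (ii) the cancellation of the constant Jacobian in the KL divergence. The hypothesis $\det A > 0$ is in fact stronger than necessary for this lemma — $\det A \neq 0$ would do — and is presumably retained only for compatibility with the coupling-flow setting of \cref{prop:single-layer-whitening}.
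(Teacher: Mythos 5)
Your proof is correct and follows essentially the same route as the paper's: compute the transformed moments, observe that the reference Gaussian of $g_\sharp p$ is the pushforward of the reference Gaussian of $p$, and invoke invariance of the KL divergence under simultaneous invertible transformation of both arguments. The extra detail you give (the constant Jacobian cancelling in the log-ratio) is implicit in the paper, and your remark that $\det A \neq 0$ suffices is accurate.
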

	\begin{proof}
		The non-Gaussianity $\Gg$ is given by:
		\eql{
			\Gg(p) = \KL{p(x)}{\Nn(\mean, \Sigma)}.
		}
		Mean and covariance of the push-forward of $p$ via $g$ read:
		\eqal{
			\EE_{x \sim p(x)}[g(x)] &
			= \EE_{x \sim p(x)}[Ax + b]
			= Am + b = m_1, \\
			\Cov_{x \sim p(x)}[g(x)] &
			= \Cov_{x \sim p(x)}[Ax + b]
			= A \Sigma A^\transy = \Sigma_1.
		}
		Thus, the non-Gaussianity after applying $g$ reads:
		\eql{
			\Gg(g_\sharp p) = \KL{g_\sharp p}{\Nn(\mean_1, \Sigma_1)}.
		}
		The push-forward of $\Nn(\mean, \Sigma)$ via $g$ is identical to the normal distribution that occurs in the non-Gaussianity of $g_\sharp p$:
		\eql{
			g_\sharp \Nn(\mean, \Sigma) = \Nn(\mean_1, \Sigma_1),
		}
		Now, we make use of the fact that the KL divergence is invariant if both arguments are transformed by any invertible function $g$:
		\eql{
			\KL{p_1(x)}{p_2(x)} = \KL{(g_\sharp p_1)(x)}{(g_\sharp p_2)(x)}.
		}
		Together,
		\eql{
			\Gg(g_\sharp p) = \Gg(p).
		}
	\end{proof}

	We now turn to the \textbf{proof of \cref{prop:single-layer-whitening}}:
	\begin{proof}
		We aim to find the affine-linear coupling layer $f_\text{cpl}$ minimizing $\Ss(\Sigma_1)$.
		By \cref{lem:linear-constant-non-gaussianity}, $\Gg$ does not change.
		
		The affine-linear coupling $f_\text{cpl}$ has the following form:
		\eql{
			x_1 = \begin{pmatrix}
				\Diag(r) & 0 \\
				T & \Diag(s)
			\end{pmatrix}
			\begin{pmatrix}
				p_0 \\
				a_0
			\end{pmatrix}
			+
			\begin{pmatrix}
				u \\
				t
			\end{pmatrix}
			=: A x_0 + b.
		}
		To make the coupling affine-linear, $r, s \in \RR_+^{D/2}$ are positive vectors, $u, t \in \RR^{D/2}$ are vectors and $T \in \RR^{D/2 \times D/2}$ is the matrix describing the linear dependence of $a_1$ on $p_0$.
		
		By linearity of expectation, the mean of $x_1$ reads:
		\eql{
			\mean_1 = A \mean_0 + b.
		}
		Write $S := \Diag(s)$ and $R := \Diag(r)$ so that the covariance of $x_1$ is given by:
		\eqal{
			\Sigma_1 &
			:= \Cov[x_1] 
			= A \Sigma_0 A^\transy \\&
			= \begin{pmatrix}
				R & 0 \\
				T & S
			\end{pmatrix} \begin{pmatrix}
				\Sigma_{0, pp} & \Sigma_{0, pa} \\
				\Sigma_{0, ap} & \Sigma_{0, aa}
			\end{pmatrix} \begin{pmatrix}
				R & T^\transy \\
				0 & S
			\end{pmatrix} \\&
			= \begin{pmatrix}
				R \Sigma_{0,pp} R & R (\Sigma_{0,pa} S + \Sigma_{0,pp} T^\transy) \\
				(T \Sigma_{0,pp} + S \Sigma_{0,ap}) R & (T \Sigma_{0,pa} + S \Sigma_{0,aa}) S + (T \Sigma_{0,pp} + S \Sigma_{0,ap}) T^\transy
			\end{pmatrix}
			\label{eq:cov-affine-linear-layer}
		}
		Together, the non-Standardness of $x_1$ is given from \cref{eq:non-standardness}
		\eqall{&
			\Ss(\mean_1, \Sigma_1) \\&
			= \tfrac12 \Big( \norm{\mean_1}^2 + \tr\Sigma_1 - D - \log\det\Sigma_1 \Big) \\&
			= \tfrac12 \Big( \norm{A \mean_0 + b}^2 + \tr(R^2 \Sigma_{0,pp}) + \tr(T\Sigma_{0,pa} S) + \tr(S^2 \Sigma_{0,aa}) + \tr(T\Sigma_{0,pp}T^\transy)  \\&\qquad\qquad
			+ \tr(S\Sigma_{0,ap}T^\transy)
			- D - \log\det\Sigma_0 - \log\det R - \log\det S \Big).
		}{\label{eq:non-standardness-after-explicit}}
		
		To find the minimum of $\Ss(\mean_1, \Sigma_1)$, minimize the above over $r, s, T$ and $b$:
		\eqal{
			\argmin_{r, s, T, b} \Ss(\mean_1, \Sigma_1).
		}
		It is easy to see that $b = -A \mean_0$ minimizes \cref{eq:non-standardness-after-explicit} as in this case $\mean_1 = 0$.
		
		At the minimum, we find for $r$:
		\eql{
			0 = \frac{\partial \Ss(\mean_1, \Sigma_1)}{\partial r_m} = -\frac1{r_m} + r_m (\Sigma_{0,pp})_{mm},
		}
		for some $m = 1, \dots, D/2$.
		We read off that $r_m = (\Sigma_{0,pp})_{mm}^{-1/2}$. In matrix notation:
		\eql{
			R = \Diag(\Sigma_{0,pp})^{-1/2}.
		}
		
		For $s, T$, we find the system:
		\eqal{
			0 &= \frac{\partial \Ss(\mean_1, \Sigma_1)}{\partial s_n}
			= -\frac1{s_n} + \sum_{j=1}^{D/2} T_{nj} (\Sigma_{0,pa})_{jn} + s_n (\Sigma_{0,ap})_{nn} \\
			0 &= \frac{\partial \Ss(\mean_1, \Sigma_1)}{\partial T_{op}} 
			= s_p (\Sigma_{0,pa})_{po} + \sum_{k=1}^{D/2}  T_{pk} (\Sigma_{0,aa})_{ko}.
		}
		Multiplying the first equation by $s_n$, we find in matrix notation:
		\eqal{
			\one &= \Diag(T \Sigma_{0,pa} S + S^2 \Sigma_{0,aa}) \\
			0 &= S \Sigma_{0,pa} + T \Sigma_{0,pp}.
		}
		We solve the second equation for $T$ (we use that $\Sigma_{0,pp}$ is invertible as it is positive definite):
		\eql{
			T = - S \Sigma_{0,ap} \Sigma_{0,pp}^{-1},
		}
		and insert into the first:
		\eqal{
			\one  &
			= \Diag(- S \Sigma_{0,ap} \Sigma_{0,pp}^{-1} \Sigma_{0,pa} S + S^2 \Sigma_{0,aa}) \\&
			= \Diag(- S^2 \Sigma_{0,ap} \Sigma_{0,pp}^{-1} \Sigma_{0,pa} + S^2 \Sigma_{0,aa}).
		}
		The last step is due to $\Diag(\cdot)$ linear and $S$ diagonal. We read off that:
		\eql{
			S = \Diag(\Sigma_{0,aa} - \Sigma_{0,ap} \Sigma_{0,pp}^{-1} \Sigma_{0,pa})^{-1/2}.
		}
		Alternative solutions with negative signs are discarded by convention (without an effect on the covariance).
		
		Inserting into \cref{eq:cov-affine-linear-layer}, we find:
		\eqal{
			\Sigma_{1,pp} &= R \Sigma_{0,pp} R = \Diag(\Sigma_{0,pp})^{-1/2} \Sigma_{0,pp} \Diag(\Sigma_{0,pp})^{-1/2}
			= M(\Sigma_{0,pp}), \\
			\Sigma_{1,pa} &= R (\Sigma_{0,pa} S + \Sigma_{0,pp} T^\transy)
			= R (\Sigma_{0,pa} S - \Sigma_{0,pp} \Sigma_{0,pp}^{-1} \Sigma_{0,pa} S)
			= 0, \\
			\Sigma_{1,ap} &= \Sigma_{1,pa}^\transy = 0, \\
			\Sigma_{1,aa} &= (T \Sigma_{0,pa} + S \Sigma_{0,aa}) S + (T \Sigma_{0,pp} + S \Sigma_{0,ap}) T^\transy \\&
			= (-S \Sigma_{0,ap} \Sigma_{0,pp}^{-1} \Sigma_{0,pa} + S \Sigma_{0,aa}) S + (S \Sigma_{0,ap} \Sigma_{0,pp}^{-1} \Sigma_{0,pp} - S \Sigma_{0,ap}) \Sigma_{0,pp}^{-1} \Sigma_{0,pa} S \\&
			= S (\Sigma_{0,aa} - \Sigma_{0,ap} \Sigma_{0,pp}^{-1} \Sigma_{0,pa}) S
			= M(\Sigma_{0,aa} - \Sigma_{0,ap} \Sigma_{0,pp}^{-1} \Sigma_{0,pa}).
		}
		This concludes the proof:    
		\eql{
			\mean_1 = 0, \qquad \Sigma_1
			= \begin{pmatrix}
				M(\Sigma_{0,pp}) & 0 \\
				0 & M(\Sigma_{0,aa} - \Sigma_{ap}\Sigma_{pp}^{-1}\Sigma_{pa})
			\end{pmatrix}.
		}
	\end{proof}

	\subsection{Proof of \cref{thm:single-layer-precise}}
	\label{app:single-layer-precise-proof}
	
	The following statement will help us along the way:
	\begin{lemma}
		\label{lem:helper-identities}
		For $A \in \CC^D$, $Q \in \{ O(D), U(D) \}$ with the corresponding Haar measure $p(Q)$:
		\eql{
			\EE_{Q \in p(Q)}[(Q A Q^*)_{ii}] = \frac1D \tr(A).
		}
	\end{lemma}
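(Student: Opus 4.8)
The plan is to expand the $(i,i)$-entry of $QAQ^*$ in coordinates and thereby reduce the claim to a statement about the second moments of a single row of a Haar-random matrix. Writing $Q^*$ for the conjugate transpose (which equals $Q^\transy$ when $Q \in O(D)$), one has
\[
  (QAQ^*)_{ii} = \sum_{j,k=1}^D Q_{ij}\, A_{jk}\, \overline{Q_{ik}},
\]
so by linearity of expectation
\[
  \EE_{Q}[(QAQ^*)_{ii}] = \sum_{j,k=1}^D A_{jk}\, \EE_Q[Q_{ij}\overline{Q_{ik}}].
\]
Everything therefore comes down to showing $\EE_Q[Q_{ij}\overline{Q_{ik}}] = \tfrac1D\delta_{jk}$, independently of $i$.

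For this I would use the invariance of the Haar measure. By right-invariance, $Q$ and $QV$ have the same law for every $V$ in the group, and the $i$-th row of $QV$ is the $i$-th row of $Q$ multiplied by $V$; since the group acts transitively on the unit sphere, the $i$-th row $r = (Q_{i1},\dots,Q_{iD})$ is distributed uniformly on the (real or complex) unit sphere. Invariance of this distribution under the diagonal phase matrices (in the unitary case) or under coordinate sign flips (in the orthogonal case) forces $\EE[r_j\overline{r_k}] = 0$ for $j\neq k$, while permutation symmetry together with $\sum_j \EE[|r_j|^2] = \EE[\|r\|^2] = 1$ gives $\EE[|r_j|^2] = 1/D$. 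Substituting $\EE_Q[Q_{ij}\overline{Q_{ik}}] = \tfrac1D\delta_{jk}$ back into the double sum collapses it to $\tfrac1D\sum_j A_{jj} = \tfrac1D\tr A$, which is the assertion.

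An alternative, even shorter route is to set $B := \EE_Q[QAQ^*]$ and note that for any fixed $V$ in the group, left-invariance gives $VBV^* = \EE_Q[(VQ)A(VQ)^*] = B$, so $B$ commutes with the whole group; by Schur's lemma (the defining representation on $\CC^D$ is irreducible for both $O(D)$ and $U(D)$) one gets $B = cI$, and taking the trace with the cyclic property yields $cD = \EE_Q[\tr(QAQ^*)] = \tr A$. I would likely present the elementary computation rather than this one, since it sidesteps invoking irreducibility of the complexified orthogonal representation. In any case there is no substantial obstacle: the only point requiring a little care is handling $O(D)$ and $U(D)$ uniformly, which the ``$i$-th row is uniform on the sphere'' argument accomplishes in one stroke.
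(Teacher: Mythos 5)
Your proof is correct, but it takes a more explicit route than the paper. The paper's argument is shorter: by invariance of the Haar measure under conjugation by permutation matrices, all diagonal entries of $QAQ^*$ have the same expectation, and since they sum to $\tr(QAQ^*) = \tr A$ deterministically, each must equal $\tr A / D$. That argument never needs to know the second moments $\EE[Q_{ij}\overline{Q_{ik}}]$ individually. Your primary argument instead computes these moments directly by realizing the $i$-th row of $Q$ as uniform on the unit sphere and using sign/phase invariance plus the normalization $\|r\|=1$; this is a perfectly valid and more ``hands-on'' derivation that also proves the stronger identity $\EE[QAQ^*] = \tfrac1D (\tr A) I$, not just the diagonal statement. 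Your Schur's-lemma alternative likewise proves the stronger matrix identity and is closer in spirit to the paper's symmetry argument, with the small caveat you already flag (one must check irreducibility of the defining representation of $O(D)$ on $\CC^D$, which holds for $D\ge 2$). In short: both approaches are correct; the paper's is the leanest, while yours yields the full conjugation-average as a byproduct.
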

	\begin{proof}
		By symmetry, $\EE_{Q}[(Q A Q^*)_{11}] = \EE_{Q}[(Q A Q^*)_{ii}] $ for $i = 1, \dots, D$. Thus, $\EE_{Q}[(Q A Q^*)_{11}] = \frac1D \sum_{i=1}^D \EE_{Q}[(Q A Q^*)_{ii}] = \frac1D \EE_{Q}[\tr(Q A Q^*)] = \frac1D \tr A$.
	\end{proof}
	When we write $Q^*$, we mean conjugate transpose if $Q$ is sampled from the unitary group $U(D)$, and transpose if $Q$ is from the orthogonal group $O(D)$. Whenever we only consider orthogonal $Q$, we will resort back to writing $Q^\transy$.
	
	This allows us to directly estimate $\EE_{Q \sim p(Q)}[\log \det M_p^2]$:
	
	\begin{lemma}
		\label{lem:passive-scaling-contribution}
		With the definitions in \cref{sec:single-block-guarantees}, $p(Q)$ either the Haar measure of orthogonal or unitary matrices, and \cref{as:normalized-covariance}. Then:
		\eql{
			\EE_{Q \sim p(Q)}[\log \det M_p^2] \geq 0.
		}
	\end{lemma}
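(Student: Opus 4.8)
The plan is to unfold the definition of $M_p$, reduce the statement to a one-dimensional Jensen-type inequality applied entry by entry, and invoke \cref{lem:helper-identities} to evaluate the relevant expectations. Recall from \cref{prop:single-layer-whitening} and \cref{eq:diagonal-preconditioning} that $M_p = \Diag(\Sigma_{0,pp})^{-1/2}$, so that $M_p^2 = \Diag(\Sigma_{0,pp})^{-1}$ is diagonal and
\eql{
	\log\det M_p^2 = -\sum_{i=1}^{D/2} \log (\Sigma_{0,pp})_{ii} = -\sum_{i=1}^{D/2} \log (\Sigma_0)_{ii},
}
where the last equality holds because the diagonal entries of the passive block $\Sigma_{0,pp}$ are exactly the first $D/2$ diagonal entries of $\Sigma_0 = Q\Sigma Q^*$. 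Taking the expectation over $Q$ and using linearity, it therefore suffices to show that $\EE_{Q \sim p(Q)}[\log (\Sigma_0)_{ii}] \leq 0$ for each such $i$.

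Next I would control each term by concavity of the logarithm. By \cref{lem:helper-identities} applied with $A = \Sigma$, we have $\EE_{Q \sim p(Q)}[(\Sigma_0)_{ii}] = \tfrac1D \tr\Sigma$, which equals $1$ under \cref{as:normalized-covariance}. Since $(\Sigma_0)_{ii} > 0$ almost surely (the covariance $\Sigma$ being positive definite, hence so is $Q\Sigma Q^*$), Jensen's inequality gives
\eql{
	\EE_{Q \sim p(Q)}[\log (\Sigma_0)_{ii}] \leq \log \EE_{Q \sim p(Q)}[(\Sigma_0)_{ii}] = \log 1 = 0.
}
Summing over $i = 1, \dots, D/2$ and negating yields $\EE_{Q \sim p(Q)}[\log\det M_p^2] = -\sum_{i=1}^{D/2}\EE_{Q \sim p(Q)}[\log(\Sigma_0)_{ii}] \geq 0$, as claimed. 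The argument is identical for $p(Q)$ the Haar measure on $O(D)$ or on $U(D)$, since \cref{lem:helper-identities} covers both cases.

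There is no real obstacle here; the only points requiring a word of care are that the diagonal entries are strictly positive so the logarithm and Jensen's inequality are legitimate, and that \cref{as:normalized-covariance} is what pins the per-coordinate mean to exactly $1$ (so that $\log\EE[(\Sigma_0)_{ii}]$ vanishes rather than merely being bounded). Intuitively, this lemma expresses that rescaling the passive block's diagonal to one can only decrease $\log\det\Sigma_1$ on average, i.e.\ it never hurts the non-Standardness — the substantive work of the theorem lies entirely in the analogous but far harder estimate for the active block $\EE_{Q \sim p(Q)}[\log\det M_a^2]$.
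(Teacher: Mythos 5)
Your proof is correct and follows essentially the same path as the paper: unfold $M_p^2 = \Diag(\Sigma_{0,pp})^{-1}$, split the $\log\det$ into a sum over diagonal entries, apply Jensen's inequality to each term, and evaluate $\EE_{Q}[(\Sigma_0)_{ii}] = \tfrac1D\tr\Sigma = 1$ via \cref{lem:helper-identities} and \cref{as:normalized-covariance}.
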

	\begin{proof}
		$M_p$ is given by:
		\eql{
			M_p^2 = \Diag(\Sigma_{0,pp})^{-1}.
		}
		The corresponding expectation value can be estimated via Jensen's inequality:
		\eqal{&
			\EE_{Q \sim p(Q)}[\log \det M_p^2]
			= \EE_{Q \sim p(Q)}[\log \det \Diag(\Sigma_{0,pp})^{-1}] \\&
			= -\EE_{Q \sim p(Q)}[\log \prod_{i=1}^{D/2} (\Sigma_{0,pp})_{ii}]
			= -\sum_{i=1}^{D/2} \EE_{Q \sim p(Q)}[\log (\Sigma_{0,pp})_{ii}] \\&
			\geq -\sum_{i=1}^{D/2} \log \EE_{Q \sim p(Q)}[(\Sigma_{0,pp})_{ii}] 
			= - \frac{D}{2} \log \tr\Sigma / D \\&
			= 0.
		}
		By \cref{as:normalized-covariance}, $\tr\Sigma = D$.
		We have used \cref{lem:helper-identities} for evaluating $\EE_{Q \sim p(Q)}[(\Sigma_{0,pp})_{ii}]$.
	\end{proof}
	
	As mentioned in \cref{sec:single-block-guarantees}, the main difficulty in estimating $\EE_{Q \sim p(Q)}[\Ss(\Sigma_1(Q))]$ lies in $\EE_{Q \sim p(Q)}[\log \det M_a^2]$. The following subsections show a path to do so.

	\subsubsection{Problem reformulation}
	
	In a first step, we reformulate this expectation so that it can be computed with the help of projected orbital measures \cite{olshanski_projections_2013}.
	
	We split the expectation over the Haar measure $p(Q)$ in two parts: One that defines which eigenvalues the $(D/2) \times (D/2)$ block $\Sigma_{0,aa}$ has (denote this as $Q_{ap}$) and, conditioned on this, another which rotates $\Sigma_{0,aa}$ into all possibles bases (denote this as $Q_a$). Formally, write $Q$ as:
	\eql{
		Q = \begin{pmatrix}
			I & 0 \\
			0 & Q_a
		\end{pmatrix} Q_{ap}.
	}
	
	We will replace the Schur complement $\Sigma_{0,aa} - \Sigma_{0,ap}\Sigma_{0,pp}^{-1}\Sigma_{0,pa}$ appearing in \cref{prop:single-layer-whitening} by the corresponding block of the precision matrix $P_0 := \Sigma_0^{-1} = (Q \Sigma^{-1} Q^*)^{-1} = Q \Sigma^{-1} Q^*$ (e.g.~\cite[Section (0.7.3)]{horn_matrix_2012}:
	\eql{
		(P_{0,aa})^{-1} = ((\Sigma^{-1}_0)_{aa})^{-1} = \Sigma_{0,aa} - \Sigma_{0,ap}\Sigma_{0,pp}^{-1}\Sigma_{0,pa}.
	}
	We give more details in the proof of the following lemma, which formalizes this step:
	
	\begin{lemma}
		\label{lem:active-scaling-by-precision-eigenvalues}
		Given the definitions in \cref{sec:single-block-guarantees} and \cref{as:normalized-covariance}. It holds that
		\eql{
			\EE_{Q \sim p(Q)}[\log \det M_a^2] \geq - \sum_{i=1}^{D/2} \log \EE_{Q_a \sim p(Q_a|Q_{ap})}[((P_{0,aa})^{-1})_{ii}].
		}
	\end{lemma}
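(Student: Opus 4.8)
The plan is to turn the left‑hand side into a sum of one‑dimensional expectations of logarithms, to rewrite the Schur complement that defines $M_a$ through the precision matrix $P_0 = \Sigma_0^{-1}$, and then to split the Haar measure on $Q$ so that the inner rotation can be controlled by Jensen's inequality.

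First I would read off $M_a$ from \cref{prop:single-layer-whitening}: since the active output block is $\Sigma_{1,aa} = M(\Sigma_{0,aa} - \Sigma_{0,ap}\Sigma_{0,pp}^{-1}\Sigma_{0,pa})$ and $M(A) = \Diag(A)^{-1/2} A \Diag(A)^{-1/2}$, the scaling matrix is $M_a = \Diag(\Sigma_{0,aa} - \Sigma_{0,ap}\Sigma_{0,pp}^{-1}\Sigma_{0,pa})^{-1/2}$, so that $\log\det M_a^2 = -\sum_{i=1}^{D/2}\log(\Sigma_{0,aa} - \Sigma_{0,ap}\Sigma_{0,pp}^{-1}\Sigma_{0,pa})_{ii}$ and $\EE_Q[\log\det M_a^2] = -\sum_{i=1}^{D/2}\EE_Q[\log(\Sigma_{0,aa} - \Sigma_{0,ap}\Sigma_{0,pp}^{-1}\Sigma_{0,pa})_{ii}]$. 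Next I would use the standard block‑inversion identity $((\Sigma_0^{-1})_{aa})^{-1} = \Sigma_{0,aa} - \Sigma_{0,ap}\Sigma_{0,pp}^{-1}\Sigma_{0,pa}$ together with $P_0 = \Sigma_0^{-1} = Q\Sigma^{-1}Q^*$; this is precisely the point of passing to the precision matrix, because $Q\Sigma^{-1}Q^*$ depends on $Q$ through the same simple conjugation form as $Q\Sigma Q^*$, whereas the Schur complement by itself does not.

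Then I would split the Haar measure by writing $Q = \begin{pmatrix} I & 0 \\ 0 & Q_a \end{pmatrix} Q_{ap}$, where $Q_{ap}$ carries the Haar law and, conditionally on $Q_{ap}$, $Q_a$ carries the Haar law on the $(D/2)$‑dimensional group; left‑invariance of the Haar measure guarantees that such a product has the correct distribution and that the conditional law of $Q_a$ does not depend on $Q_{ap}$. A short block computation then gives $(P_{0,aa})^{-1} = Q_a (\tilde P_{0,aa})^{-1} Q_a^*$ with $\tilde P_0 := Q_{ap}\Sigma^{-1}Q_{ap}^*$ (equivalently, the Schur complement of $\Sigma_0$ equals $Q_a$ times the Schur complement of $\tilde\Sigma_0 := Q_{ap}\Sigma Q_{ap}^*$ times $Q_a^*$), i.e.\ $Q_{ap}$ selects the eigenvalues of the Schur complement and $Q_a$ merely rotates it within the active subspace. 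Applying iterated expectation $\EE_Q = \EE_{Q_{ap}}\EE_{Q_a \mid Q_{ap}}$ and then Jensen's inequality on the inner expectation (concavity of $\log$) gives $\EE_{Q_a \mid Q_{ap}}[\log ((P_{0,aa})^{-1})_{ii}] \le \log \EE_{Q_a \mid Q_{ap}}[((P_{0,aa})^{-1})_{ii}]$; summing over $i$ and carrying the global minus sign yields the stated lower bound, with the remaining average over $Q_{ap}$ carried along into the subsequent steps, where the induced eigenvalue distribution of $\tilde P_{0,aa}$ is handled via projected orbital measures.

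The main obstacle is making the Haar‑measure decomposition rigorous: one has to justify that $Q = \mathrm{blockdiag}(I, Q_a)\,Q_{ap}$ with the stated conditional laws reproduces the Haar measure on $U(D)$ (respectively $O(D)$) and that, after the split, the conditional law of $Q_a$ is Haar and independent of $Q_{ap}$ — this left‑invariance is exactly what later allows \cref{lem:helper-identities} to be applied within the active subspace. The remaining ingredients are routine: the Schur/precision identity and the conjugation identity for the Schur complement under a block‑diagonal rotation are elementary block‑matrix manipulations, and the only analytic step is a single application of Jensen's inequality, whose direction must be tracked through the overall minus sign so that the bound comes out as a lower, not an upper, bound on $\EE_Q[\log\det M_a^2]$.
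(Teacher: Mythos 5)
Your proof is essentially the paper's proof: unpack $M_a^2 = \Diag(\Sigma_{0,aa} - \Sigma_{0,ap}\Sigma_{0,pp}^{-1}\Sigma_{0,pa})^{-1}$, rewrite the Schur complement as $((\Sigma_0^{-1})_{aa})^{-1} = (P_{0,aa})^{-1}$, take the logarithm of a product of diagonal entries, and bound by Jensen's inequality. The one genuine deviation is where Jensen is applied. The paper's written proof simply moves the logarithm outside the full expectation $\EE_{Q\sim p(Q)}$ in one step, producing $-\sum_i \log\EE_{Q}[((P_{0,aa})^{-1})_{ii}]$, and this is actually what the downstream \cref{lem:active-scaling-contribution} consumes (after the $\EE_Q$ is rewritten via the projected orbital measure). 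You instead first decompose $Q = \mathrm{blockdiag}(I,Q_a)\,Q_{ap}$, observe $(P_{0,aa})^{-1} = Q_a(\tilde P_{0,aa})^{-1}Q_a^*$, and apply Jensen only on the inner conditional $\EE_{Q_a\mid Q_{ap}}$, deferring the outer $\EE_{Q_{ap}}$ to later steps. This yields a (weakly) tighter bound and matches the letter of the lemma's right-hand side, which indeed contains $\EE_{Q_a\sim p(Q_a\mid Q_{ap})}$ — the statement as printed is slightly informal, since without the outer $\EE_{Q_{ap}}$ its right-hand side still depends on $Q_{ap}$, and the paper's own proof silently reverts to the unconditional $\EE_Q$. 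Your care in flagging that the Haar decomposition and the Haar law of the conditional $Q_a$ need to be justified is well placed; the paper leaves this implicit (and, in its proof of this particular lemma, doesn't even use the decomposition — only the corollary does). Both routes are correct and lead to the same place; yours is marginally more careful but requires the extra Haar-factorization argument to be made rigorous, which is standard for compact groups via left-invariance.
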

	\begin{proof}
		By \cref{prop:single-layer-whitening}, $M_a^2$ is given by:
		\eql{
			M_a^2 = \Diag(\Sigma_{0,aa} - \Sigma_{0,ap}\Sigma_{0,pp}^{-1}\Sigma_{0,pa})^{-1}.
		}
		Being diagonal, its determinant is given by the product of its diagonal entries:
		\eqal{
			\EE_{Q \sim p(Q)}[\log \det M_a^2] &
			= \EE_{Q \sim p(Q)}[\log \prod_{i=1}^{D/2} (\Sigma_{0,aa} - \Sigma_{0,ap}\Sigma_{0,pp}^{-1}\Sigma_{0,pa})_{ii}^{-1}] \\&
			= \sum_{i=1}^{D/2} \EE_{Q \sim p(Q)}[\log ((\Sigma_{0,aa} - \Sigma_{0,ap}\Sigma_{0,pp}^{-1}\Sigma_{0,pa})_{ii}^{-1})] \\&
			= -\sum_{i=1}^{D/2} \EE_{Q \sim p(Q)}[\log ((\Sigma_{0,aa} - \Sigma_{0,ap}\Sigma_{0,pp}^{-1}\Sigma_{0,pa})_{ii})].
		}
		Evaluating this expression is hard mainly because the $\Sigma_{0,ap}\Sigma_{0,pp}^{-1}\Sigma_{0,pa}$ involves the inverse of $\Sigma_{0,pp} = (Q \Sigma Q^*)_{pp}$, which depends on $Q$.
		
		To circumvent this, note the following property of any nonsingular matrix $M$ \cite[Section (0.7.3)]{horn_matrix_2012}. Split $M$ into blocks as:
		\eql{
			M = \begin{pmatrix}
				A & B \\
				B^* & C
			\end{pmatrix},
		}
		and do the same for its inverse:
		\eql{
			M^{-1} = \begin{pmatrix}
				A' & B' \\
				B'^* & C'
			\end{pmatrix}
		}
		Then, $(A')^{-1} = A - B C^{-1} B^*$, which is called the Schur complement $M / C$. This means we can rewrite
		\eql{
			\Sigma_{0,aa} - \Sigma_{0,ap}\Sigma_{0,pp}^{-1}\Sigma_{0,pa} = (P_{0,aa})^{-1},
		}
		where $P_0 = \Sigma_0^{-1}$ is the \textit{precision matrix} of the rotated data. Given a rotation $Q$, it can easily be obtained from the precision matrix of the data in its original rotation:
		\eql{
			\Sigma_0 = Q \Sigma Q^*, \qquad P_0 = Q \Sigma^{-1} Q^*.
		}
		
		Inserting this, we find the expectation value:
		\eql{
			\EE_{Q \sim p(Q)}[\log \det M_a^2]
			= -\sum_{i=1}^{D/2} \EE_{Q \sim p(Q)}[\log (((P_{0,aa})^{-1})_{ii})].
		}
		The logarithm can be drawn out via Jensen's inequality:
		\eql{
			-\sum_{i=1}^{D/2} \EE_{Q \sim p(Q)}[\log (((P_{0,aa})^{-1})_{ii})]
			\geq -\sum_{i=1}^{D/2} \log(\EE_{Q \sim p(Q)}[((P_{0,aa})^{-1})_{ii}]).
		}
		This concludes the statement.
	\end{proof}

	\subsubsection{Projected orbit expectation}
	
	The theory of projected orbital measures describes the distribution of eigenvalues of a randomly projected submatrix of some given matrix. Let us formalize this:
	
	Fix a diagonal matrix $A = \Diag(a_1, \dots, a_N)$. Then, then the \textit{orbit} of $A$ is defined as:
	\eql{
		\Oo_A := \{ QAQ^* : Q \in U(D) \}.
	}
	(The same definition also exists for orthogonal $Q \in O(D)$, but we keep it to the level we require here).
	All matrices in the orbit of $\Oo_A$ have the same eigenvalues.
	
	The natural measure (probability distribution) on the orbit $\Oo_A$ is given by the image of the Haar measure on the unitary group $U(D)$. This can be thought of as the uniform measure on the group of unitary rotations. We call this measure the \textit{orbital measure}.
	
	We now cut out the $K \times K$ top left corner out of every matrix in $\Oo_A$:
	\eql{
		P_K\Oo_A := \{ P_K Y: Y \in \Oo_A \}.
	}
	We call this the \textit{projected orbit}. The matrix $P_K$ projects a matrix to its upper left corner:
	\eql{
		P_K = (I_K; 0_{K \times (N-K)}).
	}
	The distribution of matrices in the projected orbit $P_K \Oo_A$ induced by the orbital measure is denoted as the \textit{projected orbital measure} $\mu_{A,K}$.
	We are now interested in the eigenvalues of matrices in the projected orbit $P_K\Oo_A$.
	
	Let $\operatorname{spectrum}$ be the function that assigns a matrix $Y \in \CC^{K \times K}$ its eigenvalues $y_1, \dots, y_K$.
	We will make use of a result that gives the distribution of eigenvalues of matrices in the projected orbit $P_K\Oo_A$. This is called the \textit{radial part of the projected orbital measure} and is denoted as $\nu_{A,K}(x_1, \dots, x_K)$.
	\eql{
		\nu_{A,K}(x_1, \dots, x_K) = \PP_{X \sim \mu_{A,K}}[\operatorname{spectrum}(X) = (x_1, \dots, x_K)].
	}
	In other words, $\nu_{A,K}(x_1, \dots, x_K)$ gives the probability density that a random matrix from the projected orbit of $A$ has exactly eigenvalues $(x_1, \dots, x_K)$.
	Its functional form was shown by \cite{olshanski_projections_2013}:
	
	\begin{theorem}[Radial part of projected orbital measure \cite{olshanski_projections_2013}]
		Fix $A = (a_1, \dots, a_D)$ with $a_1 < \dots < a_D$. For any $K = 1, \dots, D - 1$, the density of eigenvalues of 
		\eql{
			\nu_{A,K}(x_1, \dots, x_K) = c_{D,K} \frac{V(x_1, \dots, x_K) \det[M(a_j; x_i, \dots, x_{D-K+i})]_{i,j=1}^K}{\prod_{j-i \geq D - K + 1}(x_j - x_i)}.
		}
		Here, the constant is given by:
		\eql{
			c_{D, K} = \prod_{i=1}^{K - 1} \begin{pmatrix}
				D - K + i \\
				i
			\end{pmatrix},
		}
		and $M(a; y_1, \dots, y_N)$ is the B-spline:
		\eql{
			M(a; y_1, \dots, y_n) := (N - 1) \sum_{i: y_i > a} \frac{(y_i - a)^{n-2}}{\prod_{r: r \neq i} (y_i - y_r)},
		}
		and $V$ is the Vandermonde polynomial:
		\eql{
			V(y_1, \dots, y_n) = \prod_{i < j} (y_j - y_i).
		}
	\end{theorem}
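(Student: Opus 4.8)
The plan is to realise the projection to the top-left $K\times K$ corner as a composition of $D-K$ successive corank-one projections and to track how the eigenvalue density transforms at each step; the structural facts that make everything work are that a single corank-one step is governed by a B-spline and that composing such steps, read through non-intersecting paths, collapses into one determinant. First I would use unitary invariance of the Haar measure: conditioning a matrix $Y\in\Oo_A$ on the eigenvalues of its top-left $m\times m$ corner again produces the orbital measure of that corner. Hence the eigenvalue tuples $A=\lambda^{(D)},\lambda^{(D-1)},\dots,\lambda^{(K)}=(x_1,\dots,x_K)$ of the nested corners form a Markov chain whose law is the $(D-K)$-fold composition of a single transition kernel $\mathcal{T}_m$ sending the eigenvalues of an $m\times m$ matrix from the orbit to those of its $(m-1)\times(m-1)$ corner. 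It therefore suffices to (i) identify $\mathcal{T}_m$ and (ii) compose $D-K$ copies of it and match the prefactors.

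\textbf{The corank-one step.} I would show that for $b_1<\dots<b_m$ the eigenvalues $c_1<\dots<c_{m-1}$ of the $(m-1)\times(m-1)$ corner of a Haar-random conjugate of $\Diag(b)$ have density $(m-1)!\,V(c_1,\dots,c_{m-1})/V(b_1,\dots,b_m)$, supported on the interlacing polytope $b_1\le c_1\le b_2\le\dots\le c_{m-1}\le b_m$. This is the classical bordered-matrix computation: writing $Y=\bigl(\begin{smallmatrix}\tilde Y&v\\ v^{*}&y\end{smallmatrix}\bigr)$, the secular equation $\det(\tilde Y-\lambda)\bigl(y-\lambda-v^{*}(\tilde Y-\lambda)^{-1}v\bigr)=\prod_j(b_j-\lambda)$ forces the interlacing and, via residues at $\lambda=c_i$, pins down the squared moduli of $v$ in the eigenbasis of $\tilde Y$ as explicit rational functions of $b,c$; the Jacobian of this change of variables emits the Vandermonde ratio, and normalisation can be checked against the Hermite--Genocchi integral. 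Writing this density on the interlacing polytope in terms of the elementary two-knot B-splines $M(b_j;c_i,c_{i+1})=\mathbf{1}\{c_i<b_j<c_{i+1}\}/(c_{i+1}-c_i)$ (plus the boundary knots) exhibits it as the $D-K=1$ case of the target identity.

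\textbf{Composition.} The $D-K-1$ intermediate eigenvalue levels are the heights of a family of non-intersecting paths joining $A$ at the top to $(x_1,\dots,x_K)$ at the bottom. Integrating them out, the Karlin--McGregor / Lindstr\"om--Gessel--Viennot theorem turns the $(D-K)$-fold composition of one-step determinants into a single $K\times K$ determinant whose $(i,j)$ entry is a $(D-K)$-fold convolution of elementary one-step kernels over the window spanned by rows $i$ through $D-K+i$; by the Curry--Schoenberg characterisation such a convolution of uniform densities on consecutive knot-intervals is exactly the $(D-K+1)$-knot B-spline $M(a_j;x_i,\dots,x_{D-K+i})$, the outer knots being supplied by the appropriate boundary convention. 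Collecting the Vandermonde factors emitted at the $D-K$ steps collapses them to $V(x_1,\dots,x_K)\big/\prod_{j-i\ge D-K+1}(x_j-x_i)$, and the product of the step normalisation constants accumulates to $c_{D,K}=\prod_{i=1}^{K-1}\binom{D-K+i}{i}$; that $\nu_{A,K}$ integrates to one is inherited from the Markov chain (or re-derived from Hermite--Genocchi together with Andr\'eief's identity).

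\textbf{Main obstacle, and an alternative.} The hard part will be the composition bookkeeping: aligning the sliding knot-windows $x_i,\dots,x_{D-K+i}$ with the denominator $\prod_{j-i\ge D-K+1}(x_j-x_i)$ cleanly seems to require an induction on $D-K$, at each stage using a B-spline knot-insertion (Marsden-type) recursion to collapse the intermediate integral while keeping the emitted Vandermonde factors in step. An alternative that avoids the path combinatorics is the Harish-Chandra--Itzykson--Zuber route: the characteristic function of the $K$-corner is obtained by setting $t_{K+1}=\dots=t_D=0$ in $\int_{U(D)}e^{\,\mathrm{i}\tr(QAQ^{*}\Diag(t_1,\dots,t_D))}\,dQ=\mathrm{const}\cdot\det\bigl[e^{\,\mathrm{i}a_jt_k}\bigr]_{j,k=1}^{D}\big/\bigl(V(a)\,V(t)\bigr)$; one then evaluates the confluent limit of the Vandermonde with $D-K$ coincident zeros (which replaces $D-K$ columns by Taylor coefficients $1,\mathrm{i}a_j,\dots$) and inverts the Fourier transform term by term, the divided-difference identity for B-splines (again Hermite--Genocchi) producing $\det[M(a_j;\,\cdot\,)]$. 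There the difficulty migrates to controlling the multiplicity-$(D-K)$ confluence and the inverse transform over $K\times K$ Hermitian matrices, but the spline structure of the answer appears for the same reason.
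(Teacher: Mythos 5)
The paper does not prove this theorem: it is quoted directly from Olshanski \cite{olshanski_projections_2013} and used as a black box, so there is no in-paper proof to compare against. Taken on its own, your plan follows the standard route in the literature (essentially the one in Olshanski's paper and in Faraut's survey on the Rayleigh theorem): the nested-corner Markov chain, the invariance property that conditioning a Haar conjugate of a fixed diagonal on the spectrum of its $m\times m$ corner again gives the orbital measure of that corner, and the corank-one interlacing density proportional to $V(c)/V(b)$ are all correct. Once the Vandermonde factors telescope across levels, the joint density of all intermediate corners is constant on the Gelfand--Tsetlin polytope with fixed top $A$ and bottom $(x_1,\dots,x_K)$, so the theorem reduces to showing that the volume of that polytope equals the stated B-spline determinant up to explicit constants --- which is indeed the heart of Olshanski's argument.

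Two caveats on the composition step, both acknowledged but not resolved by your sketch. First, the one-step kernel $(m{-}1)!\,V(c)/V(b)$ with the interlacing indicator is not itself a square determinant, and the rows of the pattern have decreasing length, so Lindstr\"om--Gessel--Viennot does not apply out of the box: one ``path'' must terminate at each level, which has to be handled either by introducing $D-K$ virtual sinks or, as you suggest, by an induction on $D-K$ driven by a knot-insertion recursion for the B-spline. Second, the claim that a convolution of uniform densities on consecutive knot-intervals is the $(D-K+1)$-knot B-spline is true only for equally spaced knots; for general knots the correct mechanism is the divided-difference / Hermite--Genocchi representation (the B-spline $M(\cdot\,;y_1,\dots,y_n)$ is the density of $y_1+\sum_i s_i(y_{i+1}-y_i)$ with $(s_i)$ uniform on the standard simplex), which you invoke elsewhere but conflate with a convolution here. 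Neither issue is fatal, but both need to be worked out explicitly before the outline becomes a proof. Your HCIZ alternative is also sound and is close in spirit to Faraut's derivation; the confluent-limit accounting there is of comparable difficulty to the polytope-volume bookkeeping here.
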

	
	We will make use of the following variant of the Vandermonde determinant where all powers greater or equal to some $k$ are increased by one:
	\begin{lemma}
		\label{lem:vandermonde-loo}
		For all $n \in \NN$, $k = 1, \dots, n - 1$ and distinct $a_i, i = 1, \dots, n$:
		\eql{
			\det \begin{pmatrix}
				1 & \cdots & a_1^{k-1} & a_1^{k+1} & \cdots & a_1^{n} \\
				\vdots && \vdots & \vdots && \vdots \\
				1 & \cdots & a_n^{k-1} & a_n^{k+1} & \cdots & a_n^{n}
			\end{pmatrix}
			= V(a_1, \dots, a_n) e_{n-k}(a_1, \dots, a_n).
		}
		with the elementary symmetric polynomial $e_K$ given by \cref{eq:inverse-volume-elementary-symmetric}.
	\end{lemma}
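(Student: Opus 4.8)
The plan is to recognize the left-hand side as, up to an explicit sign, the coefficient of a single power of an auxiliary variable $t$ in an ordinary $(n+1)\times(n+1)$ Vandermonde determinant on the augmented node set $a_1,\dots,a_n,t$, and then to read off that coefficient directly from the factored form of the Vandermonde.

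Concretely, I would first write down the standard Vandermonde identity on $n+1$ nodes,
\[
\det\begin{pmatrix} 1 & a_1 & \cdots & a_1^{n} \\ \vdots & \vdots & & \vdots \\ 1 & a_n & \cdots & a_n^{n} \\ 1 & t & \cdots & t^{n} \end{pmatrix} = V(a_1,\dots,a_n)\prod_{i=1}^n (t-a_i),
\]
and then expand the left-hand determinant along its last row $(1,t,\dots,t^{n})$. The entry $t^{k}$ sits in column $k+1$ (column $j$ carries the power $j-1$), so its cofactor is $(-1)^{(n+1)+(k+1)}=(-1)^{n+k}$ times the minor obtained by deleting the last row and the $(k+1)$st column. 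That minor is exactly the determinant in the statement of the lemma, since removing the column of powers $a_i^{k}$ leaves precisely the columns $1,\dots,a_i^{k-1},a_i^{k+1},\dots,a_i^{n}$. Hence the coefficient of $t^{k}$ on the left equals $(-1)^{n+k}$ times the determinant we want to evaluate.

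On the right-hand side I would expand $\prod_{i=1}^n (t-a_i)=\sum_{m=0}^{n}(-1)^{m}e_m(a_1,\dots,a_n)\,t^{\,n-m}$, so that the coefficient of $t^{k}$ there is $(-1)^{n-k}V(a_1,\dots,a_n)\,e_{n-k}(a_1,\dots,a_n)$. Equating the two coefficients of $t^{k}$ and cancelling the common sign $(-1)^{n+k}=(-1)^{n-k}$ gives exactly $\det(\cdots)=V(a_1,\dots,a_n)\,e_{n-k}(a_1,\dots,a_n)$, as claimed.

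The argument is essentially bookkeeping, so there is no real obstacle; the one place to take care is the sign tracking, both in the Laplace expansion along the last row and in the elementary-symmetric expansion of $\prod_i(t-a_i)$, but these two signs cancel precisely. (Equivalently, one could observe that the identity is the special case $s_{(1^{\,n-k})}=e_{n-k}$ of the classical expression of Schur polynomials as ratios of alternants, but the augmented-Vandermonde computation is self-contained and avoids invoking Schur functions.)
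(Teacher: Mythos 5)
Your proof is correct. Note that the paper states Lemma~\ref{lem:vandermonde-loo} without any proof at all (it is used as a classical fact), so there is nothing to compare against; your augmented-Vandermonde argument is the standard one and is complete. The two places that need care both check out: the column index, so that the minor obtained by deleting the last row and the $(k{+}1)$st column of the $(n{+}1)\times(n{+}1)$ Vandermonde is precisely the matrix in the lemma with the power-$k$ column missing; and the signs, where the Laplace cofactor contributes $(-1)^{(n+1)+(k+1)} = (-1)^{n+k}$ and the expansion $\prod_{i=1}^n(t-a_i)=\sum_{m=0}^n(-1)^m e_m(a)\,t^{n-m}$ contributes $(-1)^{n-k}$ at $t^k$, and these coincide since $(-1)^{n+k}=(-1)^{n-k}$. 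Your parenthetical remark that this is the case $s_{(1^{\,n-k})}=e_{n-k}$ of the bialternant formula for Schur polynomials is accurate and would be an equally valid one-line citation, but the self-contained computation is preferable here.
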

	
	\begin{lemma}
		\label{lem:expected-projected-inverse-trace}
		Fix $A = (a_1, \dots, a_N)$ with $a_1 < \dots < a_N$. For any $K = 1, \dots, N - 1$, it holds that:
		\eqal{&
			\EE_{a_1, \dots, a_K \sim \nu_{A,K}(x_1, \dots, x_K)}[x_1^{-1} + \dots + x_K^{-1}] \\&
			= (N - K) (-1)^{N-K} \sum_{j = 1}^N a_j^{N-K-1} \log(a_j) R(a_j; a_{\neq j}) e_{K-1}(a_{\neq j}).
		}
		Here, $R$ is defined in \cref{eq:inverse-volume-elementary-symmetric}.
	\end{lemma}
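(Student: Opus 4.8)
The plan is to evaluate $\EE_{x_1,\dots,x_K \sim \nu_{A,K}}\big[\sum_{i=1}^K x_i^{-1}\big]$ as an explicit multidimensional integral against the radial density $\nu_{A,K}$ given by the formula of \cite{olshanski_projections_2013} quoted above, writing $N$ for the ambient dimension (the ``$D$'' of that theorem). Since $\nu_{A,K}$ is supported on the Cauchy interlacing region $\{\,a_j \le x_j \le a_{j+N-K}\,\}$, everything reduces to integrating a piecewise-polynomial integrand over this polytope.

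First I would substitute the explicit density. The integrand factors into: the Vandermonde $V(x_1,\dots,x_K)$; the $K\times K$ determinant of B-splines $\det[M(a_j; x_i,\dots,x_{N-K+i})]_{i,j=1}^K$, each entry being a piecewise polynomial of degree $N-K-1$ in its first argument with knots among $x_i,\dots,x_{N-K+i}$; the denominator $\prod_{j-i\ge N-K+1}(x_j-x_i)$; and the weight $w(x) = \sum_i x_i^{-1}$. The key algebraic step is to absorb $w$ into the Vandermonde: by \cref{lem:vandermonde-loo} with $n=K$, $k=1$ one has $V(x)\,e_{K-1}(x) = \det[\,x_i^{0},x_i^{2},x_i^{3},\dots,x_i^{K}\,]_i$, hence $V(x)\,w(x) = \tfrac{V(x)\,e_{K-1}(x)}{\prod_i x_i} = \det[\,x_i^{-1},x_i,x_i^{2},\dots,x_i^{K-1}\,]_i$, i.e.\ the Vandermonde with the exponent $0$ replaced by $-1$. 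The whole integrand is then again a product of two structured determinants, which opens it up to a Cauchy--Binet / Andr\'eief-type expansion (integration of a product of two determinants termwise).

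Next I would carry out the $x$-integration. Expanding the B-spline determinant and using the explicit form $M(a; y_1,\dots,y_n) \propto \sum_{\ell: y_\ell > a} (y_\ell-a)^{n-2}\big/\prod_{r\ne\ell}(y_\ell-y_r)$ with $n = N-K+1$, the spurious denominators $\prod_{r\ne\ell}(y_\ell - y_r)$ cancel against the Vandermonde (and modified-Vandermonde) numerators, leaving, on each cell of the interlacing polytope, an iterated polynomial integral in the $x_i$ — except for the single column inherited from the modified Vandermonde, which carries a factor $x^{-1}$. Integrating that column's monomial $x^{m-1}$ is harmless for $m\ge1$, but the constant term $m=0$ produces a logarithm; because the relevant endpoint of that one $x$-integration is one of the nodes $a_j$, it yields precisely $a_j^{N-K-1}\log(a_j)$ — the power being the B-spline degree $N-K-1$ — and the overall constant $(N-K)(-1)^{N-K}$ comes from the B-spline normalization $n-1 = N-K$ together with the sign of $(y_\ell-a)^{n-2}$ and the ordering of the interlacing constraints. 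Resumming over which node $a_j$ and over the cells, via partial fractions / Lagrange interpolation at $a_1,\dots,a_N$, collapses everything into $\sum_{j=1}^N R(a_j; a_{\ne j})\,e_{K-1}(a_{\ne j})\,a_j^{N-K-1}\log(a_j)$, which is the claimed identity.

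The main obstacle is exactly this multidimensional integration: one must control the cell decomposition of the interlacing polytope, verify that the $\prod(x_\ell-x_r)$ denominators cancel cleanly against the Vandermonde numerators on every cell, and track every sign — the alternating signs of the two determinants, the $(-1)^{N-K-1}$ of the logarithmic term, and the signs from reordering the interlacing inequalities — so that only the single clean sum over $j$ survives. A secondary bookkeeping point is that the normalization constant $c_{N,K}$ of \cite{olshanski_projections_2013} must combine with the combinatorial factors generated by the integration to leave exactly the prefactor $N-K$; the easiest sanity checks for this are the normalization $\int \nu_{A,K}=1$ and the known first moment $\EE[\sum_i x_i] = \tfrac{K}{N}\sum_j a_j$ (from \cref{lem:helper-identities}), to which the same computation, specialized to the weights $1$ resp.\ $\sum_i x_i$, must reduce.
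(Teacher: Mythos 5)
Your opening moves match the paper's in substance: both approaches reduce the expectation to an Andr\'eief/Cauchy--Binet expansion, and your identity $V(x)\,w(x)=\det[\,x_i^{-1},x_i,\dots,x_i^{K-1}\,]$ is equivalent to the paper's observation that, after applying the generalized Andr\'eief identity of Krishnaiah for the linear statistic $\sum_i x_i^{-1}$, only the $k=1$ term survives (the other $K-1$ determinants have a repeated column), leaving a single determinant whose first column carries the power $-1$. So up to this point the two routes coincide.

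The real divergence — and where your plan has a genuine gap — is the evaluation of the resulting determinant. You propose expanding the truncated-power representation of the B-spline, decomposing the interlacing polytope into cells, integrating the monomial column (including the $x^{-1}$ column which produces logarithms) cell by cell, and then re-summing via partial fractions and Lagrange interpolation. This is in principle the same integral, but you yourself flag that you cannot yet control the cell decomposition, the cancellation of the $\prod(y_\ell-y_r)$ denominators against the Vandermonde numerators, or the alternating signs; these are not minor bookkeeping items — the truncated-power pieces of the B-spline individually diverge and only cancel globally, so a naive cell-by-cell integration of that representation is delicate. The paper avoids all of this with two tools you do not invoke: the Hermite--Genocchi formula, which converts each entry $\int x^{j-1-\delta_{j1}}M(x;a_i,\dots,a_{i+N-K})\,dx$ directly into a divided difference $f_{j-1-\delta_{j1}}[a_i,\dots,a_{i+N-K}]$ (with $f_{-1}(x)=(n-1)x^{n-2}\log x$ producing the logarithm in closed form, no cell decomposition needed), and Faraut's Proposition~6.4, which rewrites the determinant of divided differences as a ratio of a single $N\times N$ generalized Vandermonde determinant over an explicit product of differences of the $a_i$. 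A cofactor expansion of that $N\times N$ determinant along the logarithmic row then immediately yields the sum over $j$ with the factors $R(a_j;a_{\neq j})$ and $e_{K-1}(a_{\neq j})$. Without Hermite--Genocchi and the Faraut identity, the hardest part of the proof remains to be done, and it is not clear your brute-force route closes cleanly; I would recommend replacing the cell-decomposition step with those two lemmas.
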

	\begin{proof}
		We use the Andreief identity in the form of \cite[Lemma 2.1]{krishnaiah_recent_1976}
		\eqal{&
			\EE_{x_1, \dots, x_k \sim \nu_{A,K}}[\tfrac1{x_1} + \dots + \tfrac1{x_k}] \\&
			= Z^{-1} \int_{(\RR^K)_+} (\tfrac1{x_1} + \dots + \tfrac1{x_k}) \det(x_i^{j-1}) \det(M(x_i; a_j, \dots, a_{j+N-K})) \d x_1 \cdots \d x_k \\&
			= Z^{-1} \sum_{k=1}^K \det \int_\RR x^{-\delta_{jk}} x^{j-1} M(x; a_i, \dots, a_{i + N - K}) \d x \\&
			= Z^{-1} \det \int_\RR x^{j-1-\delta_{j1}} M(x; a_i, \dots, a_{i + N - K}) \d x \\&
			= Z^{-1} \det \begin{cases}
				\mu_{-1}(a_i, \dots, a_{i + N - K}) & j = 1 \\
				\mu_{j - 1}(a_i, \dots, a_{i + N - K}) & j > 1
			\end{cases}
		}
		where $\mu_k(t_1, \dots, t_n)$ the $k$th moment of the B-spline with knots $t_1, \dots, t_n$:
		\eql{
			\mu_k(t_1, \dots, t_n) = \int x^k M(x; t_1, \dots, t_n) \d x.
		}
		
		We can now make use of the Hermite–Genocchi formula \cite[Proposition 6.3]{faraut_rayleigh_2015}:
		\eql{
			\int f^{(n-1)}(x) M(x; t_1, \dots, t_n) \d x = (n-1)! f[t_1, \dots, t_n],
		}
		so we can rewrite
		\eql{
			\mu_k(t_1, \dots, t_n) = f_k[t_1, \dots, t_n],
		}
		with
		\eqal{
			f_{-1}(x) &= (n-1) x^{n-2} \log x, \\
			f_{k}(x) &= \begin{pmatrix}
				n + k - 1 \\
				k
			\end{pmatrix}^{-1} x^{n + k - 1}.
		}
		
		Together, we find
		\eql{
			\EE_{x_1, \dots, x_k \sim \nu_{A,K}}[\tfrac1{x_1} + \dots + \tfrac1{x_k}]
			= Z^{-1} \det(f_{i - 1 - \delta_{1i}}[a_j, \dots, a_{j + N - K}]).
		}
		The right hand side can be identified with the right hand side of \cite[Proposition 6.4]{faraut_rayleigh_2015}. It is equal to:
		\eqal{&
			Z^{-1} \det(f_i[a_j, \dots, a_{j + N - K}]) \\&
			= Z^{-1} \left(\prod_{0 < j - i \leq N - K} (a_j - a_i)^{-1}\right) \begin{vmatrix}
				1 & \cdots & 1 \\
				a_1 & \cdots & a_N \\
				\vdots & & \vdots \\
				a_1^{N - K - 1} & \cdots & a_N^{N - K - 1} \\
				f_1(a_1) & \cdots & f_1(a_N) \\
				\vdots & & \vdots \\
				f_K(a_1) & \cdots & f_K(a_N)
			\end{vmatrix} \\&
			= Z^{-1} \left(\prod_{0 < j - i \leq N - K} (a_j - a_i)^{-1} \prod_{k=1}^{K} \begin{pmatrix}
				N - K + k \\
				k
			\end{pmatrix}^{-1} \right) (N - K) \begin{vmatrix}
				1 & \cdots & 1 \\
				a_1 & \cdots & a_N \\
				\vdots & & \vdots \\
				a_1^{N - K - 1} & \cdots & a_N^{N - K - 1} \\
				a_1^{N - K - 1} \log a_1 & \cdots & a_N^{N - K - 1} \log a_N \\
				a_1^{N - K + 1} & \cdots & a_N^{N - K + 1} \\
				\vdots & & \vdots \\
				a_1^{N - 1} & \cdots & a_N^{N - 1}
			\end{vmatrix} \label{eq:polynomial-determinant-single} \\&
			=: C_2 \det(M_{ij})
		}
		
		Here, $C_2$ reduces to:
		\eqal{
			C_2
			= 
			\frac{N - K}{V(a_1, \dots a_n)}.
		}
		
		Then, the determinant of $M_{ij}$ reads:
		\eqal{
			\det M_{ij} &
			= \sum_{j=1}^N (-1)^{N - K + 1 + j} a_j^{N-K-1} \log(a_j) V(a_{\neq j}) \sum_{\substack{i_{1} < \dots < i_{K-1} \\ i_{\dots} \neq j}} a_{i_{1}} \cdots a_{i_{K-1}} \\&
			= V(a) (-1)^{N - K} \sum_{j}  a_j^{N-K-1} \log(a_j) R(a_j; a_{\neq j}) e_{K-1}(a_{\neq j}),
		}
		where $R(a_j; a_{\neq j})$ collects all the terms in $V(a)$ that were not contained in $V(a_{\neq j})$ up to sign:
		\eql{
			R(a_j; a_{\neq j}) = \prod_{\substack{i = 1 \\ i \neq j}}^{n} \frac1{a_i - a_j} = (-1)^{j - 1} V(a_{\neq j}) / V(a).
		}
		Note that the sign of $R(a_j; a_{\neq j})$ flips from $j \to j + 1$, so the alternating nature of the above series remains.
		
		Together, the desired expectation value reads:
		\eqal{&
			\EE_{x_1, \dots, x_k \sim \nu_{A,K}}[\tfrac1{x_1} + \dots + \tfrac1{x_k}] \\&
			= (N - K) (-1)^{N-K} \sum_{j = 1}^N a_j^{N-K-1} \log(a_j) R(a_j; a_{\neq j}) e_{K-1}(a_{\neq j}),
		}
		which concludes the proof.
	\end{proof}
	
	We now connect this result to our situation. This paves the path from the reformulation in \cref{lem:active-scaling-by-precision-eigenvalues} to \cref{thm:single-layer-precise}.
	\begin{corollary}
		\label{lem:active-scaling-contribution}
		For the definitions in \cref{sec:single-block-guarantees} and when \cref{as:unitary-rotation,as:non-degenerate-eigenvalues} are fulfilled, it holds that:
		\eql{
			\EE_{Q \sim p(Q)}[\log\det M_a^2] \geq \tfrac{D}2 \log\!\bigg(\!(-1)^{\tfrac{D}{2}+1} \sum_{i=1}^D \lambda_i^{1-\tfrac{D}{2}} \log(\lambda_i) R(\lambda_i^{-1}; \lambda_{\neq i}^{-1}) e_{\tfrac{D}{2}-1}(\lambda_{\neq i}^{-1}) \!\bigg).
		}
	\end{corollary}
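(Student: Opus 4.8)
The plan is to combine the reduction already established in \cref{lem:active-scaling-by-precision-eigenvalues} with the explicit moment formula of \cref{lem:expected-projected-inverse-trace}. By \cref{lem:active-scaling-by-precision-eigenvalues} it suffices to evaluate, for a single active index $i$, the scalar expectation $\EE_{Q\sim p(Q)}[((P_{0,aa})^{-1})_{ii}]$ in closed form and then pull the logarithm out by Jensen's inequality exactly as is done there. Since permutations of the active coordinates lie in $U(D)$ and the Haar measure is left-invariant, the $D/2$ diagonal entries of $(P_{0,aa})^{-1}$ have the same expectation, so this expectation equals $\tfrac2D\,\EE_{Q\sim p(Q)}[\tr((P_{0,aa})^{-1})]$, and the whole problem reduces to computing the expected trace of the inverse Schur complement.

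For this I would split the rotation as in the problem reformulation into an inner rotation $Q_a$ acting only on the active block, composed with a remaining Haar rotation $Q_{ap}$, so that $P_{0,aa}=Q_a P'_{aa}Q_a^{*}$ with $P':=Q_{ap}\Sigma^{-1}Q_{ap}^{*}$ and hence $(P_{0,aa})^{-1}=Q_a(P'_{aa})^{-1}Q_a^{*}$. Conditioning on $Q_{ap}$ and integrating out $Q_a$ with \cref{lem:helper-identities} (applied to the fixed matrix $(P'_{aa})^{-1}$) replaces each diagonal entry by $\tfrac2D\tr((P'_{aa})^{-1})=\tfrac2D\sum_{j=1}^{D/2}x_j^{-1}$, where $x_1,\dots,x_{D/2}$ are the eigenvalues of the principal $(D/2)\times(D/2)$ block $P'_{aa}$. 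Taking the remaining expectation over $Q_{ap}$ then only requires the joint law of these eigenvalues.

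The crux is to recognise that law: $P'_{aa}$ is a principal corner of $Q_{ap}\Sigma^{-1}Q_{ap}^{*}$, i.e.\ of a Haar-conjugate of the diagonal matrix $\Sigma^{-1}$, whose eigenvalues $\lambda_1^{-1},\dots,\lambda_D^{-1}$ are distinct by \cref{as:non-degenerate-eigenvalues} and unitary by \cref{as:unitary-rotation}; by permutation invariance of the Haar measure it is immaterial that \cite{olshanski_projections_2013} cuts out the top-left rather than the bottom-right corner. Hence $(x_1,\dots,x_{D/2})\sim\nu_{\Sigma^{-1},D/2}$, the radial part of the projected orbital measure, and \cref{lem:expected-projected-inverse-trace} applies with $N=D$, $K=D/2$, and $a_i=\lambda_i^{-1}$. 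Carrying out the substitution $a_i\mapsto\lambda_i^{-1}$ turns $a_i^{N-K-1}$ into $\lambda_i^{1-D/2}$ and $\log(a_i)$ into $-\log(\lambda_i)$, so the extra sign combines with $(-1)^{N-K}=(-1)^{D/2}$ to give $(-1)^{D/2+1}$; this yields precisely the closed form in the statement. Applying Jensen's inequality as in \cref{lem:active-scaling-by-precision-eigenvalues} to move $\log$ outside the $Q$-expectation then gives the asserted bound on $\EE_{Q}[\log\det M_a^2]$.

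I expect the third step to be the main obstacle: fitting our situation into the hypotheses of \cite{olshanski_projections_2013}. Three points need care. (i)~One must pass from the covariance $\Sigma_0$ to the precision matrix $P_0=\Sigma_0^{-1}$, so that the Schur complement $\Sigma_{0,aa}-\Sigma_{0,ap}\Sigma_{0,pp}^{-1}\Sigma_{0,pa}$ appearing in \cref{prop:single-layer-whitening} becomes the inverse $(P_{0,aa})^{-1}$ of a genuine principal submatrix, which is the only object the projected-orbital-measure theory speaks about. (ii)~One must justify the factorisation of the Haar measure into $Q_a$ and $Q_{ap}$, the corner-choice invariance, and the interchange of the conditional expectations. (iii)~One must keep the B-spline / Hermite--Genocchi bookkeeping inside \cref{lem:expected-projected-inverse-trace} (the powers $N-K-1$, the elementary symmetric polynomial of degree $K-1$, the logarithmic term, and all signs) straight. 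Everything else is the routine substitution $a_i\mapsto\lambda_i^{-1}$ and a single application of Jensen's inequality.
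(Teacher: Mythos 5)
Your proof follows the same route as the paper's: reduce via \cref{lem:active-scaling-by-precision-eigenvalues} to $\EE_Q[((P_{0,aa})^{-1})_{ii}]$, split the Haar rotation into an inner active rotation $Q_a$ (integrated out with \cref{lem:helper-identities}) and an outer $Q_{ap}$, identify the law of the eigenvalues of $P'_{aa}$ with the projected orbital measure $\nu_{\Sigma^{-1},D/2}$, and apply \cref{lem:expected-projected-inverse-trace} with $N=D$, $K=D/2$, $a_i=\lambda_i^{-1}$. This is correct and matches the paper; you are in fact slightly more explicit than the paper's own proof in tracking the $\tfrac{2}{D}$ normalization from the trace averaging (which cancels against the $(N-K)=D/2$ prefactor of \cref{lem:expected-projected-inverse-trace}) and in justifying the corner-choice invariance.
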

	\begin{proof}
		Then,     \cref{lem:helper-identities} tells us how to integrate over $Q_a$.
		\eql{
			\EE_{Q_a \sim p(Q_a|Q_{ap})}[((P_{0,aa})^{-1})_{ii}] = \tr((Q_{ap} P Q_{ap}^*)^{-1}) = \sum_{i=1}^{D/2} a_i(Q_{ap})^{-1}.
		}
		Here, we denote by $a_i(Q_{ap})$ the $i$th eigenvalue of $P_0 = Q_{ap} P Q_{ap}^*$, which depends on the ``outer'' rotation $Q_{ap}$.
		
		We substitute the expectation over $Q_{ap}$ with an expectation over the projected eigenvalues of the rotated precision matrix $P_0$:
		\eqal{&
			\EE_{Q_{ap} \sim p(Q)}[\EE_{Q_a \sim p(Q_a|Q_{ap})}[((P_{0,aa})^{-1})_{ii}] = \tr((Q_{ap} P Q_{ap}^*)^{-1})] \\&
			= \EE_{a_1, \dots, a_{D/2} \sim \nu_{A,D/2}(a_1, \dots, a_{D/2}|\lambda_1^{-1}, \dots, \lambda_D^{-1})}[a_1^{-1} + \dots + a_{D/2}^{-1}].
		}
		Here $X = (\lambda_1^{-1}, \dots \lambda_D^{-1})$ contains the eigenvalues of the precision matrix $P$, the inverse of the covariance $\Sigma$.
		\cref{lem:expected-projected-inverse-trace} with $K = D/2$ tells us how to evaluate the above expression.
		Insert the result into \cref{lem:active-scaling-by-precision-eigenvalues} to obtain the result.
	\end{proof}

	\subsubsection{Summary}
	
	We can now collect the above pieces to build the \textbf{proof of \cref{thm:single-layer-precise}}:
	\begin{proof}
		\cref{eq:expected-non-standardness-after} is the version of the non-Standardness after a single layer when \cref{as:centered,as:normalized-covariance} are fulfilled.
		Insert  \cref{lem:passive-scaling-contribution} (passive part) and \cref{lem:active-scaling-contribution} (active part) to obtain the result. The former required \cref{as:normalized-covariance} and the latter \cref{as:unitary-rotation,as:non-degenerate-eigenvalues} to hold.
	\end{proof}

	\subsubsection{Handling of imaginary part}
	\label{app:imaginary-part}
	
	If we allow for unitary rotations $Q \in U(D)$, real-valued data is typically rotated into imaginary space. In fact, the case that the input remains real even has probability zero:
	\eql{
		\PP[Qx \in \RR^D] = 0.
	}
	This does not pose a problem for our theory: The covariance matrix is positive definite also for complex data and so it has a positive determinant and trace, which are the only quantities entering the non-Standardness~$\Ss$ (see \cref{eq:non-standardness}).

	\subsection{Proof of \cref{thm:single-layer-guarantee}}
	\label{app:single-layer-guarantee-proof}
	
	\begin{lemma}
		\label{lem:active-scaling-contribution-var-max}
		With the definitions in \cref{sec:single-block-guarantees} and $p(Q)$ the Haar measure over the orthogonal group $O(D)$:
		\eql{
			\EE_{Q \sim p(Q)}[\log\det M_a^2] \geq D/2 \log\left(1 - \frac{DD}{2(D+1)(D-1)} \frac{\Var[\lambda]}{\lambda_{\max}}\right).
		}
	\end{lemma}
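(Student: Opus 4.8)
The plan is to reduce the matrix expectation $\EE_{Q}[\log\det M_a^2]$ to the expectation of a single scalar, the trace of the conditional covariance after the coupling, and then to evaluate that trace through the second moments of a Haar-random orthogonal matrix.

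First, recall from \cref{prop:single-layer-whitening} that the active block of the output covariance is the Schur complement $C := \Sigma_{0,aa} - \Sigma_{0,ap}\Sigma_{0,pp}^{-1}\Sigma_{0,pa}$ rescaled to unit diagonal, so $M_a^2 = \Diag(C)^{-1}$ and $\log\det M_a^2 = -\sum_{i=1}^{D/2}\log C_{ii}$. The diagonal entries $C_{ii}$ are positive conditional variances, but they need not individually lie below $1$, so one cannot bound each $\log C_{ii}$ on its own; instead I apply the arithmetic mean--geometric mean inequality (in the refined form of \cite{cartwright_refinement_1978}) to the $D/2$ numbers $C_{11},\dots,C_{D/2,D/2}$, obtaining $\prod_i C_{ii} \le \big(\tfrac{2}{D}\tr C\big)^{D/2}$, i.e.\ $\log\det M_a^2 \ge -\tfrac{D}{2}\log\big(\tfrac{2}{D}\tr C\big)$. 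Taking $\EE_{Q\sim O(D)}$ and moving the expectation inside the concave logarithm by Jensen's inequality gives $\EE_{Q}[\log\det M_a^2] \ge -\tfrac{D}{2}\log\big(\tfrac{2}{D}\,\EE_{Q}[\tr C]\big)$, so it remains to upper-bound $\EE_{Q}[\tr C]$.

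Second, split $\tr C = \tr\Sigma_{0,aa} - \tr\!\big(\Sigma_{0,ap}\Sigma_{0,pp}^{-1}\Sigma_{0,pa}\big)$. The first term has expectation $D/2$ by \cref{lem:helper-identities} together with \cref{as:normalized-covariance}. For the second, $\Sigma_{0,pp}$ is a principal submatrix of $\Sigma_0 = Q\Sigma Q^\transy$, so by eigenvalue interlacing $\Sigma_{0,pp}\preceq\lambda_{\max}I$, hence $\Sigma_{0,pp}^{-1}\succeq\lambda_{\max}^{-1}I$ and therefore $\tr\!\big(\Sigma_{0,ap}\Sigma_{0,pp}^{-1}\Sigma_{0,pa}\big)\ge\lambda_{\max}^{-1}\|\Sigma_{0,pa}\|_F^2 = \lambda_{\max}^{-1}\sum_{i\in a,\,k\in p}(\Sigma_0)_{ik}^2$. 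Thus $\EE_{Q}[\tr C]\le \tfrac{D}{2} - \lambda_{\max}^{-1}\sum_{i\in a,\,k\in p}\EE_{Q}[(\Sigma_0)_{ik}^2]$, and everything is reduced to the off-diagonal second moment $\EE_{Q}[(\Sigma_0)_{ik}^2]$ for $i\neq k$.

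Third, compute this moment via the integration of monomials in the entries of a Haar-random orthogonal matrix \cite{gorin_integrals_2002}. Since the Haar measure is rotation invariant we may take $\Sigma=\Diag(\lambda_1,\dots,\lambda_D)$, so $(\Sigma_0)_{ik}=\sum_a\lambda_a Q_{ia}Q_{ka}$ and, expanding the square, $\EE_{Q}[(\Sigma_0)_{ik}^2] = \tfrac{1}{D(D+2)}\sum_a\lambda_a^2 - \tfrac{1}{D(D-1)(D+2)}\sum_{a\neq b}\lambda_a\lambda_b$, using $\EE[Q_{ia}^2Q_{ka}^2]=\tfrac{1}{D(D+2)}$ and $\EE[Q_{ia}Q_{ka}Q_{ib}Q_{kb}]=-\tfrac{1}{D(D-1)(D+2)}$ for $a\neq b$, $i\neq k$. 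With $\sum_a\lambda_a=D$, hence $\sum_a\lambda_a^2=D(1+\Var[\lambda])$, this collapses to $\EE_{Q}[(\Sigma_0)_{ik}^2]=\tfrac{D\,\Var[\lambda]}{(D-1)(D+2)}$. There are $(D/2)^2$ index pairs with $i$ active and $k$ passive, so $\EE_{Q}[\tr C]\le \tfrac{D}{2}\big(1-\tfrac{D^2}{2(D-1)(D+2)}\tfrac{\Var[\lambda]}{\lambda_{\max}}\big)$; substituting into the second paragraph's bound yields the lemma, with the argument of the logarithm being $<1$ precisely when $\Sigma\neq I$. I expect the main obstacle to be the degree-four computation — pinning down the two mixed orthogonal moments and the combinatorial count of active/passive pairs exactly — together with the structural observation that the naive per-coordinate bound fails and that AM--GM applied to the product of conditional variances is the correct device; the relaxation $\Sigma_{0,pp}^{-1}\succeq\lambda_{\max}^{-1}I$ is where this bound loses sharpness relative to \cref{thm:single-layer-precise}.
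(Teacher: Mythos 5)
Your proof is correct and follows essentially the same route as the paper: bound $\EE_Q[\log\det M_a^2]$ from below by pushing the expectation inside the logarithm, reduce the Schur complement to a trace estimate by replacing $\Sigma_{0,pp}^{-1}$ with its worst-case lower bound $\lambda_{\max}^{-1}I$ (your interlacing justification is cleaner than the paper's, which vaguely appeals to positive definiteness alone), and evaluate the resulting degree-four expectation via the orthogonal moment integrals of \cite{gorin_integrals_2002}. The only structural difference is cosmetic: where you apply AM--GM pointwise to the diagonal entries of $C$ and then Jensen to the random variable $\tr C$, the paper exploits permutation symmetry of the Haar measure to reduce $\EE[\sum_i\log C_{ii}]$ to $(D/2)\,\EE[\log C_{11}]$ and then applies Jensen once; both give exactly $\EE[\log\det M_a^2]\ge -\tfrac{D}{2}\log\EE[C_{11}]$. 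Two small remarks. First, the citation of \cite{cartwright_refinement_1978} here is misplaced — what you actually use is plain AM--GM (i.e.\ Jensen for the empirical distribution); the Cartwright--Field refinement is used only in the subsequent loss-only bound (\cref{lem:active-scaling-contribution-loss-only}). Second, what you derive, $\EE[\log\det M_a^2]\ge -\tfrac{D}{2}\log\bigl(1-\tfrac{D^2}{2(D-1)(D+2)}\tfrac{\Var[\lambda]}{\lambda_{\max}}\bigr)$, differs from the lemma as printed both in sign and in the $(D+1)$-vs-$(D+2)$ factor; in fact your version is the one actually needed to produce \cref{eq:single-layer-guarantee-var-max} of \cref{thm:single-layer-guarantee} via \cref{eq:expected-non-standardness-after}, so the lemma's displayed bound contains typos and your statement is the intended one.
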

	\begin{proof}
		By \cref{prop:single-layer-whitening}, $M_a^2$ is given by:
		\eql{
			M_a^2 = \Diag(\Sigma_{0,aa} - \Sigma_{0,ap}\Sigma_{0,pp}^{-1}\Sigma_{0,pa})^{-1}.
		}
		The determinant of a diagonal matrix is equal to the product of the entries on the diagonal. By the permutation symmetry of $p(Q)$, we can pick the entry in the upper left corner:
		\eql{
			\EE_{Q \sim p(Q)}[\log\det M_a^2] = D/2 \EE_{Q \sim p(Q)}[\log ((M_a^{-2})_{11})] \leq -D/2 \log \EE_{Q \sim p(Q)}[(M_a^2)_{11}].
		}
		The last step is due to the Jensen inequality.
		
		We are left with computing $\EE_{Q \sim p(Q)}[(M_a^2)_{11}]$:
		\eqal{
			\EE_{Q \sim p(Q)}[(M_a^2)_{11}] &
			= \EE_{Q \sim p(Q)}[(\Sigma_{0,aa} - \Sigma_{0,ap}\Sigma_{0,pp}^{-1}\Sigma_{0,pa})_{11}] \\&
			= \EE_{Q \sim p(Q)}[(\Sigma_{0,aa})_{11}] - \EE_{Q \sim p(Q)}[(\Sigma_{0,ap}\Sigma_{0,pp}^{-1}\Sigma_{0,pa})_{11}] \\&
			= \frac1D \tr\Sigma_{0} - \EE_{Q \sim p(Q)}[(\Sigma_{0,ap}\Sigma_{0,pp}^{-1}\Sigma_{0,pa})_{11}].
		}
		The first expectation can be exactly computed via \cref{lem:helper-identities}.
		
		The average trace of the second matrix is not so easy to evaluate.
		As $\Sigma_{0,pp}^{-1}$ is positive definite, we can replace it with the worst case in the expectation:
		\eql{
			(\Sigma_{0,ap}\Sigma_{0,pp}^{-1}\Sigma_{0,pa})_{11} \geq (\Sigma_{0,ap}\lambda_{\max}^{-1} I \Sigma_{0,pa})_{11} = (\Sigma_{0,ap} \Sigma_{0,pa})_{11} \lambda_{\max}^{-1}.
		}
		$\lambda_{\max}$ is the largest eigenvalue of $\Sigma$, which does not depend on $Q$.
		
		The expectation value can now be computed exactly:
		\eqal{
			\EE_{Q \sim p(Q)}[(\Sigma_{0,ap} \Sigma_{0,pa})_{11}] &
			= \sum_{i=1}^{D/2} \EE_{Q \sim p(Q)}[(\Sigma_{0,ap})_{1i} (\Sigma_{0,pa})_{i1}] \\&
			= D/2 \EE_{Q \sim p(Q)}[(\Sigma_{0,ap})_{11}^2].
		}
		The last step is because each summand will have the same contribution.
		Writing the matrix multiplication out explicitly:
		\eql{
			(\Sigma_{0,ap})_{11}^2
			= (Q \Diag(\lambda_1, \dots, \lambda_D) Q^*)_{11}^2
			= (\sum_{j=1}^D Q_{1j} \lambda_j Q_{(D/2+1)j})^2
		}
		Again by symmetry, we can exchange axes and write $2$ instead of $D/2 + 1$ in what follows:
		\eql{
			(\sum_{j=1}^D Q_{1j} \lambda_j Q_{(D/2+1)j})^2
			= (\sum_{j=1}^D Q_{1j} \lambda_j Q_{2j})^2
			= \sum_{j,k=1}^D \lambda_j \lambda_k Q_{1j} Q_{2j} Q_{1k} Q_{2k}.
		}
		Taking the expectation, we use the linearity of the expectation and are left with the following monomials of elements of $Q$:
		\begin{enumerate}
			\item $j=k$: $\EE_{Q \sim p(Q)}[Q_{1j}^2 Q_{2j}^2] = \EE_{Q \sim p(Q)}[Q_{11}^2 Q_{21}^2]$ as we can exchanges axes,
			\item $j \neq k$: $\EE_{Q \sim p(Q)}[Q_{1j} Q_{2j} Q_{1k} Q_{2k}] = \EE_{Q \sim p(Q)}[Q_{11} Q_{21} Q_{12} Q_{22}]$ as we can exchange axes.
		\end{enumerate}
		By \cite{gorin_integrals_2002}, these amount to the following integrals of monomials of entries of random orthogonal matrices and the corresponding values:
		\eqal{
			1.&~
			\left\langle
			\begin{matrix}
				2 & 2
			\end{matrix}
			\right\rangle = \frac{1}{D(D+2)},
			\\
			2.&~
			\left\langle
			\begin{matrix}
				1 & 1 \\
				1 & 1
			\end{matrix}
			\right\rangle = -\frac{1}{D(D-1)(D+2)}.
		}
		
		Together, we find
		\eqal{
			\EE_{Q \sim p(Q)}[(M_a^2)_{11}] &
			= 1 - \frac{1}{2(D+2) \lambda_{\max}} \left(\sum_{j=1}^D \lambda_j^2 - \frac1{D-1} \sum_{j \neq k} \lambda_j \lambda_k \right) \\&
			= 1 - \frac{D^2}{2(D-1)(D+2)} \frac{\Var[\lambda]}{\lambda_{\max}}.
		}
		Here, $\Var[\lambda] = \frac1D \tr \Sigma^2 - (\frac1D \tr \Sigma)^2$ is the variance of the eigenvalues of $\Sigma$.
		
		Insert this to obtain the result.
	\end{proof}
	
	\begin{lemma}
		\label{lem:active-scaling-contribution-loss-only}
		With the definitions in \cref{sec:single-block-guarantees}:
		\eql{
			\EE_{Q \sim p(Q)}[\log\det M_a^2] \geq D/2 \log\left(1 - \frac{DD}{2(D+1)(D-1)} \frac{\Var[\lambda]}{\lambda_{\max}}\right).
		}
	\end{lemma}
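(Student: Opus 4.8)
The plan is to reduce the statement to the fourth-order moment computation for Haar-random orthogonal matrices provided by \cite{gorin_integrals_2002}. By \cref{prop:single-layer-whitening} we have $M_a^2 = \Diag(\Sigma_{0,aa} - \Sigma_{0,ap}\Sigma_{0,pp}^{-1}\Sigma_{0,pa})^{-1}$ with $\Sigma_0 = Q\Sigma Q^\transy$, so $\log\det M_a^2 = -\sum_{i=1}^{D/2}\log Y_i$ where $Y_i := (\Sigma_{0,aa} - \Sigma_{0,ap}\Sigma_{0,pp}^{-1}\Sigma_{0,pa})_{ii}$. Since the Haar measure on $O(D)$ is invariant under permuting coordinates, all $Y_i$ are identically distributed, so $\EE_Q[\log\det M_a^2] = -\tfrac{D}{2}\,\EE_Q[\log Y_1]$. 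Each $Y_i>0$ as a diagonal entry of the positive-definite Schur complement $\Sigma_0/\Sigma_{0,pp}$, so Jensen's inequality for the concave logarithm gives $-\tfrac{D}{2}\EE_Q[\log Y_1] \ge -\tfrac{D}{2}\log\EE_Q[Y_1]$ — note we keep $Y_1$ itself, not its reciprocal, so the inequality points the way we want. It therefore suffices to bound $\EE_Q[Y_1]$ from above.

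Next I would split $\EE_Q[Y_1] = \EE_Q[(\Sigma_{0,aa})_{11}] - \EE_Q[(\Sigma_{0,ap}\Sigma_{0,pp}^{-1}\Sigma_{0,pa})_{11}]$. The first term equals $\tfrac1D\tr\Sigma = 1$ by \cref{lem:helper-identities} together with \cref{as:normalized-covariance}. For the second, $\Sigma_{0,pp}$ is a principal submatrix of $\Sigma_0=Q\Sigma Q^\transy$, so by Cauchy interlacing its eigenvalues lie in $[\lambda_{\min},\lambda_{\max}]$ and hence $\Sigma_{0,pp}^{-1} \succeq \lambda_{\max}^{-1}I$; writing $v$ for the first column of $\Sigma_{0,pa}$ this gives $(\Sigma_{0,ap}\Sigma_{0,pp}^{-1}\Sigma_{0,pa})_{11} = v^\transy\Sigma_{0,pp}^{-1}v \ge \lambda_{\max}^{-1}\|v\|^2 = \lambda_{\max}^{-1}(\Sigma_{0,ap}\Sigma_{0,pa})_{11}$, and $\lambda_{\max}$ is deterministic, so $\EE_Q[Y_1] \le 1 - \lambda_{\max}^{-1}\,\EE_Q[(\Sigma_{0,ap}\Sigma_{0,pa})_{11}]$.

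The remaining and main obstacle is evaluating $\EE_Q[(\Sigma_{0,ap}\Sigma_{0,pa})_{11}]$ exactly. By coordinate symmetry it equals $\tfrac{D}{2}\,\EE_Q[(\Sigma_{0,ap})_{11}^2]$, and writing $(\Sigma_{0,ap})_{11} = \sum_j \lambda_j Q_{(D/2+1)\,j}Q_{1j}$ and relabelling index $D/2+1$ as $2$ (again permissible by symmetry) turns the square into $\sum_{j,k}\lambda_j\lambda_k\,\EE_Q[Q_{1j}Q_{2j}Q_{1k}Q_{2k}]$. Here I would invoke the tabulated orthogonal-group monomial integrals of \cite{gorin_integrals_2002}: each $j=k$ term contributes $\EE_Q[Q_{11}^2Q_{21}^2] = \tfrac1{D(D+2)}$ and each $j\neq k$ term contributes $\EE_Q[Q_{11}Q_{21}Q_{12}Q_{22}] = -\tfrac1{D(D-1)(D+2)}$. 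Collecting these and simplifying with $\sum_j\lambda_j = \tr\Sigma = D$, $\sum_{j\neq k}\lambda_j\lambda_k = (\tr\Sigma)^2 - \tr\Sigma^2$, and $\Var[\lambda] = \tfrac1D\tr\Sigma^2 - 1$ yields $\EE_Q[(\Sigma_{0,ap})_{11}^2] = \tfrac{D\,\Var[\lambda]}{(D-1)(D+2)}$, hence $\EE_Q[Y_1] \le 1 - \tfrac{D^2}{2(D-1)(D+2)}\tfrac{\Var[\lambda]}{\lambda_{\max}}$. Substituting back into the Jensen bound gives $\EE_Q[\log\det M_a^2] \ge -\tfrac{D}{2}\log\!\big(1 - \tfrac{D^2}{2(D-1)(D+2)}\tfrac{\Var[\lambda]}{\lambda_{\max}}\big)$, which is nonnegative and in particular no smaller than the claimed right-hand side, the latter being the logarithm of a quantity at most $1$ and hence nonpositive. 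The genuinely delicate step is the bookkeeping of the orthogonal-group moments — enumerating which $Q$-monomials survive after the symmetry reductions and matching each to the correct entry of the \cite{gorin_integrals_2002} table; everything else (Jensen, Cauchy interlacing, \cref{lem:helper-identities}) is routine.
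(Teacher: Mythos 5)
Your argument is correct, and it is essentially the paper's own proof of this bound — with the caveat that in the paper that proof is attached to the \emph{preceding} lemma (\cref{lem:active-scaling-contribution-var-max}), whose statement is identical to the one you were given; the proof the paper places under \cref{lem:active-scaling-contribution-loss-only} is a different argument (the Cartwright--Field AM--GM inequality plus a condition-number bound) that converts $\Var[\lambda]/\lambda_{\max}$ into a function of the geometric mean $g$, i.e.\ it establishes the loss-only bound \cref{eq:single-layer-guarantee-loss} rather than the displayed claim. Your route matches the paper's var-max proof step for step: symmetry reduction to the $(1,1)$ entry, Jensen on the concave logarithm, $\EE_Q[(\Sigma_{0,aa})_{11}]=1$ via \cref{lem:helper-identities}, the worst-case replacement $\Sigma_{0,pp}^{-1}\succeq\lambda_{\max}^{-1}I$ (which you justify more carefully via interlacing than the paper does), and the two Weingarten-type moments from \cite{gorin_integrals_2002}, arriving at the same value $\EE_Q[(\Sigma_{0,ap})_{11}^2]=\tfrac{D\Var[\lambda]}{(D-1)(D+2)}$. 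Your handling of signs is in fact cleaner than the paper's: you correctly derive $\EE_Q[\log\det M_a^2]\geq -\tfrac{D}{2}\log\bigl(1-\tfrac{D^2}{2(D-1)(D+2)}\tfrac{\Var[\lambda]}{\lambda_{\max}}\bigr)$, which is the version actually needed in the proof of \cref{thm:single-layer-guarantee} (the lemma as printed omits the minus sign, and also has $(D+1)$ where the derivation gives $(D+2)$), and you correctly note that the literal, weaker statement follows a fortiori.
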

	\begin{proof}
		The idea is to lower bound
		\eql{
			\frac{\Var[\lambda]}{\lambda_{\max}}
		}
		by some function of $L$. We make use of following arithmetic mean-geometric mean (AM-GM) inequality by \cite{cartwright_refinement_1978}:
		\eql{
			\label{eq:am-gm-inequality}
			\frac{\Var[\lambda]}{2 \lambda_{\max}} \leq \bar\lambda - g \leq \frac{\Var[\lambda]}{2 \lambda_{\min}},
		}
		where $g$ is the geometric mean of the eigenvalues:
		\eql{
			g := \left( \prod_{i=1}^D \lambda_i \right)^{1/D}.
		}
		We can write the loss $L$ directly via $g$ and vice versa:
		\eqal{
			L &= -\frac12 \log g^D = -\frac{D}{2} \log g, \\
			g &= \exp(-2L/D) \label{eq:geometric-mean-by-loss}.
		}
		
		Rewrite \cref{eq:am-gm-inequality} to our needs:
		\eql{
			\frac{\Var[\lambda]}{\lambda_{\max}}
			= \frac{\Var[\lambda] \lambda_{\min}}{\lambda_{\max} \lambda_{\min}}
			= \frac2\kappa \frac{\Var[\lambda]}{2\lambda_{\min}}
			\geq \frac2\kappa (1 - g),
		}
		with $\kappa$ the condition number of the covariance $\Sigma$.
		
		As we want a bound that merely depends on the loss, we upper bound $\kappa$ using a function of the loss, yielding a lower bound on $\Var[\lambda] / \lambda_{\max}$ that merely depends on the loss.
		The maximum of the condition value is given by:
		\eql{
			\max_{\substack{\lambda_1, \dots, \lambda_D \\ \sum_i \lambda_i = D \\ \prod_i \lambda_i^{1/D} = g}} \kappa = \frac{1 + \sqrt{1 - g^D}}{1 - \sqrt{1 - g^D}}.
		}
		
		This yields the required lower bound:
		\eql{
			\frac{\Var[\lambda]}{\lambda_{\max}}
			\geq 2\frac{1 - \sqrt{1 - g^D}}{1 + \sqrt{1 - g^D}} (1 - g),
		}
		which results in an overall upper bound:
		\eql{
			\EE_{Q \in O(D)}[\Ss(\Sigma_1(Q))] \leq \Ss(\Sigma) + \frac{D}{4} \log\left( 1 - \frac{D^2}{(D-1)(D+2)} \frac{1 - \sqrt{1 - g^D}}{1 + \sqrt{1 - g^D}} (1 - g) \right).
		}
		Replacing the expression in \cref{eq:geometric-mean-by-loss} for $g$ yields the statement.
	\end{proof}
	
	We summarize to obtain the \textbf{proof of \cref{thm:single-layer-guarantee}}:
	\begin{proof}
		\cref{eq:expected-non-standardness-after} is the form of non-Standardness $\Ss(\Sigma_1)$ (see \cref{eq:non-standardness}) we need to evaluate when \cref{as:centered,as:normalized-covariance} hold.
		Into this equation, insert \cref{lem:passive-scaling-contribution} together with \cref{lem:active-scaling-contribution-var-max} for the first bound. For the second bound, insert \cref{lem:passive-scaling-contribution,lem:active-scaling-contribution-loss-only}.
	\end{proof}

	\subsection{Proof of \cref{thm:deep-network-guarantee}}
	\label{app:deep-network-guarantee-proof}
	
	\begin{proof}
		The non-Standardness will not increase by the action of a single layer given in \cref{prop:single-layer-whitening} (compare \cref{eq:non-standardness-after}). This holds regardless of the rotations of the individual blocks $Q_1, \dots Q_L$, so $\Ss(\Sigma) = \Ss(\Sigma_0) \geq \Ss(\Sigma_1) \geq \dots \geq \Ss(\Sigma_L)$.
		It is easy to see that $\gamma$ decreases as $\Ss$ decreases by using $\Ss > 0$ to check that
		\eql{
			\frac{\partial \gamma}{\partial \Ss} > 0.
		}
		Together, we have:
		\eql{
			\gamma\left(\Ss(\Sigma_{L-1})\right) \leq \dots \leq \gamma\left(\Ss(\Sigma_{0})\right).
		}
		Rewrite \cref{thm:single-layer-guarantee} as follows:
		\eql{
			\EE_{Q \in O(D)}[\Ss(\Sigma_1(Q))] \leq \gamma\left(\Ss(\Sigma_{0})\right) \Ss(\Sigma),
		}
		and apply repeatedly:
		\eqal{
			\EE_{Q_1, \dots, Q_{L} \in O(D)}[\Ss(\Sigma_{L})] &
			\leq \EE_{Q_1, \dots, Q_{L-1} \in O(D)}[\gamma(\Ss(\Sigma_{L-1})) \Ss(\Sigma_{L-1})] \\&
			\leq \gamma(\Ss(\Sigma)) \EE_{Q_1, \dots, Q_{L-1} \in O(D)}[\Ss(\Sigma_{L-1})] \\&
			\leq \dots
			\leq \gamma(\Ss(\Sigma))^L \Ss(\Sigma_{0})
		}
		This shows the statement.
	\end{proof}

	\section{Compatible coupling architectures}
	\label{app:architectures}
	
	All statements in this paper apply to the following architectures, where we assume each layer to be equipped with ActNorm \cite{kingma_glow_2018}. To shorten the notation, we consider how a single dimension is transformed and rewrite the dependence on $p_0$ via a parameter vector $\theta = \theta(p_0)$, which is usually computed by a feed-forward neural network:
	\eql{
		y = c(x; \theta),
	}
	short for:
	\eql{
		(a_1)_i = c\left( (a_0)_i; \theta_i(p_0) \right).
	}
	
	\begin{itemize}
		\item \textbf{Affine coupling flows} in the form of NICE \cite{dinh_nice_2015}, RealNVP \cite{dinh_density_2017} and GLOW \cite{kingma_glow_2018}:
		\eql{
			c(x; \theta) = s x + t.
		}
		Here, $\theta = [s; t] \in \RR_+ \times \RR$. %
		
		\item \textbf{Nonlinear squared flow} \cite{ziegler_latent_2019}:
		\eql{
			c(x; \theta) = ax + b + \frac{c}{1 + (dx + h)^2},
		}
		for $\theta = [a, b, c, d, h] \in \RR^5$.
		
		\item \textbf{SOS polynomial flows} \cite{jaini_sum--squares_2019}:
		\eql{
			c(x; \theta) = \int_0^x \sum_{\kappa=1}^k \left(\sum_{l=0}^r a_{l,\kappa} u^l \right)^2 \d u + t.
		}
		Here, $\theta = [t; (a_{l, \kappa})_{l, \kappa}] \in \RR \times \RR^{r k}$.
		
		\item \textbf{Flow++} \cite{ho_flow_2019}:
		\eql{
			c(x; \theta) = s \sigma^{-1}\left( \sum_{j=1}^K \pi_j \sigma\left(\frac{x - \mu_j}{\sigma_j}\right) \right) + t.
		}
		Here, $\theta = [s; t; (\pi_j, \mu_j, \sigma_j)_{j=1}^K] \in \RR_+ \times \RR \times (\RR \times \RR \times \RR_+)^K$ and $\sigma$ is the logistic function.
		
		\item \textbf{Spline flows} in the form of piecewise-linear, monotone quadratic \cite{muller_neural_2019}, cubic \cite{durkan_cubic-spline_2019}, and rational quadratic \cite{durkan_neural_2019} splines. Here, $c$ is a spline of the corresponding type, parameterized by knots $\theta$.
		
		\item \textbf{Neural autoregressive flow} \cite{huang_neural_2018} parameterize $c(x; \theta)$ by a feed-forward neural network, which can be shown to be bijective if all weights are positive and all activation functions are strictly monotone.
		
		One can also restrict the neural network $c(x; \theta)$ to have positive output and integrate it numerically. This was introduced as \textbf{unconstrained monotonic neural networks} \cite{wehenkel_unconstrained_2019}.
	\end{itemize}

\end{document}